\documentclass[11pt]{article}

\usepackage{fullpage}

\usepackage[T1]{fontenc}
\usepackage[utf8]{inputenc}
\usepackage{lmodern}
\usepackage{microtype}

\usepackage{amsmath,mathtools}
\usepackage{amsfonts,amssymb}
\usepackage{amsthm,thmtools}
\usepackage{bm}
\usepackage{bbm}
\usepackage{accents}

\usepackage{graphicx}
\usepackage{xcolor}
\usepackage{booktabs}
\usepackage{array}
\usepackage{tabularx}
\usepackage{threeparttable}
\usepackage{caption}
\usepackage{subcaption}
\usepackage{makecell}
\usepackage{pifont}

\usepackage{enumitem}

\usepackage{algorithm}
\usepackage{algpseudocode}

\usepackage[round]{natbib}
\usepackage[colorlinks=true,citecolor=blue,linkcolor=red,urlcolor=magenta]{hyperref}
\usepackage{cleveref}

\crefformat{equation}{(#2#1#3)}
\Crefformat{equation}{(#2#1#3)}
\crefrangeformat{equation}{(#3#1#4)--(#5#2#6)}
\Crefrangeformat{equation}{(#3#1#4)--(#5#2#6)}
\crefmultiformat{equation}{(#2#1#3)}{ and (#2#1#3)}{, (#2#1#3)}{, and (#2#1#3)}
\Crefmultiformat{equation}{(#2#1#3)}{ and (#2#1#3)}{, (#2#1#3)}{, and (#2#1#3)}

\theoremstyle{definition}
\newtheorem{theorem}{Theorem}
\newtheorem{lemma}[theorem]{Lemma}

\newtheorem{assumption}[theorem]{Assumption}
\newtheorem{remark}[theorem]{Remark}

\newcommand{\para}[1]{\paragraph{#1.}}

\newcommand{\R}{\mathbb{R}}

\DeclarePairedDelimiterX{\cbrp}[1]{\lbrace}{\rbrace}{#1}
\DeclarePairedDelimiterX{\brp}[1]{(}{)}{#1}
\DeclarePairedDelimiterX{\sqbrp}[1]{[}{]}{#1}
\DeclarePairedDelimiterX{\normp}[1]{\lVert}{\rVert}{#1}
\DeclarePairedDelimiterX{\absp}[1]{\lvert}{\rvert}{#1}
\DeclarePairedDelimiterX{\ipp}[1]{\langle}{\rangle}{#1}

\newcommand{\cbr}[1]{\ensuremath{\mathchoice{\cbrp*{#1}}{\cbrp{#1}}{\cbrp{#1}}{\cbrp{#1}}}}

\newcommand{\norm}[1]{\ensuremath{\mathchoice{\normp*{#1}}{\normp{#1}}{\normp{#1}}{\normp{#1}}}}

\newcommand{\ip}[1]{\ensuremath{\mathchoice{\ipp*{#1}}{\ipp{#1}}{\ipp{#1}}{\ipp{#1}}}}

\DeclareMathOperator*{\argmax}{arg\,max}
\DeclareMathOperator*{\argmin}{arg\,min}

\newcommand{\E}{\mathbb{E}}

\newcommand{\ind}[1]{\ensuremath{\mathbf{1}\,\cbr{#1}}}
\DeclareMathOperator{\Bern}{Bern}

\DeclareMathOperator{\Unif}{Unif}

\newcommand{\KL}{D_{\mathrm{KL}}}

\renewcommand{\v}[1]{\boldsymbol{#1}}
\newcommand{\m}[1]{\mathbf{#1}}

\DeclareRobustCommand{\term}[1]{%
  \ifmmode
    A_{\text{#1}}%
  \else
    term~\ensuremath{A_{\text{#1}}}%
  \fi
}

\newcommand{\eps}{\epsilon}

\DeclarePairedDelimiterX{\parens}[1]{(}{)}{#1}

\algnewcommand\algorithmicinit{\textbf{initialize}}
\algnewcommand\Init{\item[\algorithmicinit]}
\algnewcommand\algorithmicinput{\textbf{input}}
\algnewcommand\Input{\item[\algorithmicinput]}

\newcommand{\Pp}{\mathbb{P}}
\newcommand{\calC}{\mathcal{C}}
\newcommand{\calB}{\mathcal{B}}
\newcommand{\calD}{\mathcal{D}}
\newcommand{\calE}{\mathcal{E}}
\newcommand{\calF}{\mathcal{F}}
\newcommand{\calG}{\mathcal{G}}
\newcommand{\calI}{\mathcal{I}}
\newcommand{\calJ}{\mathcal{J}}

\newcommand{\calX}{\mathcal{X}}
\newcommand{\calZ}{\mathcal{Z}}
\newcommand{\algname}{WISE}
\renewcommand{\ip}[2]{\ensuremath{\mathchoice{\ipp*{#1,#2}}{\ipp{#1,#2}}{\ipp{#1,#2}}{\ipp{#1,#2}}}}
\renewcommand{\para}[1]{\paragraph{#1}}
\algrenewcommand{\algorithmiccomment}[1]{\hfill\textcolor{blue}{// #1}}
\crefname{assumption}{assumption}{assumptions}
\Crefname{assumption}{Assumption}{Assumptions}

\title{Learning While Scheduling Jobs under Context-Dependent Service Rates: An Anytime Rate-Optimal Algorithm}

\author{
Seoungbin Bae$^{1}$ \quad Dabeen Lee$^{2}$\\[0.3em]
$^{1}$Department of Industrial \& Systems Engineering, KAIST\\
$^{2}$Department of Mathematical Sciences, Seoul National University\\
\texttt{sbbae31@kaist.ac.kr, dabeenl@snu.ac.kr}
}

\date{}

\begin{document}
\maketitle

\begin{abstract}
We study contextual queueing bandits, where a learner schedules jobs while learning unknown service rates modeled by logistic functions of job-server features. Performance is measured by queue length regret, the expected excess queue length at round $t$ relative to an oracle that knows the service rates. Existing decaying-regret guarantees either have a suboptimal decay rate or require a known fixed horizon. They also assume context-wise slack and a strictly positive minimum eigenvalue of the feature covariance. In this paper, we propose WISE (Widest Interval Selection with Elimination), achieving rate-optimal $\widetilde{\mathcal O}(t^{-1/2})$ queue length regret at every sufficiently large time without knowing the horizon. We assume capacity slack, meaning that expected incoming workload under best-server service is below service capacity, and impose no covariance lower bound. Our analysis uses a workload potential measuring the expected service attempts needed by waiting jobs on their best servers. Its drift on nonempty rounds combines a negative term ensured by capacity slack with errors from suboptimal service choices. Then an elliptical potential count bounds how often WISE selects wide confidence intervals, thereby limiting the number of rounds with large service errors. We also sharpen the arrival-rate dependence of an existing lower bound and make its dependence on feature dimension and server count explicit. We prove another lower bound that quantifies the increase in regret as the normalized capacity slack decreases; to our knowledge, this is the first such lower bound for CQB. Simulations show small regret even when context-wise slack fails.
\end{abstract}

\section{Introduction}
\label{sec:introduction}

Queueing systems play an important role in cloud computing \citep{vilaplana2014queuing}, call centers \citep{koole2002queueing}, and services using large language models (LLMs) \citep{lee2024design,mitzenmacher2025queueing}. Scheduling in these systems requires accounting for two characteristics. First, service rates are often difficult to predict a priori and estimated from observed service outcomes. Second, jobs have different contexts, and the service rate of a job depends on both the job itself and the server assigned to it. For example, an LLM's probability of successfully answering a query depends on the query's specifications and the model's capabilities.

Queueing bandits provide a framework for learning unknown service rates while scheduling jobs \citep{krishnasamy2021learning,choudhury2021job,stahlbuhk2021learning}. Scheduling decisions determine both which jobs are served and which service outcomes are observed to estimate these rates. Performance is measured by \emph{queue length regret}, the expected excess queue length relative to an oracle that knows the service rates. A (queue length) regret bound that converges to zero implies that the learner's expected queue length approaches that of the oracle.

Contextual queueing bandits (CQB) extend this framework to incorporate job-specific contexts \citep{bae2026queue}. Earlier work on queueing bandits considers context-independent service rates \citep{krishnasamy2021learning} or a fixed set of queue features \citep{kim2024queueing}. In contrast, CQB allows each job to have its own feature vector. The departure rate is assumed to follow a logistic model of job-server features \citep{bae2026queue}, as in logistic bandits \citep{filippi2010parametric,faury2020improved,jun2021improved}. This allows scheduling decisions to reflect heterogeneous jobs and servers. \citet{bae2026queue} establish the first decaying queue length regret bound for CQB, of order $\widetilde{\mathcal O}(t^{-1/4})$. \citet{bae2026learning} obtain an anytime regret bound of the same order. \citet{bae2026algorithm} improve the rate to $\widetilde{\mathcal O}(t^{-1/2})$, which is optimal up to logarithmic factors, for a fixed horizon $t$ known in advance.

However, the earlier results on CQB have the following three limitations. First, they either have a suboptimal rate \citep{bae2026queue,bae2026learning} or require advance knowledge of the horizon \citep{bae2026queue,bae2026algorithm}. Cloud and LLM services operate over an indefinite horizon and must maintain stable queues throughout operation, motivating an anytime horizon-free algorithm. Second, all three results require a context-wise slack condition \citep{bae2026queue,bae2026learning,bae2026algorithm}: the best service rate exceeds the arrival rate by a uniform positive gap for every context, regardless of how frequently it arrives. A natural relaxation is to require \emph{capacity slack}: the expected incoming workload under best-server service is below the system's service capacity. Third, the results also require that the minimum eigenvalue of the feature covariance is strictly positive. This condition fails when similar jobs have features concentrated in a lower-dimensional subspace.

Motivated by these limitations, we develop an anytime horizon-free algorithm with a rate-optimal regret bound under minimal assumptions. Instead of a context-wise slack condition, we assume capacity slack stated in \Cref{ass:average-workload}. Moreover,  the result does not require a positive lower bound on the minimum eigenvalue of the feature covariance. Our contributions are summarized as follows.
\begin{itemize}
\item We propose WISE (Widest Interval Selection with Elimination), which eliminates job-server pairs that cannot be optimal and selects the widest remaining confidence interval. It achieves anytime horizon-free queue length regret of order $\widetilde{\mathcal O}(t^{-1/2})$ without knowing the normalized capacity slack $\eps$. The guarantee holds under capacity slack without a covariance lower bound, allowing both difficult job contexts and features confined to a lower-dimensional subspace.

\item We refine the $\Omega(\lambda t^{-1/2})$ lower bound of \citet{bae2026algorithm} by proving a round of order $\Omega(\min\{\lambda,\min\{d,\log K\}\sqrt{\lambda/t}\})$. This sharpens the dependence on the arrival rate $\lambda$ and makes the dependence on the feature dimension $d$ and the number of servers $K$ explicit.
\item We prove another lower bound of order $\Omega(\min\{\eps^{-2},\min\{d,\log K\}/(\eps^2\sqrt t)\})$. To our knowledge, this is the first lower bound for CQB that captures the increase in regret as the normalized capacity slack $\eps$ decreases. This quantifies how service errors persisting for $\Theta(\eps^{-2})$ rounds increase the difficulty of learning while scheduling.

\item We compare WISE with seven baselines in simulations with capacity slack, including settings that violate context-wise slack. Varying slow-job service probabilities and the normalized capacity slack shows how these conditions affect queue length regret. WISE attains small regret even when some contexts violate context-wise slack.
\end{itemize}

Our analysis for WISE uses two main ideas. First, we replace queue length with a workload potential based on the expected service attempts needed by waiting jobs on their best servers \citep{gamarnik2012multiclass}. On nonempty rounds, capacity slack gives a negative workload drift term plus an error term from suboptimal service choices, even when queue length itself has positive drift. Second, under valid confidence bounds, WISE's elimination and widest interval selection link large departure rate gaps to wide selected intervals. The elliptical potential count \citep{abbasi2011improved,kim2022improved} limits such selections without a covariance lower bound. Combined with workload drift, this makes long periods without an empty queue unlikely. When a new job arrives after an empty round, WISE serves it using the most uncertain candidate server. We use these observations to bound uncertainty under the arrival distribution, then combine recent empty rounds with confidence interval intersections to control the expected departure rate gap at each sufficiently large time.

\section{Problem Setting}
\label{sec:setting}

\subsection{Queueing System and Queue Length Regret}
We consider a queueing system with contextual jobs, a single queue and $K$ servers indexed by $[K]=\{1,\ldots,K\}$. Let $\calX_t$ be the multiset of waiting job contexts in $\calX$ at the beginning of round $t$, with $Q(t)=|\calX_t|$ and $\calX_1=\varnothing$. In a nonempty round, the agent selects a job $x_t\in\calX_t$ and a server $k_t\in[K]$. The service probability is $p(x,k)=\mu(\ip{\v{\phi}(x,k)}{\v{\theta}^*})$, where $\mu(z)=(1+e^{-z})^{-1}$, $\v{\phi}:\calX\times[K]\to\R^d$ is a known measurable feature map, and $\v{\theta}^*\in\R^d$ is unknown. Let $\calF_t$ contain the history before the service outcome, including the chosen pair and the agent's randomization. The departure indicator satisfies $D(t)\mid\calF_t\sim\Bern(p(x_t,k_t))$ when $Q(t)>0$, and $D(t)=0$ otherwise.

After service, a job arrives with indicator $A(t)\sim\Bern(\lambda)$, $\lambda\in(0,1)$, and context $X_t\sim\calD$. The arrival pairs are independent across rounds, $A(t)$ is independent of $X_t$, and $(A(t),X_t)$ is independent of $\calG_t=\calF_t\vee\sigma(D(t))$. The context is observed when $A(t)=1$. The queue evolves as
\begin{align*}
 \calX_{t+1}=\calX_t\setminus\{x_t:D(t)=1\}\cup\{X_t:A(t)=1\},
 \qquad Q(t+1)=Q(t)+A(t)-D(t).
\end{align*}

Our goal is to minimize queue length regret against an oracle that knows the service probabilities and chooses a job-server pair with maximum service probability $p(x,k)$ in its own queue \citep{krishnasamy2021learning,bae2026queue}. Write $Q^\pi(t)=|\calX_t^\pi|$ for the queue length under policy $\pi$, and $Q^*(t)$ for the oracle queue length, with arrivals shared across policies. The queue length regret is $R_t^\pi=\E[Q^\pi(t)-Q^*(t)]$. We omit the learner's policy superscript and write $\calX_t$, $Q(t)$, and $R_t$. We seek a single policy whose regret vanishes at every sufficiently large time $t$, without being given $t$. For the learner's queue, define $(x_i^*,k_i^*)\in\argmax_{x\in\calX_i,k\in[K]}p(x,k)$ and write $\v{\phi}_i=\v{\phi}(x_i,k_i)$, $\v{\phi}_i^*=\v{\phi}(x_i^*,k_i^*)$, setting both features to zero when $Q(i)=0$. This best pair may differ from the oracle's choice because the two queues may contain different jobs.

\subsection{Weaker Assumptions}
We weaken the two strong conditions used for queue length analyses in previous CQB works \citep{bae2026queue,bae2026learning,bae2026algorithm}; we replace context-wise slack by capacity slack and remove the covariance lower bound. The following assumes standard logistic boundedness and i.i.d.\ contexts \citep{jun2021improved,yu2026practical}.
\begin{assumption}
\label{ass:bounded-logistic}
We have: \textbf{1.} For every $(x,k)\in\calX\times[K]$, $\norm{\v{\phi}(x,k)}_2\le1$. For some known constant $S>0$, $\v{\theta}^*\in\calB\coloneqq \{\v{\theta}\in\R^d:\norm{\v{\theta}}_2\le S\}$. \newline \textbf{2.} Let $\kappa^{-1}=\dot\mu(-S)=\dot\mu(S)$. Also, let $\underline p=\mu(-S)$ and $\overline p=\mu(S)$. \newline \textbf{3.} The contexts of the arriving jobs are sampled i.i.d. from an unknown distribution $\calD$.
\end{assumption}
Then $\underline p\le p(x,k)\le\overline p$ and $\dot\mu(\ip{\v{\phi}(x,k)}{\v{\theta}})\ge1/\kappa$ for $\v{\theta}\in\calB$. Next, we impose a capacity slack condition based on the expected incoming workload. Let the best service probability of a context $x$ be $p^*(x)=\max_{k\in[K]} \mu(\ip{\v{\phi}(x,k)}{\v{\theta}^*})$.
\begin{assumption}[Capacity slack]
\label{ass:average-workload}
The (normalized) capacity slack is positive:
\begin{align*}
  \eps:=1-\lambda\E_{X\sim\calD}[1/p^*(X)]>0.
\end{align*}
\end{assumption}

The quantity $1/p^*(x)$ is the expected number of service attempts needed to complete a job with context $x$ on a best server. Hence, $\rho=\lambda\E_{X\sim\calD}[1/p^*(X)]$ is the \emph{traffic intensity} when jobs are served by their best servers. Let $\mu_{\mathrm{eff}}=(\E_{X\sim\calD}[1/p^*(X)])^{-1}$ be the corresponding effective service rate. Then $\eps=1-\rho=(\mu_{\mathrm{eff}}-\lambda)/\mu_{\mathrm{eff}}$. Thus, \Cref{ass:average-workload} is the contextual form of the classical \emph{load condition} $\rho<1$, equivalently \emph{capacity slack} $\mu_{\mathrm{eff}}-\lambda>0$. We use \emph{normalized capacity slack} for the parameter $\eps$. This condition concerns the system's load under best-server service; stability of the learner's queue depends on its scheduling policy.

\begin{remark}[Remark on \Cref{ass:average-workload}]
\label{rem:average-workload}
The decaying regret bounds in previous works \citep{bae2026queue,bae2026learning,bae2026algorithm} require context-wise slack: there exists $\eps_0>0$ such that $p^*(x)\ge\lambda+\eps_0$ for every $x\in\calX$. Here, $\eps_0$ is a positive lower bound on the service-rate gap $p^*(x)-\lambda$ across contexts. Our condition is  weaker than this since $1-\lambda\E_{X\sim\calD}[1/p^*(X)] \ge 1-\lambda/(\lambda+\eps_0) =\eps_0/(\lambda+\eps_0)>0$. Capacity slack allows contexts with $p^*(x)<\lambda$, so even the best available decision can give positive queue length drift. Our first idea measures the expected service attempts required by waiting jobs. For this workload potential, capacity slack gives a negative drift term independent of the selected context. The remaining term is controlled by the departure rate gap. We elaborate this in \Cref{sec:workload}.
\end{remark}

\begin{remark}[Remark on \Cref{ass:bounded-logistic}-3]
\label{rem:context-distribution}
The previous works assume that the minimum eigenvalue of the context covariance is positive, i.e., $\lambda_{\min}( \E_{X\sim\calD}[ \frac{1}{K}\sum_{k\in[K]} \v{\phi}(X,k)\v{\phi}(X,k)^\top ])>0$. We relax this condition, thereby allowing more general context distributions, e.g., a distribution supported on a low-dimensional subspace. However, without the condition, random exploration does not give the same uniform control of estimation error. Instead, we use the elliptical potential count, which holds for the observed feature sequence, to control workload and then the departure rate gap at a specified time. We elaborate this in \Cref{sec:learning-tools}.
\end{remark}

\section{Challenges and Main Ideas}
\label{sec:challenges}

In this section, we present two main ideas for addressing the challenges in achieving anytime rate-optimal queue length regret under the weaker assumptions: (i) we design new workload potentials that yield negative drift under capacity slack, and (ii) we use the elliptical potential count to control the number of rounds with a large departure rate gap, thereby enabling learning without a covariance lower bound. In \Cref{sec:previous-drift}, we first explain why previous analyses require stronger assumptions. Then, in \Cref{sec:workload,sec:learning-tools}, we show how we address each corresponding difficulty.

\subsection{Negative Drift under Strong Assumptions}
\label{sec:previous-drift}

A key step in previous analyses of decaying queue length regret is to establish negative drift, $\E[Q(t+1)-Q(t)\mid\calF_t]<0$, thus queue length decreases in expectation on these rounds. Define the departure rate gap $g_t=\mu(\ip{\v\phi_t^*}{\v\theta^*}) -\mu(\ip{\v\phi_t}{\v\theta^*})$, with $g_t=0$ when $Q(t)=0$. Let $D^*(t)\mid\calF_t\sim\Bern(\mu(\ip{\v\phi_t^*}{\v\theta^*}))$ be the departure indicator for the best pair in the learner's queue. Then
\begin{align*}
 \E[Q(t+1)-Q(t)\mid\calF_t]
 &=\E[A(t)-D^*(t)\mid\calF_t]
   +\E[D^*(t)-D(t)\mid\calF_t]\\
 &=\underbrace{\lambda-\mu(\ip{\v\phi_t^*}{\v\theta^*})}
      _{\le-\eps_0,\ \text{context-wise slack}}
   +\underbrace{\mu(\ip{\v\phi_t^*}{\v\theta^*})
               -\mu(\ip{\v\phi_t}{\v\theta^*})}
      _{=g_t,\ \text{departure rate gap}}.
\end{align*}
Context-wise slack (see \Cref{rem:average-workload}) makes the first term at most $-\eps_0$. A covariance lower bound (see \Cref{rem:context-distribution}) allows random exploration to reduce uncertainty in every feature direction. Therefore, after sufficient exploration, the learner can reduce the departure rate gap $g_t$ below $\eps_0$ for all subsequent decisions. Combining the two terms gives negative queue length drift.

However, neither conclusion follows under our assumptions. First, under capacity slack, the first term is not guaranteed to be negative, even for the best decision. Second, without a covariance lower bound, random exploration does not give the same uniform control of uncertainty. In the next subsection, we address the first difficulty by studying the drift of a new potential in place of $Q(t)$.

\subsection{Workload Drift under Capacity Slack}
\label{sec:workload}

Our first idea is to define a workload potential that measures the \emph{expected number of service attempts} required by the waiting jobs. A job with context $x$ requires $1/p^*(x)$ service attempts on average on a best server. Define the normalized workload and its update by
\begin{equation*}
 W(t)=\underline p\sum_{x\in\calX_t}\frac{1}{p^*(x)},
 \qquad
 W(t+1)=W(t)+\frac{\underline p A(t)}{p^*(X_t)}
             -\frac{\underline p D(t)}{p^*(x_t)}.
\end{equation*}
The departure term is defined as zero when $Q(t)=0$. Since $\underline p Q(t)\le W(t)$, workload bounds also control queue length. A service attempt on a best server removes expected workload $p^*(x)\,\underline p/p^*(x)=\underline p$ for every context $x$. On a nonempty round,
\begin{align}
 \E[W(t+1)-W(t)\mid\calF_t]
 &=\lambda\underline p\,\E_{X\sim\calD}[1/p^*(X)]
   -\underline p\,p(x_t,k_t)/p^*(x_t)\notag\\
 &=\underbrace{-\eps\underline p}
      _{\text{\Cref{ass:average-workload}: $\eps>0$}}
   +\underbrace{\underline p\bigl(p^*(x_t)-p(x_t,k_t)\bigr)/p^*(x_t)}
                _{\le g_t}.\label{eq:workload-drift-main}
\end{align}
Capacity slack makes the first term negative for every selected job. The second term is at most $g_t$ because $\underline p\le p^*(x_t)\le p(x_t^*,k_t^*)$. Consequently, $g_t<\eps\underline p$ gives negative workload drift. It remains to control $g_t$ without a covariance lower bound.

\subsection{Learning with Elliptical Potential Count}
\label{sec:learning-tools}

Our second idea is to use the \emph{elliptical potential count} to control the departure rate gap $E[g_t]$ at a specified time. The elliptical potential count (\Cref{lem:width-count}) is a consequence of the well-known elliptical potential lemma (Lemma~11 of \citealp{abbasi2011improved}, \Cref{lem:aux-elliptical}), which says that we cannot choose too many feature vectors that are far from, or uncertain with respect to, the previously observed feature vectors. We use the count to bound the number of rounds with a large departure rate gap, and show that learning is possible without a covariance lower bound:

\begin{lemma} \label{lem:departure-rate-gap}
For an absolute constant $C>0$ and every $t$ satisfying \Cref{eq:learning-threshold,eq:gap-threshold},
\begin{align*}
 \E[g_t]\le C\eps^{-5/2}\underline p^{-2}\beta_t\log t
             \sqrt{\lambda d\log(e/\lambda)/t}.
\end{align*}
\end{lemma}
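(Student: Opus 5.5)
We bound $\E[g_t]$ at a fixed large time $t$ by locating a recent empty round and using the confidence intervals built from the data collected since then. Throughout we work on the good event $\calE$ that the confidence sets hold at all rounds, i.e.\ $|\ip{\v\phi(x,k)}{\hat{\v\theta}_s-\v\theta^*}|\le\beta_s\norm{\v\phi(x,k)}_{\m V_s^{-1}}$ for all $s\le t$. Its complement has probability $O(1/t)$ and contributes at most $O(1/t)$ since $g_t\le1$. On $\calE$, elimination never removes an optimal pair. Widest-interval selection then gives $g_t\le c\,\beta_t\norm{\v\phi_t}_{\m V_t^{-1}}$: both $\v\phi_t$ and $\v\phi_t^*$ survive, and the surviving UCB/LCB intervals overlap, so the gap is at most a few times the width of the widest surviving interval, which is the selected one.

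\textbf{Step 1: workload drift and excursion length.} Let $N=\#\{s\le t: g_s\ge\eps\underline p/2\}$. By the above, each such round has $\beta_s\norm{\v\phi_s}_{\m V_s^{-1}}\gtrsim\eps\underline p$. The elliptical potential count (\Cref{lem:width-count}) then gives
\begin{align*}
N\lesssim d\beta_t^2\log t/(\eps\underline p)^2
\end{align*}
deterministically on $\calE$. On every other nonempty round, \Cref{eq:workload-drift-main} gives drift at most $-\eps\underline p/2$, and the increments of $W$ are bounded by $\underline p/\underline p=1$. Applying Azuma/Freedman to the martingale part of $W$, a busy period containing $L$ rounds has
\begin{align*}
W\le N+1-\eps\underline p L/2+O\bigl(\sqrt{L\log t}\bigr).
\end{align*}
So with probability $1-O(1/t)$ the last empty round $\tau\le t$ satisfies $t-\tau\lesssim \eps^{-2}\underline p^{-2}(N+\log t)$. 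Condition \Cref{eq:learning-threshold} is what makes this $\ll t$. The same argument bounds $W(t)$, hence $Q(t)$.

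\textbf{Step 2: uncertainty under the arrival distribution.} After each empty round, the next arriving job $X\sim\calD$ is served on its most uncertain candidate server. These are fresh i.i.d.\ draws, independent of the past. By Step 1 the empty rounds are dense: roughly a $\Omega(\eps\lambda)$ fraction of rounds, with a $\log(e/\lambda)$ loss from waiting for arrivals. Summing the elliptical potential over these exploratory rounds and using a martingale concentration, the predictable quantity
\begin{align*}
U_s:=\E_{X\sim\calD}\bigl[\max_{k}\beta_s\norm{\v\phi(X,k)}_{\m V_s^{-1}}\bigr]
\end{align*}
has an average over the last window of length $\asymp t$ of order $\beta_t\sqrt{d\log(e/\lambda)/(\lambda t)}$ up to $\eps$-factors. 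Since $\m V_s$ is monotone, $U_s$ is nonincreasing, so the bound transfers to $U_{t'}$ for all $t'$ near $t$.

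\textbf{Step 3: transfer to time $t$.} Each job in $\calX_t$ arrived after $\tau$, so its context is an i.i.d.\ draw from $\calD$. The gap $g_t$ is controlled by intersecting confidence intervals over rounds in $[\tau,t]$: every job in the queue was a fresh sample whose uncertainty is bounded in expectation by $U_\tau$. Applying this to $x_t$ and $x_t^*$ gives
\begin{align*}
\E[g_t]\lesssim\E\bigl[Q(t)\,U_\tau\bigr],
\end{align*}
and bounding $Q(t)\le W(t)/\underline p$ together with $\E[W(t)]\lesssim \eps^{-1}$ yields the stated $\eps^{-5/2}\underline p^{-2}\log t$ factors. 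Condition \Cref{eq:gap-threshold} ensures the good-event and tail terms are dominated.

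\textbf{Main obstacle.} I expect Step 3 to be the hardest. Jobs in the queue are selected, not freshly sampled. The queue's composition and $\m V_\tau$ are correlated, and there are possibly several jobs, so the max over the queue must be handled without union-bounding away the $\sqrt{1/t}$ rate. I would first decouple by conditioning on $\calG_\tau$ and using that post-$\tau$ arrivals are independent of it. I would then bound the max over $Q(t)$ jobs by the sum, paying the factor $\E[Q(t)]$. This is presumably where the extra $\eps^{-1/2}$ powers arise.
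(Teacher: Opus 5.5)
Your Step~1 is in line with the paper's coarse analysis (\Cref{lem:workload,lem:recent-empty}). Steps~2 and~3, however, have genuine gaps.

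\textbf{Step~2 controls the wrong quantity.} $U_s$ maximizes over \emph{all} servers, but a job arriving into an empty queue is served on its widest \emph{candidate} server, so the elliptical potential says nothing about servers already eliminated for $X$. For a concrete failure, take a single context $x_0$ and two servers with orthogonal features and distinct service probabilities. The worse server is served only while it remains a candidate, so $\beta_s\norm{\v\phi(x_0,2)}_{\m V_s^{-1}}$ never drops much below the gap between $\ip{\v\phi(x_0,1)}{\v\theta^*}$ and $\ip{\v\phi(x_0,2)}{\v\theta^*}$. Hence $U_s$ does not vanish.

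Restricting the maximum to candidate servers fixes the elliptical-potential side; this is the paper's $z_i(r)$ and the bound on $I_1$ in \Cref{lem:fresh-context}. But then monotonicity of $\m V_s$ no longer makes the quantity nonincreasing, because valid confidence intervals move non-monotonically and the candidate set changes with them. The transfer in Step~3 breaks for the same reason: a server that is a candidate for $x_t$ at time $t$ need not have been one when $x_t$ arrived, so its current width is not tied to any exploratory observation.

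The missing idea is the within-epoch interval intersection in \Cref{alg:wise}. Nested intervals give both $z_{i+1}(r)\le z_i(r)$ and ``wide candidate at $t$ $\Rightarrow$ wide candidate at the arrival round.'' Monotonicity is also what lets the paper pass from empty rounds to all rounds, via summation by parts of $\sum_i z_i(r)\Delta W(i)$; density of empty rounds alone does not do this. The randomly shifted restarts then give $\Pp(t-\tau_{m_t}+1\le2\ell)=O(\ell/t)$ and $\E[1/(t-\tau_{m_t}+1)]=O(\log t/t)$, while requiring $\v\theta^*\in\calC_j$ only for $j\ge\tau_{m_t}-1>t/2-1$.

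This also affects your good event. With this $\beta_t$, \Cref{lem:confidence}-(i) bounds the failure probability over rounds $\ge t_1$ only by $(t_1+1)^{-2}$. So $\v\theta^*\in\calC_s$ for all $s\le t$ is not guaranteed with probability $1-O(1/t)$. Once intervals are intersected, an early failure would contaminate all later intervals if there were no restarts.

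\textbf{Step~3 is not valid as written.} The last empty round $\tau$ before $t$ is not a stopping time. Both ``the queue does not empty after $\tau$'' and ``a given job is still waiting at $t$'' depend on post-$\tau$ contexts and service outcomes. So conditioning on $\calG_\tau$ does not make the waiting jobs i.i.d.\ draws from $\calD$, and $\E[g_t]\lesssim\E[Q(t)U_\tau]$ does not follow. Nor is $\E[W(t)]\lesssim\eps^{-1}$ established; the available workload bounds are of order $d\beta_t^2\log t/(\eps\underline p)^3$.

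What works is a deterministic window, as in \Cref{lem:width-transfer} and \Cref{lem:service-error}-(iii):
\begin{enumerate}
\item Take a union bound over the arrival round $i-1\in[t-\ell,t-1]$ of the selected job.
\item Use independence of $(A(i-1),X_{i-1})$ from $\calG_{i-1}$ to get $\lambda\sum_i\E[z_i(r)]$. You pay the expected number $\lambda\ell$ of arrivals in the window, not $\E[Q(t)]$.
\item Bound separately the probability that the selected job arrived before $t-\ell$, using a busy-period tail of order $(\eps\underline p)^{-2}e^{-\ell(\eps\underline p)^2/128}$.
\end{enumerate}

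The factor $\sqrt{\lambda\log(e/\lambda)}$ then comes from integrating $\Pp(g_t>z)$ over $z$:
\begin{itemize}
\item For small $z$, use $\Pp(Q(t)>0)\lesssim\lambda/(\eps\underline p)^2$ on the clean event (\Cref{lem:service-error}-(iv)).
\item For larger $z$, use the width-transfer bound with $\ell\approx128(\eps\underline p)^{-2}\log(z^2/a_t)$, where $a_t=d\beta_t^2\log^2t/(\eps t)$.
\item Split the integral at $z_0=\sqrt{a_t\log(e/\lambda)/\lambda}$ to balance the two contributions.
\end{itemize}
Your ``$\log(e/\lambda)$ loss from waiting for arrivals'' and the final $\eps^{-5/2}\underline p^{-2}$ bookkeeping are asserted rather than derived.
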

We now explain the main ideas behind the proof of \Cref{lem:departure-rate-gap}.

\para{Coarse learning over an interval.} \Cref{eq:workload-drift-main} gives negative drift when $g_i<\eps\underline p$. But without a covariance lower bound, direct control of $g_i$ for \emph{every round} is hard. Instead, we focus on the number of rounds with a large departure rate gap over an \emph{interval}. Suppose our algorithm gives confidence intervals $[\underline p_i(x,k),\overline p_i(x,k)]$ that contain the true service probabilities $p(x,k)$ with high probability, thereby gives $g_i\le2w_i\le\beta_{i-1}\norm{\v\phi_i}_{\m V_{i-1}^{-1}}$. We call $w_i(x,k)=\overline p_i(x,k)-\underline p_i(x,k)$ the width and write $w_i=w_i(x_i,k_i)$ for the selected pair. Here $\beta_i=\mathcal O(\sqrt{d\log i})$ is the confidence radius, and $\m V_i=\kappa\m I+\sum_{j=1}^i\v\phi_j\v\phi_j^\top$. For an interval of length $\ell=t_2-t_1+1$ and $0<r\le1$, the elliptical potential count gives
\begin{equation}
 \sum_{i=t_1}^{t_2}\ind{w_i>r}
 \le\frac{d\beta_{t_2}^2}{2r^2}\log(1+\ell/(d\kappa)).
 \label{eq:count-main}
\end{equation}
Write $\Delta W(i)=W(i+1)-W(i)$. If the interval consists of nonempty rounds, summing \Cref{eq:workload-drift-main} gives
\begin{align}
 W(t_2+1)-W(t_1)
 &\le-\eps\underline p\,\ell+\sum_{i=t_1}^{t_2}g_i
   +\sum_{i=t_1}^{t_2}
       \bigl(\Delta W(i)-\E[\Delta W(i)\mid\calF_i]\bigr)\notag\\
 &\le-\eps\underline p\,\ell/2
   +\sum_{i=t_1}^{t_2}\ind{g_i>\eps\underline p/2}
   +\sum_{i=t_1}^{t_2}
       \bigl(\Delta W(i)-\E[\Delta W(i)\mid\calF_i]\bigr).
 \label{eq:workload-interval-tools}
\end{align}
The second inequality uses $g_i\le\eps\underline p/2+ \ind{g_i>\eps\underline p/2}$. Since $g_i\le2w_i$, \Cref{eq:count-main} with $r=\eps\underline p/4$ bounds the second term on the right-hand side. The last term is the sum of martingale differences, so the Azuma-Hoeffding inequality (\Cref{lem:aux-azuma}) bounds the last term by $\sqrt{8\ell\log(1/\delta)}$. Therefore, on a sufficiently long nonempty interval, the linear decrease $-\eps\underline p\,\ell/2$ dominates both terms (\Cref{lem:workload}-(ii)). If this decrease exceeds the initial workload $W(t_1)$, this implies that the queue must empty at some round in the interval. This gives an upper bound of order $\mathcal O(d\beta_t^2\log t/(\eps\underline p)^3)$ on the length of nonempty intervals, and a bound on the workload $W(t)$ at a specified time (\Cref{lem:recent-empty}).

\para{Fine learning from arrivals after empty rounds.} The coarse analysis limits the number of rounds with a large departure rate gap, but does not locate those rounds. To control $\E[g_t]$, we now bound the probability that a newly arriving job has uncertainty. Define a candidate server as one that is not ruled out as best for the job. Define
\begin{equation*}
 z_i(r)=\Pp_{X\sim\calD}\left(
 \exists k\in[K]: \text{$k$ is a candidate for $X$ at }i,\quad w_i(X,k)>r
 \right).
\end{equation*}
$z_i(r)$ measures the probability that a newly arriving job at round $i$ has high uncertainty. Unlike previous works, which control the largest uncertainty over all possible contexts with a covariance lower bound, we only control the uncertainty under the arrival distribution $\calD$.

We first connect $z_i(r)$ to the actual number of rounds with selected width greater than $r$, which is also bounded by the elliptical potential count. Assume the intervals are nested on $[t_1,t_2+1]$, remain unchanged on empty rounds, and we select the widest candidate. If $Q(i)=0$ and $A(i)=1$, the new job is alone in round $i+1$. Whenever its context has a candidate wider than $r$, the selected interval is also wider than $r$. Independence of the arrival gives $\Pp(w_{i+1}>r\mid\calF_i)\ge\lambda z_i(r)\ind{Q(i)=0}$. Taking expectations, summing, and applying \Cref{eq:count-main} yields the upper bound of
\begin{align}
 \lambda\E\left[\sum_{i=t_1}^{t_2}z_i(r)\ind{Q(i)=0}\right]
 &\le\E\left[\sum_{i=t_1+1}^{t_2+1}\ind{w_i>r}\right]\le\frac{d\beta_{t_2+1}^2}{2r^2}\log(1+\ell/(d\kappa)).
 \label{eq:empty-count-tools}
\end{align}
Here, what we actually need is an upper bound on $\E[z_{t_2}(r)]$. Therefore, we need to account for nonempty rounds when $Q(i)\neq 0$. On an empty round, we have $\E[\Delta W(i)\mid\calF_i]\le(1-\eps)\underline p$. Hence we extend \Cref{eq:workload-drift-main} to every empty/nonempty round as $\E[\Delta W(i)\mid\calF_i]\le-\eps\underline p+g_i+ \underline p\ind{Q(i)=0}$. Since $0\le g_i \le 1$, we have $g_i\le\eps\underline p/2+\ind{g_i>\eps\underline p/2}$, which gives
\begin{equation*}
 \eps\underline p/2
 \le\underline p\ind{Q(i)=0}-\E[\Delta W(i)\mid\calF_i]
       +\ind{g_i>\eps\underline p/2}.
\end{equation*}
We multiply by $z_i(r)$, sum, and take expectations to use \Cref{eq:empty-count-tools}. Since $z_i(r)$ is known at round $i$, $\E[z_i(r)\E[\Delta W(i)\mid\calF_i]] =\E[z_i(r)\Delta W(i)]$. Therefore, we have
\begin{align*}
\frac{\eps \underline p}{2} \E \left[\sum_{i=t_1}^{t_2} z_i(r) \right] &\leq \underline p\,\E \left[\sum_{i=t_1}^{t_2} z_i(r) \ind{Q(i)=0}\right] - \E \left[\sum_{i=t_1}^{t_2} z_i(r) \Delta W(i) \right] \\
&\quad {}+ \E \left[\sum_{i=t_1}^{t_2} z_i(r) \ind{g_i>\eps\underline p/2} \right].
\end{align*}
For the first and third terms on the right-hand side, we apply \Cref{eq:empty-count-tools} and the elliptical potential count with $z_i(r)\leq 1$, respectively. For the rest of the terms, we assume monotonicity, $z_{i-1}(r) \geq z_i(r)$. We later show that our \emph{interval intersection} procedure in \Cref{alg:wise} can guarantee this property within each epoch. Then, for the second term on the right-hand side,
\begin{align}
 -\sum_{i=t_1}^{t_2}z_i(r)\Delta W(i)
 =z_{t_1}(r)W(t_1)-z_{t_2}(r)W(t_2+1)
 -\sum_{i=t_1+1}^{t_2}(z_{i-1}(r)-z_i(r))W(i)\le W(t_1).
 \label{eq:weighted-workload-tools}
\end{align}
Also, for the left-hand side, $\ell z_{t_2}(r)\le\sum_{i=t_1}^{t_2}z_i(r)$. Then, we have
\begin{align}
 \frac{\eps\underline p\,\ell}{2}\E[z_{t_2}(r)]
 \le\underline p\,\E\left[\sum_{i=t_1}^{t_2}z_i(r)\ind{Q(i)=0}\right]
      +\E[W(t_1)]+\E\left[\sum_{i=t_1}^{t_2}
                  \ind{g_i>\eps\underline p/2}\right].
 \label{eq:fresh-context-tools}
\end{align}
Dividing both sides by $\eps\underline p\,\ell /2$ gives the desired upper bound of $\E[z_{t_2}(r)]$ (\Cref{lem:fresh-context}).

Next, we need to connect the uncertainty of a new job sampled from $\calD$ to that of the currently selected pair. The difficulty is that the job selected at time \(t\) is not an independent sample from \(\mathcal D\), so we cannot write \(\Pr(w_t>r)=\mathbb E[z_t(r)]\). Instead, we return to the job's arrival time to use independence. First, suppose the queue was empty at some round in $[t-\ell,t-1]$. Then every job remaining at time \(t\) must have arrived after that empty round. Therefore, the currently selected job arrived in one of the rounds $t-\ell,\ t-\ell+1,\ \ldots,\ t-1$. The \emph{interval intersection} assumed above ensures that if the current width is greater than $r$, then that server's width was also greater than $r$ just after arrival and it was a candidate at that time. Thus, the job satisfied the event of \(z_i(r)\). Independence of the arrival and its context gives $\Pr\!\left( A(i-1)=1,\ X_{i-1}\text{ has a wide candidate at time }i \right)=\lambda\,\mathbb E[z_i(r)]$. Taking a union bound over the possible arrival times $t-\ell,\ldots,t-1$ gives (see the proof of \Cref{lem:width-transfer})
\begin{equation}
 \Pp(w_t>r)
 \le\lambda\sum_{i=t-\ell+1}^{t}\E[z_i(r)]
 +\Pp\bigl(\text{the job selected at }t
                 \text{ arrived before }t-\ell\bigr).
 \label{eq:arrival-transfer-tools}
\end{equation}
We bound the first term using \Cref{eq:fresh-context-tools} and the second term using the fact that busy periods are short with high probability, as shown in the coarse learning analysis. Finally, using the fact that $\mathbb E[g_t]=\int_0^1\Pr(g_t>r)\,dr$ with $g_t\leq 2w_t$ we obtain the desired result. The assumptions made in this section are guaranteed by \Cref{alg:wise} and elaborated in \Cref{sec:algorithm}.

\section{Algorithm}
\label{sec:algorithm}

We propose WISE (Widest Interval Selection with Elimination), shown in \Cref{alg:wise}. The algorithm estimates service probabilities, intersects their confidence intervals, and selects the widest interval among pairs that have not been eliminated. We estimate the unknown parameter by regularized logistic maximum likelihood,
\begin{align}
 \widehat{\v\theta}_t
 =\argmin_{\v\theta\in\calB}
 \biggl\{\sum_{i=1}^t\left[-D(i)\log\mu(\ip{\v\phi_i}{\v\theta})
 -(1-D(i))\log\bigl(1-\mu(\ip{\v\phi_i}{\v\theta})\bigr)\right]
 +\norm{\v\theta}_2^2/2\biggr\}.
 \label{eq:estimator}
\end{align}
With $\m V_t=\kappa\m I+\sum_{i=1}^t\v\phi_i\v\phi_i^\top$, define $\beta_t$ by
\begin{align}
 \beta_t=(\kappa/2)\sqrt{d\log(1+t/(d\kappa))+4\log(t+1)}
                +\sqrt\kappa S.
 \label{eq:confidence-radius}
\end{align}
\begin{algorithm}[t]
\caption{\algname{} (Widest Interval Selection with Elimination)}
\label{alg:wise}
\begin{algorithmic}[1]
\Input $d,K,S,\v\phi$
\Init $\m V_0\gets\kappa\m I$, $\widehat{\v\theta}_0\gets\v0$, $\beta_0\gets\sqrt\kappa S$, $U_0\sim\Unif(0,1)$, $\calI_0(x,k)\gets[\underline p,\overline p]$
\For{$t=1,2,\ldots$}
 \State Determine $m_t$ and $\tau_{m_t}$ by \Cref{eq:epoch} \Comment{current epoch}
 \If{$\calX_t\ne\varnothing$}
  \For{$(x,k)\in\calX_t\times[K]$}
   \State $\calJ\gets\calI_{\tau_{m_t}-1}(x,k)$
   \For{$i=\tau_{m_t},\ldots,t-1$}
    \If{$\calJ\cap\calI_i(x,k)=\varnothing$}
     \State $\calJ\gets\{(\min\calJ+\max\calJ)/2\}$ and \textbf{break} \Comment{freeze until the next epoch}
    \EndIf
    \State $\calJ\gets\calJ\cap\calI_i(x,k)$ \Comment{intersect confidence intervals}
   \EndFor
   \State $[\underline p_t(x,k),\overline p_t(x,k)]\gets\calJ$
  \EndFor
  \State $\calZ_t\gets\{(x,k)\in\calX_t\times[K]:
    \overline p_t(x,k)\ge\max_{(x',k')\in\calX_t\times[K]}\underline p_t(x',k')\}$ \Comment{elimination}
  \State $(x_t,k_t)\in\argmax_{(x,k)\in\calZ_t}
    [\overline p_t(x,k)-\underline p_t(x,k)]$ \Comment{widest interval selection}
  \State Serve $(x_t,k_t)$ and observe $D(t)$
  \State $\m V_t\gets\m V_{t-1}+\v\phi_t\v\phi_t^\top$, update $\widehat{\v\theta}_t$ by \Cref{eq:estimator}, $\beta_t$ by \Cref{eq:confidence-radius}
 \Else
  \State $\m V_t\gets\m V_{t-1}$, $\widehat{\v\theta}_t\gets\widehat{\v\theta}_{t-1}$, update $\beta_t$ by \Cref{eq:confidence-radius} \Comment{when queue is empty}
 \EndIf
\EndFor
\end{algorithmic}
\end{algorithm}

\para{Intersecting confidence intervals.} Within each epoch, intersecting confidence intervals makes their widths and $z_i(r)$ nonincreasing. This is the property used in \Cref{eq:weighted-workload-tools} and in comparing a job's current width with its width at arrival. From the estimates, form
\begin{equation*}
 \calI_i(x,k)=
 [\mu(\ip{\v\phi(x,k)}{\widehat{\v\theta}_i}
       -\beta_i\norm{\v\phi(x,k)}_{\m V_i^{-1}}),
 \mu(\ip{\v\phi(x,k)}{\widehat{\v\theta}_i}
       +\beta_i\norm{\v\phi(x,k)}_{\m V_i^{-1}})]
 \cap[\underline p,\overline p].
\end{equation*}
We restart the intersection at epoch boundaries to prevent an invalid interval from affecting all later decisions. With $U_0\sim\Unif(0,1)$ drawn once, set
\begin{align}
 \tau_0=1,\qquad
 \tau_m=\lceil2^{m-1+U_0}\rceil,\qquad
 m_t=\max\{m:\tau_m\le t\}.
 \label{eq:epoch}
\end{align}
At an epoch boundary, start from $\calI_{t-1}(x,k)$. Within the epoch, intersect this interval with each new confidence interval. If the intersection becomes empty, set both endpoints to the midpoint of the last nonempty intersection and keep them fixed until the next epoch, preserving nesting. Estimation retains all observations across epochs. The random shift bounds the probability of a reset shortly before any specified time \Cref{eq:new-reset-probability}.

\para{Elimination and widest interval selection.} The candidate set $\calZ_t$ retains pairs whose upper bound is at least the largest lower bound in the queue. Selecting the widest candidate gives $g_t\le2w_t$ (\Cref{lem:confidence}-(ii)), so the elliptical potential count bounds the number of rounds with a large departure rate gap. For a job arriving alone, a candidate wider than $r$ also guarantees an observation of width greater than $r$, giving the relation in \Cref{eq:empty-count-tools}. These are the two properties used in \Cref{sec:learning-tools}. Appendix~\ref{app:related} compares the purpose and guarantees of this rule with previous widest interval selection algorithms.

\section{Main Results}
\label{sec:results}

\subsection{Upper Bound}
\label{sec:upper-analysis}

\begin{theorem}
\label{thm:main-upper}
Under \Cref{ass:bounded-logistic,ass:average-workload}, there exist an absolute constant $C_1>0$ and a threshold $t_0$ satisfying the conditions in \Cref{eq:upper-threshold} such that, for every $t\ge t_0$, the \algname{} algorithm satisfies
\begin{equation*}
 R_t\le C_1\eps^{-9/2}\underline p^{-4}\beta_t\log^2 t
       \sqrt{\lambda d\log(e/\lambda)/t}.
\end{equation*}
\end{theorem}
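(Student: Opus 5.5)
The plan is to reduce queue length regret to the expected departure rate gap via a coupling with the oracle, and then invoke \Cref{lem:departure-rate-gap}. First I would write $R_t=\E[Q(t)-Q^*(t)]$ and, using shared arrivals, decompose
\begin{equation*}
 Q(t)-Q^*(t)=\sum_{i=1}^{t-1}\bigl(D^*_{\mathrm{or}}(i)-D(i)\bigr),
\end{equation*}
where $D^*_{\mathrm{or}}(i)$ is the oracle's departure. Directly summing $\E[g_i]$ over all $i<t$ would give a growing bound, not a decaying one. So I would localize: the difference only depends on what happens since the last round $s<t$ at which both queues were empty. Concretely, I would fix a window length $\ell\asymp d\beta_t^2\log t/(\eps\underline p)^3$, the busy-period length bound from the coarse analysis, and work on two events. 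The first is that the learner's queue is empty at some round in $[t-\ell,t-1]$, controlled by \Cref{lem:recent-empty}. The second is the analogous event for the oracle, which follows from the same workload drift argument with $g_i=0$. Off these events, I would bound $R_t$ by $\E[Q(t)]$ times a small probability. I would control $\E[Q(t)]\le\E[W(t)]/\underline p$ using \Cref{lem:workload} together with a high-probability tail at level $\delta=t^{-2}$. This contributes a lower-order $O(1/t)$ term.

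On the good event, I would couple the two systems from the last common empty round $s\in[t-\ell,t-1]$. From that point the learner's excess queue length is at most the number of rounds in $[s,t-1]$ where the learner fails to serve while the oracle serves. A crude sample-path bound is $\sum_{i=s}^{t-1}\ind{\text{learner misses}}$, but that still ignores the context mismatch. The main subtlety is that the two queues contain different jobs, so the oracle may hold fast jobs the learner already served.

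To handle this, I would use the workload potential instead of raw counts. Since both queues start empty at $s$ and see identical arrivals, the difference $W(t)-W^*(t)$ is a martingale plus the accumulated drift differences. The learner's extra drift is at most $\sum_{i=s}^{t-1} g_i$, and the oracle's best-pair drift dominates the learner's best-pair term. Converting workload back to queue length loses a factor $1/\underline p$, plus a further factor $\eps^{-1}$ for the comparison of the two queues' relaxation. This gives
\begin{equation*}
 R_t\lesssim\frac{1}{\eps\underline p}\sum_{i=t-\ell}^{t-1}\E[g_i]+O(1/t).
\end{equation*}
Each $i$ in the window is at least $t/2$ for $t\ge t_0$, so \Cref{lem:departure-rate-gap} applies with $\beta_i\le\beta_t$ and $\log i\le\log t$. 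Each term is then $O(\eps^{-5/2}\underline p^{-2}\beta_t\log t\sqrt{\lambda d\log(e/\lambda)/t})$.

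The count gives $\ell\cdot\eps^{-1}\underline p^{-1}\approx d\beta_t^2\log t\,\eps^{-4}\underline p^{-4}$, which overshoots the stated powers. So instead of multiplying by $\ell$ I would expect to need the $\eps^{-2}$ relaxation scale: only the last $O(\eps^{-2}\log t)$ rounds effectively matter after the martingale and drift concentrate. The target bookkeeping is $\eps^{-5/2}\cdot\eps^{-2}=\eps^{-9/2}$, the extra $\log t$, and $\underline p^{-2}\cdot\underline p^{-2}$ from two workload-to-queue conversions. I would set $t_0$ so that $t-\ell\ge t/2$ and the thresholds of \Cref{lem:departure-rate-gap} hold on the whole window.

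The main obstacle is this coupling step: showing the excess queue length at $t$ is governed by gaps over an $O(\eps^{-2}\log t)$-length window rather than the full busy period, despite differing queue contents. I would first try a contraction argument on $W-W^*$ using the common negative drift $-\eps\underline p$.
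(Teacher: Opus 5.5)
There is a genuine gap. It sits exactly at the step you flag as the main obstacle, and the workaround you sketch does not close it.

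Coupling the learner directly with the oracle from a ``last common empty round'' fails on several counts.
\begin{itemize}
\item The emptiness events you would get from \Cref{lem:recent-empty} and its oracle analogue concern each queue separately. There is no pathwise ordering between the learner's and the oracle's job sets, so these events do not produce a round at which both queues are empty.
\item Even from a common empty start, a bound on $W(t)-W^*(t)$ does not transfer to $Q(t)-Q^*(t)$. You only have $\underline p\,Q\le W\le Q$ for each system separately, so the best conversion is $Q(t)-Q^*(t)\le W(t)/\underline p-W^*(t)$. This leaves a non-decaying term $(1/\underline p-1)\E[W^*(t)]$, not a factor $1/\underline p$ on the difference.
\item A contraction argument on $W-W^*$ has nothing to contract with. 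On nonempty rounds both systems have the same constant drift $-\eps\underline p$, up to the learner's gap term. A discrepancy therefore has no restoring drift and disappears only when a queue empties.
\item Even granting the coupling, summing $\E[g_i]$ over a busy-period window overshoots the stated powers, as you note. The reduction to the last $O(\eps^{-2}\log t)$ rounds is asserted rather than derived. The same holds for the bookkeeping that attributes $\underline p^{-4}$ to ``two workload-to-queue conversions'' and adds an extra $\eps^{-1}$.
\end{itemize}

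The missing idea is the one-switch telescoping of \Cref{lem:switching-decomposition}. Let $\pi_i$ follow the learner through round $i$ and the oracle afterwards. Consecutive policies then differ in a single decision and share the oracle's priority rule from then on. On the event that only $\pi_{i-1}$ completes a job at round $i$, you get $\calX_j^{\pi_{i-1}}\subseteq\calX_j^{\pi_i}$ for $j\ge i+1$; otherwise the difference is nonpositive. The difference lies in $\{-1,0,1\}$ and vanishes as soon as $Q^{\pi_i}$ empties. This sidesteps the context mismatch entirely and gives $R_t\le\sum_{i<t}\E[g_iq_{i,t}]$.

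\Cref{lem:oracle-forgetting} then bounds the persistence probability by $\ind{W(i)+1>\eps\underline p(t-i)/32}+e^{-\eps^2\underline p^2(t-i)/128}$. It uses only the oracle's workload drift and the learner's workload at round $i$.

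The localization you were looking for comes from splitting the range of $i$:
\begin{itemize}
\item for $t-i\gtrsim\ell_{0,t}/(\eps\underline p)$, use \Cref{lem:recent-empty}-(ii);
\item for the intermediate range, use the exponential workload tail in \Cref{lem:service-error}-(ii);
\item only for the last $\ell_{1,t}=\lceil128\log t/(\eps\underline p)^2\rceil$ rounds, use $q_{i,t}\le1$ together with \Cref{lem:departure-rate-gap}.
\end{itemize}
That final window supplies the factor $(\eps\underline p)^{-2}\log t$. Multiplied by $\eps^{-5/2}\underline p^{-2}\beta_t\log t\sqrt{\lambda d\log(e/\lambda)/t}$, it gives the stated bound. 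The first two ranges contribute only terms of order $\mathrm{poly}(\log t)/t$, which are absorbed through \Cref{eq:upper-threshold}.
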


Since $\beta_t=\mathcal O(\sqrt{d\log t})$, the WISE algorithm attains $\widetilde{\mathcal O}(t^{-1/2})$ queue length regret at every sufficiently large time without knowing $t$.

\begin{proof}[Proof sketch]
Here $C$ may depend on $d,\kappa,S,\underline p$, but not on $\eps,\lambda,t$, and may increase between inequalities. Let $\pi_i$ follow the learner through round $i$ and the oracle afterwards, with queue length $Q^{\pi_i}(j)$ at round $j$. The coupling of \citet{bae2026queue} (\Cref{lem:switching-decomposition}) gives $R_t\le\sum_{i=1}^{t-1}\E[g_iq_{i,t}]$, where $g_i$ is the departure rate gap in \Cref{sec:previous-drift} and, for $g_i>0$,
\begin{equation*}
 q_{i,t}=\Pp\bigl(Q^{\pi_i}(t)-Q^{\pi_{i-1}}(t)=1
 \mid\calF_i,\ Q^{\pi_i}(i+1)-Q^{\pi_{i-1}}(i+1)=1\bigr).
\end{equation*}
Set $q_{i,t}=0$ when $g_i=0$. The conditioning event is a service failure under $\pi_i$ and a success under $\pi_{i-1}$ at round $i$. Thus $q_{i,t}$ measures whether the resulting extra job still affects queue length at $t$. On this event, $\calX_j^{\pi_{i-1}}\subseteq\calX_j^{\pi_i}$ for $j\ge i+1$. Therefore,
\begin{equation*}
 Q^{\pi_i}(j)=0\text{ for some }j\in[i+1,t]
 \quad\Longrightarrow\quad
 Q^{\pi_i}(t)=Q^{\pi_{i-1}}(t).
\end{equation*}
The workload of $Q^{\pi_i}$ at $i+1$ is at most $W(i)+1$. Its oracle decisions give workload drift at most $-\eps/C$ until $Q^{\pi_i}$ empties. If $W(i)+1$ is small compared with the remaining $t-i$ rounds, Azuma-Hoeffding (\Cref{lem:aux-azuma}) bounds the probability that $Q^{\pi_i}(j)>0$ throughout $[i+1,t]$. This gives (\Cref{lem:oracle-forgetting})
\begin{equation}
 \E[q_{i,t}]\le
 \Pp\bigl(W(i)+1>\eps(t-i)/C\bigr)
 +e^{-\eps^2(t-i)/C}.
 \label{eq:sketch-forgetting}
\end{equation}
Since $0\le g_i,q_{i,t}\le1$, we can bound $\E[g_iq_{i,t}]$ by either $\E[q_{i,t}]$ or $\E[g_i]$. We use the former for the first two intervals and the latter for the last. We choose $t-\tau_{1,t}=\Theta(\log^2 t/\eps^4)$ and $t-\tau_{2,t}=\Theta(\log t/\eps^2)$.

\para{First interval ($1\le i\le\tau_{1,t}$).} The workload bound of order $\log^2 i/\eps^3$ suffices here because we choose $\tau_{1,t}$ so that $\eps(t-i)/C\ge C\log^2 i/\eps^3+1$. Thus the probability in \Cref{eq:sketch-forgetting} is at most $\Pp(W(i)>C\log^2 i/\eps^3)$. For $i>\eps t/C$, \Cref{lem:recent-empty}-(ii), using \Cref{eq:workload-interval-tools}, bounds this by $C/(\eps^2i^2)$. For smaller $i$, $W(i)+1\le i$ makes the original probability zero. Summing these probabilities and the exponential terms gives $\sum_{i=1}^{\tau_{1,t}}\E[g_iq_{i,t}]\le C/(\eps^3t)$.

\para{Second interval ($\tau_{1,t}<i\le\tau_{2,t}$).} Now $\eps(t-i)/C$ can be smaller than $C\log^2 i/\eps^3+1$, so \Cref{lem:recent-empty}-(ii) no longer suffices. The learning bounds in \Cref{eq:fresh-context-tools,eq:arrival-transfer-tools} give the tail (\Cref{lem:service-error}-(ii)), valid for every $z\ge0$, $\Pp(W(i)>z)\le Ce^{-\eps z/C}/\eps^2 +C(\eps+\lambda)\log^7i/(\eps^{10}i)$. Applying it with $z=\eps(t-i)/C-1$ bounds the probability in \Cref{eq:sketch-forgetting}. Summing over $O(\log^2 t/\eps^4)$ rounds gives $\sum_{i=\tau_{1,t}+1}^{\tau_{2,t}}\E[g_iq_{i,t}] \le C/(\eps^4t) +C(\eps+\lambda)\log^9t/(\eps^{14}t)$.

\para{Third interval ($\tau_{2,t}<i<t$).} A service error just before $t$ can still affect $Q(t)$, so we bound the departure rate gap directly using $\E[g_iq_{i,t}]\le\E[g_i]$. Applying \Cref{lem:departure-rate-gap} over the remaining $O(\log t/\eps^2)$ rounds gives the leading term in the theorem. The conditions in \Cref{eq:upper-threshold} bound the first two contributions by that term. See \Cref{app:upper} for the full proof and cutoff choices.
\end{proof}

\begin{figure}[!t]
 \centering
 \includegraphics[width=\linewidth]{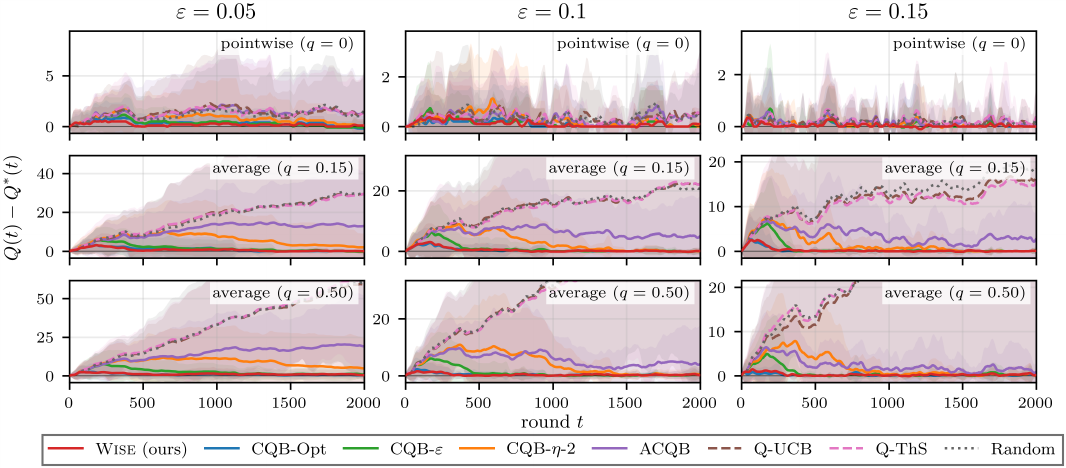}
\caption{Queue length regret averaged over ten runs, with $\pm2$ sample standard deviations and $25$-round smoothing. Columns use normalized capacity slack $\eps=0.05,0.1,0.15$. Rows use pointwise, average $(q=0.15)$, and average $(q=0.50)$ distributions, where $q=\Pp_{X\sim\calD}(p^*(X)<\lambda)$.}
 \label{fig:experiments}
\end{figure}

\subsection{Lower Bounds}
\label{sec:lower-bounds}

We prove lower bounds for two questions. \Cref{thm:time-lower} determines how fast regret can decay with $t$ when the normalized capacity slack is bounded away from zero. \Cref{thm:heavy-traffic-lower} shows how a small normalized capacity slack increases regret.

For \Cref{thm:time-lower}, we sharpen the $\lambda$ dependence in \citet{bae2026algorithm} by counting service observations. In our construction, jobs require $O(1)$ expected service attempts, giving $O(\lambda t)$ observations in expectation. Service probabilities separated by order $\min\{1,1/\sqrt{\lambda t}\}$ can therefore remain hard to distinguish. An independent arrival before the last decision forces service with probability $\lambda$, giving a regret lower bound of order $\lambda\min\{1,1/\sqrt{\lambda t}\} =\min\{\lambda,\sqrt{\lambda/t}\}$. See \Cref{app:lower-bound} for the full construction, including $d$ and $K$.
\begin{theorem}
\label{thm:time-lower}
There is an absolute constant $C_0\ge1$ such that for every $d\ge2$, $K\ge2$, $\lambda\in(0,1/4]$, and $t\ge3$, there is a finite family $\mathcal V$ of problem instances with $S=1/4$ and normalized capacity slack $\eps=1-2\lambda\ge1/2$ for which every algorithm $\pi$ satisfies
\begin{equation*}
 \sup_{\nu\in\mathcal V}R_t^\pi(\nu)
 \ge C_0^{-1}\min\{\lambda,\min\{d,\log K\}\sqrt{\lambda/t}\}.
\end{equation*}
\end{theorem}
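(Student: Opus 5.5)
We reduce the regret at time $t$ to a decision error on a single job that arrives alone, and then apply standard information-theoretic lower bounds (Assouad for the $d$ factor, Fano for the $\log K$ factor) with the information budget set by the number of service observations.

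\textbf{Construction.} The instances have two context types and probabilities close to $1/2$, since $S=1/4$ forces $p\in[\mu(-1/4),\mu(1/4)]$. With high probability every arriving job is a ``fresh'' job. Its best service probability is $p^*\ge1/2$, so $\E[1/p^*]\le2$ and the slack is $\eps=1-2\lambda\ge1/2$. The slack is exactly $1-2\lambda$ if we tune $p^*=1/2$ on the best server and keep the other servers strictly below.

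\textbf{Dimension component.} Index the unknown parameter by $\sigma\in\{\pm1\}^{d'}$ with $d'=\lfloor d/2\rfloor$ (or $d-1$), and set $\v\theta_\sigma=\Delta\sum_j\sigma_j\v e_j/\sqrt{d'}$ plus an offset coordinate. The context is a uniformly random coordinate $j$, with two servers whose features are $\pm\v e_j/\sqrt{\cdot}$ plus the offset. The best server for context $j$ is then determined by $\sigma_j$, and the gap is $\Delta'=\Theta(\Delta/\sqrt{d'})$ per coordinate after normalization.

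\textbf{Server-count component.} For the $\log K$ term, use a single context and $K$ servers. One hidden server is better by $\Delta'$, and the family has $K$ members, so Fano applies.

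\textbf{Reduction.} Consider the event $E$ that the queue is empty at round $t-1$ and $A(t-1)=1$. In that case the oracle and the learner each hold exactly one job at round $t$. We actually need $Q(t)$ after the service at round $t-1$, so I would align indices so that the final decision is made on a singleton queue.

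Under a coupling with shared arrivals, the oracle's queue is dominated, so $Q^\pi(t)-Q^*(t)\ge0$ can be arranged. On $E$, choosing a wrong server at the last decision costs $\Delta'$ in expected departure. This gives
\[
R_t\ge c\lambda\,\Pp(E')\,\Delta'\,\Pp(\text{wrong choice}),
\]
where $E'$ is the event that both queues are empty before the last arrival. Since $\eps\ge1/2$, stationarity arguments give $\Pp(E')\ge c$.

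The subtle point is lower-bounding $Q^\pi(t)-Q^*(t)$ rather than only the gap at one step. My first attempt would be to use the decomposition of \Cref{lem:switching-decomposition} in reverse, or to couple so that the learner's queue dominates the oracle's. In that coupling, the final-round error contributes directly and all earlier contributions are nonnegative. I expect this coupling/nonnegativity step to be the main obstacle.

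\textbf{Information bound.} The learner observes at most $N_t$ Bernoulli outcomes, one per nonempty round. Under a stable queue, $\E[N_t]\le C\lambda t$, because each job needs $O(1)$ expected attempts and every nonempty round serves a job.

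For Bernoulli probabilities near $1/2$ differing by $\Delta'$, the KL divergence per observation is $O(\Delta'^2)$. In the Assouad case, observations that touch coordinate $j$ number about $\lambda t/d'$, so flipping $\sigma_j$ costs KL $O(\lambda t\Delta^2/d'^2)$. In the Fano case, the mutual information is $O(\lambda t\Delta'^2/K)$ relative to $\log K$.

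\textbf{Choice of $\Delta$ and combination.} Choose $\Delta$ so that these KL divergences are constant: $\Delta'\asymp\min\{1,\sqrt{d/(\lambda t)}\}$ per coordinate, and similarly for the server case. The error probability is then at least a constant on a constant fraction of coordinates. Multiplying out gives
\[
R_t\gtrsim\lambda\min\{1,\sqrt{d/(\lambda t)}\}\cdot(\text{dimension factor}).
\]
The bookkeeping is where $\min\{d,\log K\}\sqrt{\lambda/t}$ emerges, capped by $\lambda$ when $\Delta'$ saturates at a constant. I would track the normalization carefully so that the $d$ factor appears linearly as stated. If it does not, I would instead place the $d'$ coordinate errors across the queue, which requires jobs present simultaneously, and accept that refinement as the secondary difficulty.
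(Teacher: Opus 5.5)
Your proposal has a genuine gap at exactly the step you defer to ``bookkeeping'': neither of your constructions can produce the factor $\min\{d,\log K\}$ linearly.

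\textbf{The dimension family loses $\sqrt d$.} In your family each job carries one coordinate $j$, and the last decision only chooses the sign of $\sigma_j$. The gap at that decision is therefore $\Delta'=\Theta(\Delta/\sqrt{d'})$, while flipping $\sigma_j$ costs KL $\Theta(\lambda t\Delta^2/d'^2)$. Keeping this KL constant forces $\Delta\lesssim d'/\sqrt{\lambda t}$. Hence $\Delta'\lesssim\sqrt{d'/(\lambda t)}$, and you only get $R_t\gtrsim\lambda\Delta'\asymp\sqrt{\lambda d/t}$, a factor $\sqrt d$ short.

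Moreover, $\norm{\v\theta_\sigma}_2\approx\Delta\le S=1/4$ caps $\Delta'$ at $O(1/\sqrt{d'})$. So even the $\lambda$ branch of the minimum is not reached; the family certifies only $\min\{\lambda/\sqrt d,\sqrt{\lambda d/t}\}$.

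There is no extra ``dimension factor'' to multiply in. Each round serves one job with gap at most $\Delta'$. Spreading coordinate errors over several queued jobs does not help either: with $\lambda\le1/4$ and $\eps\ge1/2$, only $O(1)$ decisions before $t$ affect $Q(t)$ in expectation.

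\textbf{The Fano part fails for $K\gg d$.} With a shared parameter in $\R^d$, your estimate $I=O(\lambda t\Delta'^2/K)$ requires the $K$ servers to be informationally decoupled, i.e.\ near-orthogonal features, so $K\lesssim d$. If it held for $K\gg d$, Fano would give a lower bound of order $\sqrt{\lambda K\log K/t}$. That grows polynomially in $K$ and contradicts the $K$-free upper bound of \Cref{thm:main-upper}. In the paper, $\log K$ does not come from Fano at all.

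\textbf{The missing idea.} Make a \emph{single} server choice commit to all coordinates at once.
\begin{itemize}
\item Take one context $x_0$ and $2^{d_0}$ servers indexed by $\v u\in\{-1,+1\}^{d_0}$, with $d_0=\min\{d-1,\lfloor\log_2K\rfloor\}$. This is the only place $K$ enters.
\item Use features $\v\phi(x_0,k_{\v u})=(\v e_0+d_0^{-1/2}\sum_a u_a\v e_a)/\sqrt2$ and parameters $\v\theta^{\v v}=\sqrt2(-\eta\v e_0+\eta d_0^{-1/2}\sum_a v_a\v e_a)$. The logit is then $-2\eta\,d_{\rm H}(\v u,\v v)/d_0$, the best service probability is $1/2$, and $\norm{\v\theta^{\v v}}_2=2\eta$.
\item Every observation, on any server, informs every coordinate. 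Flipping $v_a$ costs KL at most $\lambda t\eta^2/(2p_0d_0^2)$. This is the same per-coordinate budget as yours with $\Delta=\eta$.
\item The observation count needs no stability argument: every server succeeds with probability at least $p_0=\mu(-1/4)$, so $p_0\E[N(t-2)]\le\E[\sum_iD(i)]\le\lambda t$.
\item So $\eta\asymp\min\{1/8,d_0/\sqrt{\lambda t}\}$ is allowed. Assouad gives expected Hamming error at least $3d_0/8$ at the one decision in round $t-1$, so the gap there is $\Theta(\eta)$ rather than $\Theta(\eta/\sqrt{d_0})$.
\end{itemize}

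\textbf{The step you flag as the main obstacle is then easy.}
\begin{itemize}
\item Condition on $A(t-2)=1$, which is independent of the history, to force $Q(t-1)>0$. No empty or singleton queue and no stationarity argument is needed.
\item Let $\pi'$ agree with $\pi$ except that it uses the best server at round $t-1$.
\item With one context and all service probabilities at most $1/2$, the uniform coupling gives $Q^{\pi'}(t)\ge Q^*(t)$ pathwise.
\item Hence $R_t\ge\E[g_{t-1}]\ge\lambda\,\E[g_{t-1}\mid A(t-2)=1]=\Omega(\lambda\eta)$.
\end{itemize}
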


Together with \Cref{thm:main-upper}, this establishes rate-optimality in $t$ up to logarithmic factors.

For \Cref{thm:heavy-traffic-lower}, a small normalized capacity slack lets service errors persist for $\Theta(\eps^{-2})$ rounds. In our construction, $\lambda=1/4$ and the best service probability is $p^*=\lambda/(1-\eps)$. Since $p^*-\lambda=\lambda\eps/(1-\eps)=\Theta(\eps)$, a queue with $\Theta(\eps^{-1})$ jobs takes $\Theta(\eps^{-2})$ rounds to empty in expectation. Take $q_0=\Theta(\eps^{-1})$ and $\ell=\Theta(\eps^{-2})$ as in \Cref{lem:heavy-traffic-queue}. With constant probability, $Q^\pi(i)\ge q_0$. A service error on this event leaves $Q^{\pi_i}(i+1)=Q^{\pi_{i-1}}(i+1)+1$, and this extra job remains for $\ell$ rounds with probability at least $1/2$ even under the best server. Consequently, $R_t^\pi(\nu)\ge(1/2)\sum_{i=t-\ell}^{t-1} \E^\nu[g_i\ind{Q^\pi(i)\ge q_0}]$. Averaging each expectation over the constructed instances gives a lower bound of order $\min\{1,\min\{d,\log K\}/\sqrt t\}$. Summing over $\ell=\Theta(\eps^{-2})$ rounds gives the $\eps^{-2}$ factor. See \Cref{app:heavy-traffic-lower} for the proof.
\begin{theorem}
\label{thm:heavy-traffic-lower}
There is an absolute constant $C_0\ge1$ such that for every $d,K\ge2$, $0<\eps\le1/8$, and $t\ge64\eps^{-2}$, there is a finite family $\mathcal V_\eps$ of problem instances with $\lambda=1/4$, $S=2$, and normalized capacity slack $\eps$ for which every algorithm $\pi$ satisfies
\begin{equation*}
 \sup_{\nu\in\mathcal V_\eps}R_t^\pi(\nu)
 \ge C_0^{-1}\eps^{-2}\min\{1,\min\{d,\log K\}/\sqrt t\}.
\end{equation*}
\end{theorem}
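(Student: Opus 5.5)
The proof will build a hard family of instances, reduce regret to accumulated departure rate gaps over a window of $\ell=\Theta(\eps^{-2})$ rounds on a high-queue event, and then lower bound the averaged gaps by a standard information-theoretic (Assouad/Fano) argument over a packing of parameters.

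\textbf{Construction.} Fix $\lambda=1/4$ and a single context (or a context distribution concentrated on it). The feature map makes the $K$ servers, or $d$ directions, encode a hypercube or packing of parameters $\v\theta_v$, $v\in\mathcal V_\eps$. Under every instance the best server has probability $p^*=\lambda/(1-\eps)$, so the normalized capacity slack equals $\eps$. Suboptimal servers sit a gap $\Delta$ below $p^*$, with $\Delta=\min\{c,\,c\min\{d,\log K\}/\sqrt t\}$ in the aggregated sense used for \Cref{thm:time-lower}. Every instance has $\norm{\v\theta_v}\le S=2$, and the probabilities stay inside $[\mu(-2),\mu(2)]$.

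\textbf{Step 1: a high-queue event.} Following \Cref{lem:heavy-traffic-queue}, I will show that for every policy and every $i\in[t-\ell,t-1]$, $\Pp(Q^\pi(i)\ge q_0)\ge c$ with $q_0=\Theta(\eps^{-1})$. The drift under any policy is at least $\lambda-p^*=-\Theta(\eps)$, and the variance per round is $\Theta(1)$. A reflected random walk with drift $-\Theta(\eps)$ run for $t\ge64\eps^{-2}$ rounds therefore has stationary scale $\Theta(\eps^{-1})$ and reaches it by time $t-\ell$. To make this policy-independent, I will couple the learner's queue to the queue under the best server: the learner's queue is pathwise stochastically larger.

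\textbf{Step 2: persistence of an extra job.} Next, using the switching decomposition \Cref{lem:switching-decomposition} as an equality-type lower bound, I will show the following. If a service error occurs at round $i\ge t-\ell$ while $Q^\pi(i)\ge q_0$, then the extra job created in $\pi_i$ versus $\pi_{i-1}$ survives until $t$ with probability at least $1/2$. The job disappears only when $Q^{\pi_i}$ empties. Starting from $q_0$ with drift $-\Theta(\eps)$, emptying within $\ell\le c'\eps^{-2}$ rounds has probability at most $1/2$, by Doob's maximal inequality or Azuma applied to the martingale part (choosing $q_0$ and $\ell$ compatible). This yields $R_t^\pi(\nu)\ge\frac12\sum_{i=t-\ell}^{t-1}\E^\nu[g_i\ind{Q^\pi(i)\ge q_0}]$.

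\textbf{Step 3: information lower bound.} Finally, I will average over $\nu\in\mathcal V_\eps$. For each $i$, $\E^\nu[g_i\ind{Q^\pi(i)\ge q_0}]$ is at least $\Delta$ times the probability of choosing a wrong server on the event, minus nothing. Observations number at most $t$, and each contributes KL divergence $O(\Delta^2)$ per flipped coordinate. Assouad's lemma (for $d$) or Fano's lemma (for $\log K$) then shows the average probability of a wrong choice is a constant. The event $\{Q^\pi(i)\ge q_0\}$ is the delicate part. I would handle it via $\Pp(\text{wrong}\cap\text{high})\ge\Pp(\text{high})-\Pp(\text{right})$, which requires the error probability to exceed $1-c/2$. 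A cleaner alternative is to make the high-queue probability from Step 1 at least $1-c''$ with a constant-fraction error guarantee. Summing over the $\ell=\Theta(\eps^{-2})$ rounds gives $\eps^{-2}\min\{1,\min\{d,\log K\}/\sqrt t\}$.

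I expect Step 3's interaction with the queue event, together with tuning $\Delta$ so that the regret scales as $\min\{d,\log K\}/\sqrt t$ rather than $\sqrt{d/t}$, to be the main obstacle. That tuning requires the multi-coordinate gap sum, with $d$ coordinates each of gap $1/\sqrt t$, as in \Cref{thm:time-lower}.
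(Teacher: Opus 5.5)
\textbf{Overall.} Your architecture matches the paper's. It uses a single context with servers indexed by sign vectors and $p^*=\lambda/(1-\eps)$. A constant-probability high-queue event comes from $Q^\pi\ge Q^*$ and convergence of the best-server queue (\Cref{lem:heavy-traffic-queue}-(i)). The extra job persists over $\ell=\Theta(\eps^{-2})$ rounds (\Cref{lem:heavy-traffic-queue}-(ii)). Finally, a telescoping switching sum is restricted to $[t-\ell,t-1]$. Restricting the sum needs every term $Q^{\pi_i}(t)-Q^{\pi_{i-1}}(t)$ to be nonnegative. Here that follows from the monotone coupling with best-server continuation, not from \Cref{lem:switching-decomposition}, which only gives an upper bound.

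\textbf{The gap.} It is exactly the step you flag: neither of your options for intersecting the testing error with $H=\{Q^\pi(i)\ge q_0\}$ is established.
\begin{itemize}
\item The first option cannot work. A pairwise (Le Cam/Assouad) argument only guarantees $\Pp^{\v v}(u_{i,a}\ne v_a)+\Pp^{\v v^{(a)}}(u_{i,a}\ne v^{(a)}_a)\ge1-\mathrm{TV}$. The policy that always plays $u_{i,a}=+1$ has error exactly $1/2$ averaged over the pair, so no per-coordinate error bound of the form $1-c/2$ is available.
\item The second option needs $\Pp(H^{\mathrm c})<(1-\mathrm{TV})/2$, i.e.\ a high-queue probability above $1/2$. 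Your Step~1 (a constant $c$) does not supply this; the paper's lemma gives only $1/20$. It might be arranged by shrinking $q_0$, and hence $\ell$, within the budget $t\ge64\eps^{-2}$, but you have not done so, and it is unnecessary.
\end{itemize}

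\textbf{The missing idea.} The set $H\cap\{u_{i,a}=+1\}$ is itself an event of the history $((k_j,D(j),A(j))_{j<i},k_i)$, because $Q^\pi(i)=\sum_{j<i}(A(j)-D(j))$. So you can apply the total-variation bound to it directly. For $v_a=+1$,
\[
\Pp^{\v v}(H,u_{i,a}\ne v_a)+\Pp^{\v v^{(a)}}(H,u_{i,a}\ne v^{(a)}_a)
=\Pp^{\v v}(H)-\Pp^{\v v}(H,u_{i,a}=+1)+\Pp^{\v v^{(a)}}(H,u_{i,a}=+1)
\ge\Pp^{\v v}(H)-\mathrm{TV}.
\]
Hence you only need TV below the high-queue probability, which you get by shrinking the gap by a constant factor:
\begin{itemize}
\item The paper takes $\eta=\min\{1/8,d_0/\sqrt t\}/20$, so the KL is at most $t\eta^2/(2d_0^2)\le1/800$ and $\mathrm{TV}\le1/40<1/20\le\Pp^{\v v}(H)$.
\item Summing over pairs gives an average Hamming error on $H$ of at least $d_0/80$.
\item Then $g_i\gtrsim(\eta/d_0)\,d_{\rm H}(\v u_i,\v v)$, and summing over the window yields $\ell\eta\asymp\eps^{-2}\min\{1,d_0/\sqrt t\}$.
\end{itemize}

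\textbf{A smaller point.} Drop the Fano option for $\log K$. Fano over $K$ separate hypotheses gives only $\sqrt{\log K/t}$. The $\log K/\sqrt t$ rate comes from Assouad over $d_0=\min\{d-1,\lfloor\log_2K\rfloor\}$ coordinates, with $2^{d_0}$ servers indexed by sign vectors.
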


\section{Experiments}
\label{sec:experiments}

We evaluate \algname{} with and without context-wise slack. We compare it with implementations of CQB-Opt and CQB-$\eps$ \citep{bae2026queue}, CQB-$\eta$-2 \citep{bae2026algorithm}, ACQB \citep{bae2026learning}, Q-UCB and Q-ThS \citep{krishnasamy2021learning}, and uniformly random selection.

We use $d=5$, $K=10$, $\lambda=0.6$, $\norm{\v\phi(x,k)}_2=5$, and one shared parameter with $\norm{\v\theta^*}_2=S=10$. All distributions have the same normalized capacity slack within each column. The pointwise distribution satisfies context-wise slack, whereas the two average distributions increase both the frequency and the size of violations. All contextual algorithms use constrained regularized logistic estimates, a fixed confidence or sampling radius $\beta=100$, and an initial design matrix $\m V_0=5\m I_5$, with input $\kappa=100$. We run each setting 10 times for $2000$ rounds. WISE has the smallest mean queue length regret over rounds $1$-$2000$ in five of the nine settings and the second-smallest in the remaining four. Its average regret over rounds $1500$-$2000$ is below $1$ in all nine settings, including those without context-wise slack. Implementation and distribution details are in \Cref{app:experiments}.

\label{main-content-end}
\bibliographystyle{plainnat}
\bibliography{ref}
\clearpage
\appendix
\section{Additional Related Work}
\label{app:related}

\para{Queueing systems and learning.} Queueing systems model service allocation in call centers and computing platforms \citep{koole2002queueing,vilaplana2014queuing}. Queueing bandits address scheduling when service rates are unknown and must be learned from service outcomes \citep{krishnasamy2021learning,choudhury2021job,stahlbuhk2021learning}. Their guarantees include queue length regret at a given time \citep{krishnasamy2021learning}, cumulative queue length or holding-cost regret \citep{stahlbuhk2021learning,krishnasamy2018learning}, and the maximum increase in time-averaged queue length due to learning \citep{freund2023quantifying}. The $\widetilde{\mathcal O}(t^{-1})$ guarantee of \citet{krishnasamy2021learning} depends on a fixed difference between the best and second-best service rates, whereas our lower bounds allow this difference to decrease with $t$. Contextual extensions assign a fixed feature to each queue \citep{kim2024queueing} or allow heterogeneous contexts for individual jobs \citep{bae2026queue,bae2026learning,bae2026algorithm}. For the latter setting, existing bounds decay as $\widetilde{\mathcal O}(t^{-1/4})$ without horizon knowledge \citep{bae2026learning} or as $\widetilde{\mathcal O}(t^{-1/2})$ with a known horizon \citep{bae2026algorithm}. These results require context-wise slack and a covariance lower bound. We replace context-wise slack by capacity slack and remove the covariance lower bound. Other learning approaches for queueing systems study MaxWeight scheduling and stability under unknown service rates \citep{neely2012max,stahlbuhk2019scheduling,yang2023learning,wijewardena2025bandit}, scheduling with predicted service times or job classes \citep{mitzenmacher2019scheduling,scully2021uniform,lee2024design,mitzenmacher2025queueing}, and learning when to defer decisions to congested human reviewers \citep{lykouris2024learning}.

\para{Logistic bandits.} Our service model connects CQB to logistic bandits, which model binary feedback through a logistic function of action features and an unknown parameter. Following the generalized linear bandit formulation of \citet{filippi2010parametric}, subsequent work improves the dependence on curvature and derives instance-dependent regret bounds \citep{faury2020improved,abeille2021instance,lee2024unified}. Related directions include sharper confidence bounds for finite-action problems \citep{li2017provably,jun2021improved}, computationally efficient updates \citep{faury2022jointly,zhang2023online}, and Thompson sampling \citep{dong2019performance,kim2023double}. \citet{bae2026logistic} remove the assumption of a positive lower bound on the minimum eigenvalue of the context covariance matrix. These works study cumulative reward regret, whereas our analysis requires a bound on the expected departure rate gap at each specified time.

\para{Elimination algorithms.} Elimination algorithms use confidence bounds to discard actions that cannot be optimal. In Auer's framework, exploration selects a remaining action whose confidence width exceeds a prescribed threshold \citep{auer2002using,chu2011contextual,li2017provably,jun2021improved}. Related methods for Gaussian process optimization select the action with the largest posterior variance, either in combination with UCB \citep{contal2013parallel} or after eliminating suboptimal actions \citep{cai2021lenient,li2022gaussian,iwazaki2025improved,ma2026batched}. Confidence interval intersections also appear in level-set estimation and safe optimization \citep{gotovos2013active,sui2015safe}. The BLCE algorithm of \citet{yu2026practical} explores by selecting $\v x_i\in\argmax_{\v x\in\calC_i}\norm{\v x}_{\m H_{i-1}^{-1}}$ among surviving candidates $\calC_i$, where $\m H_i=\m I+\sum_{j=1}^i\v x_j\v x_j^\top$. For $\ell$ exploration rounds with $\norm{\v x}_2\le1$ and a fixed elimination rule, their analysis gives $\E[\max_{\v x\in\calC_{\ell+1}} \norm{\v x}_{\m H_\ell^{-1}}^2] \le\ell^{-1}\sum_{i=1}^\ell \E[\norm{\v x_i}_{\m H_{i-1}^{-1}}^2] \le(2d/\ell)\log(1+\ell/d)$, where $\calC_{\ell+1}$ comes from a fresh i.i.d.\ context set. The first inequality uses context independence and the selection rule, and the second follows from the elliptical potential lemma. This analysis does not directly apply to CQB because the set of waiting jobs depends on past service decisions and departures, so the first inequality cannot be justified by context independence. We instead combine observations after empty rounds with workload bounds and interval intersections to control $\E[g_t]$ (\Cref{sec:learning-tools}).

\section{Proof of \texorpdfstring{\Cref{thm:main-upper}}{Theorem \getrefnumber{thm:main-upper}}}
\label{app:upper}

We first prove the lemmas used to bound $\E[g_iq_{i,t}]$ and then prove \Cref{thm:main-upper} in \Cref{app:upper-proof}. Throughout this section, $C_4$ denotes a positive absolute constant and may increase between inequalities. Set $C_2=64/(\eps\underline p)^3$, $C_3=128/(\eps\underline p)^2$, and
\begin{equation}
 \ell_{0,t}=\lceil C_2d\beta_t^2\log t\rceil,
 \qquad \ell_{1,t}=\lceil C_3\log t\rceil.
 \label{eq:proof-lengths}
\end{equation}
These lengths are used only in the proof. We use the conditions
\begin{equation}
 t\ge3,\qquad
 t\ge\frac{C_4d\beta_t^2\log t}{(\eps\underline p)^3},
 \label{eq:learning-threshold}
\end{equation}
with a sufficiently large absolute $C_4$.

\subsection{Proofs of \texorpdfstring{Lemmas~\ref{lem:confidence} and~\ref{lem:width-count}}{Lemmas \getrefnumber{lem:confidence} and \getrefnumber{lem:width-count}}}
\label{app:new-confidence}

We use the filtration $\calF_t$ in \Cref{sec:setting}, including $U_0$. For $t\ge0$, define
\begin{equation*}
 \calC_t=\{\v\theta\in\calB:
       \norm{\v\theta-\widehat{\v\theta}_t}_{\m V_t}
          \le\beta_t\}.
\end{equation*}
Recall that $w_t(x,k)=\overline p_t(x,k)-\underline p_t(x,k)$. Write $w_t=w_t(x_t,k_t)$ when $Q(t)>0$ and $w_t=0$ otherwise. \Cref{lem:confidence} gives confidence bounds and relates $g_t$ to $w_t$. \Cref{lem:width-count} bounds the number of rounds with $w_t>r$.

\begin{lemma}
\label{lem:confidence}
\leavevmode
\begin{enumerate}
\renewcommand{\labelenumi}{(\roman{enumi})}
\item For every $t_1\ge1$, conditional on $U_0$,
\begin{equation}
 \Pp\!\left(\exists t\ge t_1:\v\theta^*\notin\calC_t\mid U_0\right)
 \le(t_1+1)^{-2}.
 \label{eq:new-confidence-probability}
\end{equation}
\item If $\v\theta^*\in\calC_i$ for every $i=\tau_{m_t}-1,\ldots,t-1$, then
\begin{equation*}
 \underline p_t(x,k)\le p(x,k)\le\overline p_t(x,k)
 \quad\text{for all }(x,k)\in\calX\times[K],
 \qquad g_t\le2w_t.
\end{equation*}
\end{enumerate}
\end{lemma}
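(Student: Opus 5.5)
The plan is to prove (i) with the standard self-normalized argument for regularized constrained logistic MLE, and to obtain (ii) by a purely deterministic argument about nested intervals together with elimination and widest selection.

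\textbf{Part (i).} Let $\mathcal L_t(\v\theta)$ be the objective in \Cref{eq:estimator}. Its gradient is
\begin{equation*}
\nabla\mathcal L_t(\v\theta)=\sum_{i\le t}\bigl(\mu(\ip{\v\phi_i}{\v\theta})-D(i)\bigr)\v\phi_i+\v\theta .
\end{equation*}
Empty rounds have $\v\phi_i=\v0$ and contribute nothing. For $\v\theta_1,\v\theta_2\in\calB$, the mean value theorem gives
\begin{equation*}
\nabla\mathcal L_t(\v\theta_1)-\nabla\mathcal L_t(\v\theta_2)=\m G(\v\theta_1-\v\theta_2),
\qquad
\m G=\m I+\sum_i\alpha_i\v\phi_i\v\phi_i^\top,
\end{equation*}
with $\alpha_i\ge1/\kappa$ by \Cref{ass:bounded-logistic}. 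Hence $\m G\succeq\m V_t/\kappa$.

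Since $\widehat{\v\theta}_t$ minimizes a convex function over the convex set $\calB$ and $\v\theta^*\in\calB$, first-order optimality gives $\ip{\nabla\mathcal L_t(\widehat{\v\theta}_t)}{\v\theta^*-\widehat{\v\theta}_t}\ge0$. Combining this with the display above,
\begin{equation*}
\kappa^{-1}\norm{\widehat{\v\theta}_t-\v\theta^*}_{\m V_t}^2
\le\ip{-\nabla\mathcal L_t(\v\theta^*)}{\widehat{\v\theta}_t-\v\theta^*}
\le\norm{\nabla\mathcal L_t(\v\theta^*)}_{\m V_t^{-1}}\norm{\widehat{\v\theta}_t-\v\theta^*}_{\m V_t}.
\end{equation*}
Now $\nabla\mathcal L_t(\v\theta^*)=-\v s_t+\v\theta^*$, where $\v s_t=\sum_i(D(i)-p(x_i,k_i))\v\phi_i$. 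Also $\norm{\v\theta^*}_{\m V_t^{-1}}\le S/\sqrt\kappa$. This yields
\begin{equation*}
\norm{\widehat{\v\theta}_t-\v\theta^*}_{\m V_t}\le\kappa\norm{\v s_t}_{\m V_t^{-1}}+\sqrt\kappa S .
\end{equation*}

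The noise $D(i)-p(x_i,k_i)$ is $\calF_i$-conditionally $1/2$-sub-Gaussian, and $\v\phi_i$ is $\calF_i$-measurable. This holds conditionally on $U_0$ because $U_0\in\calF_1$. The self-normalized bound of \citet{abbasi2011improved} is uniform over time for a fixed $\delta$. With prior $\kappa\m I$ and $\det\m V_t\le(\kappa+t/d)^d$, it gives, simultaneously for all $t$,
\begin{equation*}
\norm{\v s_t}_{\m V_t^{-1}}\le\tfrac12\sqrt{d\log(1+t/(d\kappa))+2\log(1/\delta)}
\end{equation*}
with probability at least $1-\delta$. Take $\delta=(t_1+1)^{-2}$. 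Then for every $t\ge t_1$ we have $2\log(1/\delta)=4\log(t_1+1)\le4\log(t+1)$, so the event is contained in $\{\v\theta^*\in\calC_t\ \forall t\ge t_1\}$ by \Cref{eq:confidence-radius}. This proves \Cref{eq:new-confidence-probability}.

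\textbf{Part (ii).} If $\v\theta^*\in\calC_i$, Cauchy--Schwarz gives $|\ip{\v\phi(x,k)}{\widehat{\v\theta}_i-\v\theta^*}|\le\beta_i\norm{\v\phi(x,k)}_{\m V_i^{-1}}$. Since $\mu$ is increasing and $p(x,k)\in[\underline p,\overline p]$, it follows that $p(x,k)\in\calI_i(x,k)$ for every $i=\tau_{m_t}-1,\dots,t-1$. Every intersection computed in \Cref{alg:wise} therefore contains $p(x,k)$ and is never empty, so the freezing branch never triggers. The final $\calJ$ thus contains $p(x,k)$, which gives $\underline p_t(x,k)\le p(x,k)\le\overline p_t(x,k)$.

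For the gap, suppose $Q(t)>0$; otherwise both $g_t$ and $w_t$ are $0$. First, the best pair $(x_t^*,k_t^*)$ lies in $\calZ_t$, because
\begin{equation*}
\overline p_t(x_t^*,k_t^*)\ge p(x_t^*,k_t^*)\ge p(x',k')\ge\underline p_t(x',k')
\end{equation*}
for every $(x',k')$ in the queue. Widest selection then gives $w_t(x_t^*,k_t^*)\le w_t$. Second, the selected pair lies in $\calZ_t$, so $\overline p_t(x_t,k_t)\ge\underline p_t(x_t^*,k_t^*)$. Chaining these facts,
\begin{equation*}
p(x_t^*,k_t^*)\le\underline p_t(x_t^*,k_t^*)+w_t(x_t^*,k_t^*)\le\overline p_t(x_t,k_t)+w_t\le p(x_t,k_t)+2w_t,
\end{equation*}
that is, $g_t\le2w_t$.

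\textbf{Main obstacle.} The delicate step is getting the failure probability $(t_1+1)^{-2}$ uniformly over all $t\ge t_1$. A per-time union bound only gives order $1/t_1$. I would instead use the time-uniform self-normalized inequality and exploit that the radius grows with $\log(t+1)$. A smaller check is that the constrained, regularized MLE gives exactly the $\sqrt\kappa S$ term via the first-order condition. This is where the regularization weight $1$ and prior $\kappa\m I$ must match.
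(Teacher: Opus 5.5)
Your proposal is correct and follows the paper's proof essentially step for step. In part (i) you use the same first-order-optimality and Hessian-lower-bound argument, which the paper isolates as \Cref{lem:aux-mle}, together with the time-uniform self-normalized bound at $\delta=(t_1+1)^{-2}$ and monotonicity of $\log(t+1)$; in part (ii) you use the same nested-intersection containment, membership of $(x_t^*,k_t^*)$ in $\calZ_t$, and widest-selection chain to obtain $g_t\le 2w_t$, and your argument also covers the $i=0$ term (where $\calI_0(x,k)=[\underline p,\overline p]$ contains $p(x,k)$ and $\v\theta^*\in\calC_0$ since $\beta_0=\sqrt\kappa S$).
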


\subsubsection{Proof of \texorpdfstring{\Cref{lem:confidence}}{Lemma \getrefnumber{lem:confidence}}}

\begin{proof}
We first prove part (i). Fix $U_0$. Define
\begin{equation*}
 \eta_i=\bigl(D(i)-\mu(\ip{\v\phi_i}{\v\theta^*})\bigr)\ind{Q(i)>0},
 \qquad
 \v G_t=\sum_{i=1}^t\eta_i\v\phi_i.
\end{equation*}
\Cref{lem:aux-mle} gives
\begin{equation*}
 \norm{\widehat{\v\theta}_t-\v\theta^*}_{\m V_t}
 \le\kappa\norm{\v G_t}_{\m V_t^{-1}}+\sqrt\kappa S.
\end{equation*}

By the conditional Bernoulli service law and Hoeffding's lemma (\Cref{lem:aux-hoeffding}), $\E[\eta_i\mid\calF_i]=0$ and $\E[e^{z\eta_i}\mid\calF_i]\le e^{z^2/8}$ for every $z\in\R$. Theorem~1 of \citet{abbasi2011improved} (\Cref{lem:aux-self-normalized}), with initial matrix $\kappa\m I$ and noise parameter $1/2$, gives, with probability at least $1-\delta$, simultaneously for all $t\ge1$,
\begin{align}
 \norm{\v G_t}_{\m V_t^{-1}}
 &\le\frac12\sqrt{\log\frac{\det\m V_t}{\det(\kappa\m I)}
                         +2\log(1/\delta)}
 \le\frac12\sqrt{d\log(1+t/(d\kappa))+2\log(1/\delta)}.
 \label{eq:new-score-bound}
\end{align}
The second inequality follows from the arithmetic-geometric mean inequality and $\sum_{i=1}^t\norm{\v\phi_i}_2^2\le t$.

Set $\delta=(t_1+1)^{-2}$. Applying \Cref{eq:new-score-bound} gives, with probability at least $1-\delta$, simultaneously for every $i\ge t_1$,
\begin{align*}
 \norm{\widehat{\v\theta}_i-\v\theta^*}_{\m V_i}
 &\le\frac\kappa2\sqrt{d\log(1+i/(d\kappa))+4\log(t_1+1)}
            +\sqrt\kappa S
 \le\beta_i.
\end{align*}
The last inequality follows from $i\ge t_1$ and \Cref{eq:confidence-radius}. Thus $\v\theta^*\in\calC_i$ for every $i\ge t_1$, proving part (i).

We next prove part (ii), first showing that $p(x,k)\in[\underline p_t(x,k),\overline p_t(x,k)]$. If $\v\theta^*\in\calC_i$ and $i\ge1$, then
\begin{equation*}
 |\ip{\v\phi(x,k)}{\widehat{\v\theta}_i-\v\theta^*}|
 \le\norm{\v\phi(x,k)}_{\m V_i^{-1}}
          \norm{\widehat{\v\theta}_i-\v\theta^*}_{\m V_i}
 \le\beta_i\norm{\v\phi(x,k)}_{\m V_i^{-1}}.
\end{equation*}
The first inequality follows from Cauchy-Schwarz, and the second follows from the definition of $\calC_i$. By the definition of $\calI_i(x,k)$, we have $p(x,k)\in\calI_i(x,k)$. Also $p(x,k)\in\calI_0(x,k)=[\underline p,\overline p]$. Therefore,
\begin{equation*}
 p(x,k)\in\bigcap_{i=\tau_{m_t}-1}^{t-1}\calI_i(x,k)
          =[\underline p_t(x,k),\overline p_t(x,k)].
\end{equation*}
In particular, none of these intersections is empty.

Suppose $Q(t)>0$. To prove $g_t\le2w_t$, we first show that $(x_t^*,k_t^*)\in\calZ_t$. We have
\begin{equation*}
 \overline p_t(x_t^*,k_t^*)
 \ge p(x_t^*,k_t^*)
 \ge\max_{(x,k)\in\calX_t\times[K]}\underline p_t(x,k).
\end{equation*}
Thus $(x_t^*,k_t^*)\in\calZ_t$, and
\begin{align*}
 g_t
 &=p(x_t^*,k_t^*)-p(x_t,k_t)\\
 &\le\overline p_t(x_t^*,k_t^*)-\underline p_t(x_t,k_t)\\
 &=w_t(x_t^*,k_t^*)
       +\underline p_t(x_t^*,k_t^*)-\underline p_t(x_t,k_t)\\
 &\le w_t(x_t^*,k_t^*)
       +\overline p_t(x_t,k_t)-\underline p_t(x_t,k_t)
 \le2w_t,
\end{align*}
where the first inequality follows from $p(x_t^*,k_t^*)\le\overline p_t(x_t^*,k_t^*)$ and $p(x_t,k_t)\ge\underline p_t(x_t,k_t)$, the second follows from $\underline p_t(x_t^*,k_t^*)\le\overline p_t(x_t,k_t)$ since $(x_t,k_t)\in\calZ_t$, and the last follows from $w_t(x_t^*,k_t^*)\le w_t$ by \Cref{alg:wise}. For $Q(t)=0$, $g_t=w_t=0$.
\end{proof}

\begin{lemma}
\label{lem:width-count}
For $1\le t_1\le t_2$, $0<r\le1$, and $\ell=t_2-t_1+1$,
\begin{equation*}
 \sum_{i=t_1}^{t_2}\ind{w_i>r}
 \le\frac{d\beta_{t_2}^2}{2r^2}\log(1+\ell/(d\kappa))
\end{equation*}
on every sample path.
\end{lemma}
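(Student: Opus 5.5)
The plan is to reduce the count to the elliptical potential lemma (\Cref{lem:aux-elliptical}) by showing that a selected width exceeding $r$ forces a large normalized feature norm $\norm{\v\phi_{i'}}_{\m V_{i'-1}^{-1}}$ at a round where an observation is actually recorded. Only nonempty rounds matter, since $w_i=0\le r$ when $Q(i)=0$.

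First, fix a nonempty round $i\in[t_1,t_2]$ with $w_i>r$. The interval $[\underline p_i,\overline p_i]$ is nested inside $\calI_{i-1}(x_i,k_i)$, unless the intersection was frozen to a point, in which case $w_i=0$. Hence $w_i\le|\calI_{i-1}(x_i,k_i)|$. Because $\dot\mu\le1/4$, this length is at most
\[
\tfrac{1}{2}\beta_{i-1}\norm{\v\phi_i}_{\m V_{i-1}^{-1}}\le\beta_{t_2}\norm{\v\phi_i}_{\m V_{i-1}^{-1}},
\]
where $\beta$ is nondecreasing in $i$. For $i-1=0$ one uses $\calI_0=[\underline p,\overline p]$, or simply notes that the same bound holds with $\m V_0=\kappa\m I$ and $\beta_0=\sqrt\kappa S$. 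I would double-check the initialization conventions here.

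The catch is that $\m V_{i-1}$ includes all past observations, not just those in $[t_1,t_2]$. This only helps. Let $\m V'_{i-1}=\kappa\m I+\sum_{j=t_1}^{i-1}\v\phi_j\v\phi_j^\top\preceq\m V_{i-1}$. Then $\norm{\v\phi_i}_{\m V_{i-1}^{-1}}\le\norm{\v\phi_i}_{\m V'^{-1}_{i-1}}$, and empty rounds contribute $\v\phi_j=0$. So every round with $w_i>r$ satisfies $\norm{\v\phi_i}^2_{\m V'^{-1}_{i-1}}>r^2/\beta_{t_2}^2$.

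Next, I will sum over these rounds. The quantity $x:=\norm{\v\phi_i}^2_{\m V'^{-1}_{i-1}}\le\norm{\v\phi_i}_2^2/\kappa\le1$ (taking $\kappa\ge4$, which holds since $\kappa^{-1}=\dot\mu(S)\le1/4$). For $x\in[0,1]$ we have $\min\{1,x\}\le2\log(1+x)$. The elliptical potential lemma then gives
\[
\sum_{i=t_1}^{t_2}\norm{\v\phi_i}^2_{\m V'^{-1}_{i-1}}\le2\log\frac{\det\m V'_{t_2}}{\det\kappa\m I}\le2d\log(1+\ell/(d\kappa)).
\]
Combining the two steps,
\[
\frac{r^2}{\beta_{t_2}^2}\sum_i\ind{w_i>r}\le2d\log(1+\ell/(d\kappa)).
\]
This is off by a factor $4$ from the claim. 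To recover the stated constant $1/2$, I would use the sharper width bound $w_i\le\tfrac12\beta_{i-1}\norm{\cdot}$ from the $\dot\mu\le1/4$ Lipschitz constant. Together with $x\le 1/\kappa\le1/4$, this allows the tighter inequality $x\le(1+\text{small})\log(1+x)$, which yields a bound of the form $\frac{d\beta^2}{2r^2}\log(\cdot)$.

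I expect the main obstacle to be exactly this constant bookkeeping. The rest of the argument is pathwise and deterministic, so no probability is involved. The key structural point is that nesting of intersections lets the width at round $i$ be bounded by the fresh confidence interval $\calI_{i-1}$.
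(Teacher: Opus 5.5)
Your proposal is correct and is essentially the paper's proof. Nesting of the intersections plus $\dot\mu\le1/4$ gives $w_i\le\frac{\beta_{i-1}}{2}\norm{\v\phi_i}_{\m V_{i-1}^{-1}}$, and then $\m V_{i-1}\succeq\kappa\m I+\sum_{j=t_1}^{i-1}\v\phi_j\v\phi_j^\top$ and the elliptical potential lemma finish it; the paper writes the count as $\ind{w_i>r}\le w_i^2/r^2$, which is equivalent to your thresholding.

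The factor $4$ you lose comes only from dropping the $1/2$ when you weakened $\frac12\beta_{i-1}$ to $\beta_{t_2}$. Keeping it gives exactly $d\beta_{t_2}^2/(2r^2)$ with the standard $2d\log(1+\ell/(d\kappa))$ bound, so no sharper $\log(1+x)$ estimate is needed. Your round-$1$ initialization concern is also moot, because $\calX_1=\varnothing$ forces $w_1=0$.
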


\subsubsection{Proof of \texorpdfstring{\Cref{lem:width-count}}{Lemma \getrefnumber{lem:width-count}}}

\begin{proof}
For $i\ge2$, \Cref{alg:wise} gives
\begin{equation*}
 [\underline p_i(x,k),\overline p_i(x,k)]
 \subseteq\calI_{i-1}(x,k)
 \qquad\text{if }w_i(x,k)>0.
\end{equation*}
If an intersection is empty, \Cref{alg:wise} sets $\underline p_i(x,k)=\overline p_i(x,k)$, so $w_i(x,k)=0$. For $w_i(x,k)>0$, $\dot\mu\le1/4$ gives
\begin{align}
 w_i(x,k)
 &\le\mu(\ip{\v\phi(x,k)}{\widehat{\v\theta}_{i-1}}
          +\beta_{i-1}\norm{\v\phi(x,k)}_{\m V_{i-1}^{-1}})\notag\\
 &\quad{}-\mu(\ip{\v\phi(x,k)}{\widehat{\v\theta}_{i-1}}
          -\beta_{i-1}\norm{\v\phi(x,k)}_{\m V_{i-1}^{-1}})
           \notag\\
 &\le\frac{\beta_{i-1}}{2}
                \norm{\v\phi(x,k)}_{\m V_{i-1}^{-1}}.
 \label{eq:new-width-bound}
\end{align}
The same bound holds if $w_i(x,k)=0$.

Since $\m V_{i-1}\succeq\kappa\m I+ \sum_{j=t_1}^{i-1}\v\phi_j\v\phi_j^\top$ and $\norm{\v\phi_i}_2\le1$, Lemma~11 of \citet{abbasi2011improved} (\Cref{lem:aux-elliptical}), applied to $\v\phi_{t_1},\ldots,\v\phi_{t_2}$ with initial matrix $\kappa\m I$, gives
\begin{equation}
 \sum_{i=t_1}^{t_2}\norm{\v\phi_i}_{\m V_{i-1}^{-1}}^2
 \le2d\log(1+\ell/(d\kappa)).
 \label{eq:new-elliptical-sum}
\end{equation}
Combining \Cref{eq:new-width-bound,eq:new-elliptical-sum} gives
\begin{align*}
 \sum_{i=t_1}^{t_2}\ind{w_i>r}
 &\le\frac1{r^2}\sum_{i=t_1}^{t_2}w_i^2\\
 &\le\frac{\beta_{t_2}^2}{4r^2}
       \sum_{i=t_1}^{t_2}
                  \norm{\v\phi_i}_{\m V_{i-1}^{-1}}^2\\
 &\le\frac{d\beta_{t_2}^2}{2r^2}\log(1+\ell/(d\kappa)),
\end{align*}
where the second inequality uses $\beta_{i-1}\le\beta_{t_2}$.
\end{proof}

\subsection{Proofs of \texorpdfstring{Lemmas~\ref{lem:workload} and~\ref{lem:recent-empty}}{Lemmas \getrefnumber{lem:workload} and \getrefnumber{lem:recent-empty}}}
\label{app:new-workload}

\Cref{lem:workload} bounds the change in workload on a nonempty interval. We use this bound in \Cref{lem:recent-empty} to show that $Q(i)=0$ for some $i\in[t_3-\ell_{0,t},t_3]$ with high probability.

\begin{lemma}
\label{lem:workload}
\leavevmode
\begin{enumerate}
\renewcommand{\labelenumi}{(\roman{enumi})}
\item Under any policy, $|W(i+1)-W(i)|\le1$, and
\begin{equation*}
 \E[W(i+1)-W(i)\mid\calF_i]\le-\eps\underline p+g_i
 \qquad\text{if }Q(i)>0.
\end{equation*}
\item Under \Cref{alg:wise}, fix $1\le t_1\le t_2$, $\ell=t_2-t_1+1$, and $\delta\in(0,1)$. With conditional probability at least $1-\delta$ given $\calF_{t_1}$, if $Q(i)>0$ and the waiting intervals contain their true service probabilities for every $i\in[t_1,t_2]$, then
\begin{equation*}
 W(t_2+1)-W(t_1)
 \le-\eps\underline p\,\ell/2
       +\frac{8d\beta_{t_2}^2}{\eps^2\underline p^2}\log(1+\ell/(d\kappa))
       +\sqrt{8\ell\log(1/\delta)}.
\end{equation*}
\end{enumerate}
\end{lemma}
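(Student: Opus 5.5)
For part (i), I would begin with the bounded increments. In each round, at most one job arrives and at most one departs. The increment is $\underline p A(i)/p^*(X_i)-\underline p D(i)/p^*(x_i)$. Each of the two terms lies in $[0,1]$, since $p^*(\cdot)\ge\underline p$ under \Cref{ass:bounded-logistic}. Their difference therefore lies in $[-1,1]$ under any policy.

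For the drift on a nonempty round, I would condition on $\calF_i$. The arrival pair $(A(i),X_i)$ is independent of $\calG_i$, and $A(i)$ is independent of $X_i$. Hence the expected arrival contribution is $\lambda\underline p\,\E_{X\sim\calD}[1/p^*(X)]=(1-\eps)\underline p$ by \Cref{ass:average-workload}. Since $D(i)\mid\calF_i\sim\Bern(p(x_i,k_i))$ and $x_i$ is $\calF_i$-measurable, the expected departure contribution is $\underline p\,p(x_i,k_i)/p^*(x_i)$. Writing $p(x_i,k_i)=p^*(x_i)-(p^*(x_i)-p(x_i,k_i))$ gives exactly \Cref{eq:workload-drift-main}. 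The error term is then bounded by $g_i$ using $\underline p/p^*(x_i)\le1$ and $p^*(x_i)\le p(x_i^*,k_i^*)$.

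For part (ii), I would sum the drift identity over $[t_1,t_2]$ on the event that every round is nonempty. This gives $W(t_2+1)-W(t_1)\le-\eps\underline p\,\ell+\sum_i g_i+M$, where $M=\sum_{i=t_1}^{t_2}(\Delta W(i)-\E[\Delta W(i)\mid\calF_i])$. To get a clean bound on $M$ without conditioning on that event, I would define the martingale differences for all rounds. They are bounded by $2$ in absolute value because $|\Delta W(i)|\le1$. Azuma--Hoeffding (\Cref{lem:aux-azuma}), applied conditionally on $\calF_{t_1}$, then gives $M\le\sqrt{8\ell\log(1/\delta)}$ with probability at least $1-\delta$. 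The bound holds regardless of the event, so I can intersect afterwards.

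On the event that the waiting intervals contain the true probabilities, the argument of \Cref{lem:confidence}-(ii) gives $g_i\le2w_i$. That argument uses only validity of the intervals and the elimination and selection rules, not membership in $\calC_i$. Next I would use $g_i\le1$ to write $g_i\le\eps\underline p/2+\ind{g_i>\eps\underline p/2}$, and note that $\ind{g_i>\eps\underline p/2}\le\ind{w_i>\eps\underline p/4}$. Applying \Cref{lem:width-count} with $r=\eps\underline p/4\le1$ gives a count of at most $\frac{8d\beta_{t_2}^2}{\eps^2\underline p^2}\log(1+\ell/(d\kappa))$. Adding the three pieces yields the stated inequality.

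I expect the main subtlety to be ordering the probabilistic argument correctly. The Azuma bound must be stated for the unconditional martingale, and only afterwards restricted to the event where all rounds are nonempty and all intervals are valid. That event is not known in advance, so conditioning on it would break the martingale property. I would also check that $g_i\le2w_i$ is available under the interval-validity hypothesis alone; if not, I would redo the three-line chain of \Cref{lem:confidence}-(ii) directly from validity.
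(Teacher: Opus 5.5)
Your proposal is correct and follows the paper's proof essentially step for step. Part (i) uses the same bounded-increment and conditional-drift computation. Part (ii) uses the same split of $W(t_2+1)-W(t_1)$ into summed conditional drifts, controlled via $g_i\le 2w_i$ and \Cref{lem:width-count} with $r=\eps\underline p/4$, plus a martingale term bounded by Azuma--Hoeffding with increments at most $2$; the Azuma bound is likewise applied to the full martingale and only then intersected with the nonempty/valid-interval event.
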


\subsubsection{Proof of \texorpdfstring{\Cref{lem:workload}}{Lemma \getrefnumber{lem:workload}}}

\begin{proof}
We first prove part (i). Write $\Delta W(i)=W(i+1)-W(i)$. By the definition of $W(i)$,
\begin{equation*}
 \Delta W(i)=\frac{\underline p A(i)}{p^*(X_i)}
                    -\frac{\underline p D(i)}{p^*(x_i)},
 \qquad |\Delta W(i)|\le1,
\end{equation*}
where the departure term is zero when $Q(i)=0$. The inequality follows because each term belongs to $[0,1]$. When $Q(i)>0$,
\begin{align*}
 \E[\Delta W(i)\mid\calF_i]
 &=\lambda\underline p\,\E_{X\sim\calD}[1/p^*(X)]
             -\underline p\,p(x_i,k_i)/p^*(x_i)\\
 &=-\eps\underline p
        +\frac{\underline p}{p^*(x_i)}
                   \bigl(p^*(x_i)-p(x_i,k_i)\bigr)\\
 &\le-\eps\underline p+g_i\\
 &\le-\eps\underline p/2+\ind{g_i>\eps\underline p/2}.
\end{align*}
The first equality follows from the arrival and service laws, and the second follows from \Cref{ass:average-workload}. The first inequality uses $\underline p\le p^*(x_i)\le p(x_i^*,k_i^*)$, and the second uses $0\le g_i\le1$.

We next prove part (ii). We write
\begin{equation*}
 W(t_2+1)-W(t_1)
 =\sum_{i=t_1}^{t_2}\E[\Delta W(i)\mid\calF_i]
  +\sum_{i=t_1}^{t_2}
       \bigl(\Delta W(i)-\E[\Delta W(i)\mid\calF_i]\bigr).
\end{equation*}
We first bound the first sum. The proof of \Cref{lem:confidence}-(ii) gives $g_i\le2w_i$ whenever the intervals contain their true service probabilities. Therefore, on the event in part (ii),
\begin{align}
 \sum_{i=t_1}^{t_2}\E[\Delta W(i)\mid\calF_i]
 &\le-\eps\underline p\,\ell/2
          +\sum_{i=t_1}^{t_2}\ind{w_i>\eps\underline p/4}\notag\\
 &\le-\eps\underline p\,\ell/2
          +\frac{8d\beta_{t_2}^2}{\eps^2\underline p^2}\log(1+\ell/(d\kappa)),
 \label{eq:new-summed-drift}
\end{align}
where the second inequality follows from \Cref{lem:width-count} with $r=\eps\underline p/4$. For the second sum, since $|\Delta W(i)-\E[\Delta W(i)\mid\calF_i]|\le2$, the Azuma-Hoeffding inequality (\Cref{lem:aux-azuma}) gives
\begin{equation*}
 \Pp\!\left(
 \sum_{i=t_1}^{t_2}\bigl(\Delta W(i)-\E[\Delta W(i)\mid\calF_i]\bigr)
       >\sqrt{8\ell\log(1/\delta)}
          \,\middle|\,\calF_{t_1}\right)\le\delta.
\end{equation*}
Combining this probability bound with \Cref{eq:new-summed-drift} proves part (ii).
\end{proof}

\begin{lemma}
\label{lem:recent-empty}
Let $\ell_{0,t}$ be defined in \Cref{eq:proof-lengths}. If $t$ satisfies \Cref{eq:learning-threshold}, every $t_3\in[t/3,t]$ satisfies
\begin{enumerate}
\renewcommand{\labelenumi}{(\roman{enumi})}
\item $\displaystyle \Pp(Q(i)>0\text{ for all }i\in[t_3-\ell_{0,t},t_3]\mid U_0) \le \frac{C_4}{\eps^2\underline p^2t^2}.$
\item $\displaystyle \Pp(W(t_3)>\ell_{0,t}\mid U_0)\le \frac{C_4}{\eps^2\underline p^2t^2}.$
\item $\displaystyle \E[W(t_3)\mid U_0]\le\ell_{0,t}+\frac{C_4}{\eps^2\underline p^2t}.$
\end{enumerate}
The bounds hold uniformly in $U_0$.
\end{lemma}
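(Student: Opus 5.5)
Part (i) comes from \Cref{lem:workload}-(ii) applied to the window $[t_1,t_2]=[t_3-\ell_{0,t},t_3]$, with length $\ell=\ell_{0,t}+1$. First we check that the window is well placed. By \Cref{eq:learning-threshold} with $C_4$ large, $\ell_{0,t}\le t/6$. Since $t_3\ge t/3$, this gives $t_1\ge t/6\ge1$.

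Let $G$ be the event that $\v\theta^*\in\calC_i$ for every $i\ge\tau_{m_{t_1}}-1$. On $G$, \Cref{lem:confidence}-(ii) makes every waiting interval valid on the window. To bound $\Pp(G^c\mid U_0)$, apply \Cref{lem:confidence}-(i) starting from $\tau_{m_{t_1}}-1\ge(t_1-1)/2$. This gives $\Pp(G^c\mid U_0)=O(t^{-2})$.

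Define $B$ as the event that $Q(i)>0$ for all $i\in[t_1,t_2]$. On $B\cap G$, \Cref{lem:workload}-(ii) applies with $\delta=t^{-2}$, conditionally on $\calF_{t_1}$ and then averaged. Since $W(t_2+1)\ge0$, outside the $\delta$-event we get
\[
W(t_1)\ge \eps\underline p\,\ell/2-\frac{8d\beta_{t}^2}{\eps^2\underline p^2}\log(1+\ell/(d\kappa))-\sqrt{16\ell\log t}.
\]
With $C_2=64/(\eps\underline p)^3$, the first term is at least $32d\beta_t^2\log t/(\eps\underline p)^2$. It dominates the other two once $\log(1+\ell/(d\kappa))\le2\log t$, which holds because $\ell\le t$. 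It also dominates once $\sqrt{\ell\log t}\ll\eps\underline p\,\ell$, which holds because $\ell\ge C_2\log t$. So the right-hand side is at least $\eps\underline p\,\ell/4$.

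The initial workload $W(t_1)$ is the main obstacle. \Cref{lem:workload}-(ii) only forces emptiness if $W(t_1)$ is smaller than the drift, but $W(t_1)$ is not a priori small.

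I would resolve this by stepping backwards rather than bounding $W(t_1)$ directly. On $B$, consider the maximal nonempty run $[s,t_2]$ containing the window. Either $s=1$, in which case $W(s)=0$. Or $Q(s-1)=0$, in which case $W(s)\le1$. In both cases the run starts with workload at most $1$. I would apply \Cref{lem:workload}-(ii) to $[s,t_2]$ instead, taking a union bound over all $s\le t_2-\ell_{0,t}$ with $\delta=t^{-4}$. The run has length $L\ge\ell$, so the same dominance gives
\[
W(t_2+1)\le 1-\eps\underline p\,L/4<0,
\]
which is a contradiction. Validity on runs with $s<\tau_{m_{t_1}}$ is the delicate point, because intervals reset at epochs. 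For such $s$, I would instead apply the drift bound only on $[t_1,t_2]$. I would control $W(t_1)$ via an earlier empty round, or else use the crude bound $W(t_1)\le t_1$ together with an $s$-indexed family. Summing the failure probabilities $O(t\cdot t^{-4})+O(t^{-2})$ gives $C_4/(\eps^2\underline p^2t^2)$. The constant absorbs the validity failure $O((\tau+1)^{-2})$ with $\tau\gtrsim\eps^2\underline p^2 t$ for the earliest relevant start; this is where the $\eps^{-2}\underline p^{-2}$ factor enters.

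For (ii), suppose $W(t_3)>\ell_{0,t}$. Since $|\Delta W|\le1$, the last empty round before $t_3$ is at least $\ell_{0,t}$ rounds earlier, so the event in (i) holds. For (iii), split on that event: $\E[W(t_3)\mid U_0]\le\ell_{0,t}+t\cdot\Pp(W(t_3)>\ell_{0,t}\mid U_0)$, using $W(t_3)\le t$.
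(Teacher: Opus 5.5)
\textbf{Part (i) has a genuine gap.} Your parts (ii) and (iii) are essentially the paper's argument. So is your union bound over the start $s$ of the maximal nonempty run, where the run begins with $W(s)\le1$. The gap in (i) is exactly at the point you call delicate, and neither of your suggested fixes closes it.

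Valid intervals on $[s,t_2]$ require $\v\theta^*\in\calC_j$ from $\tau_{m_s}-1\approx s/2$ onward. So the union bound over $s$ is usable only for $s$ above some threshold $s_0$, at a cost of order $s_0^{-2}$. The complementary event is that the queue stays nonempty throughout $[s_0,t_3]$, and this includes your case $s=1$. Your fixes do not handle it:
\begin{itemize}
\item With your window, the crude bound $W(t_1)\le t_1$ is useless. Here $t_1\ge t/6$, whereas the drift over $[t_3-\ell_{0,t},t_3]$ is only of order $\eps\underline p\,\ell_{0,t}$.
\item Going back only to $\tau_{m_{t_1}}>t_1/2$ fails for the same reason. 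The crude bound gives only $W(\tau_{m_{t_1}})\le\tau_{m_{t_1}}$, which is of order $t$, while the drift over at most $t$ rounds is at most $\eps\underline p\,t/2$.
\item ``Control $W(t_1)$ via an earlier empty round'' presupposes exactly what must be proved.
\end{itemize}

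\textbf{The missing step} is a separate long-window application of \Cref{lem:workload}-(ii), started early enough that the crude bound is beaten by the drift.
\begin{itemize}
\item Take $t_1'=\lfloor\eps\underline p\,t_3/64\rfloor$ and $t_2'=\lfloor t_3/2\rfloor$. Then $W(t_1')\le t_1'\le\eps\underline p\,t_3/64$.
\item Over $[t_1',t_2'-1]$ the linear drift $-\eps\underline p(t_2'-t_1')/2\le-3\eps\underline p\,t_3/16$ dominates $W(t_1')$, the width-count term and the Azuma term.
\item Take $\delta=e^{-\eps^2\underline p^2t_3/512}$, which is at most $t^{-2}$ under \Cref{eq:learning-threshold}. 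This forces an empty round in $[t_1',t_2']$ except with probability $t^{-2}$.
\item This step needs the confidence event $\bigcap_{j\ge\lfloor\eps\underline p\,t/1024\rfloor}\{\v\theta^*\in\calC_j\}$. Because $\tau_{m_i}>i/2$, it covers $[\tau_{m_i}-1,i-1]$ for every $i\ge t_1'$, and it should replace your event $G$.
\item Its failure probability $O((\eps\underline p\,t)^{-2})$ is the real source of the factor $\eps^{-2}\underline p^{-2}$. Your stated $\tau\gtrsim\eps^2\underline p^2t$ would instead give $\eps^{-4}\underline p^{-4}$.
\end{itemize}
The last empty round in $[t_1',t_3]$ then plays the role of your $s-1$. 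Your union bound over $s\ge t_1'$ completes part (i).
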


\subsubsection{Proof of \texorpdfstring{\Cref{lem:recent-empty}}{Lemma \getrefnumber{lem:recent-empty}}}

\begin{proof}
We first prove part (i). Fix $U_0$, and set
\begin{equation*}
 t_1=\lfloor\eps\underline p\,t_3/64\rfloor,
 \qquad t_2=\lfloor t_3/2\rfloor,
 \qquad
 \calE_{0,t}=\bigcap_{j=\lfloor\eps\underline p\,t/1024\rfloor}^{t}
                     \{\v\theta^*\in\calC_j\}.
\end{equation*}
We bound $\Pp(\calE_{0,t}^{\mathrm c}\mid U_0)$ separately. On $\calE_{0,t}$, we consider two cases depending on whether the queue is empty at some round in $[t_1,t_2]$. If it is, we bound the probability that the queue is nonempty for $\ell_{0,t}$ consecutive rounds after an empty round. Taking $C_4\ge1536$ in \Cref{eq:learning-threshold} gives $t\ge C_4$ and
\begin{equation}
 \ell_{0,t}
 \le\frac{64d\beta_t^2\log t}{(\eps\underline p)^3}+1
 \le64t/C_4+1
 \le t/24+1
 \le t/12.
 \label{eq:recent-empty-length-upper}
\end{equation}
To see why these two bounds suffice, suppose that $Q(i)>0$ for every $i\in[t_3-\ell_{0,t},t_3]$. By \Cref{eq:recent-empty-length-upper}, $t_2+\ell_{0,t}\le t_3/2+t/12\le3t_3/4\le t_3$. Either $Q(i)>0$ for all $i\in[t_1,t_2]$, or an empty round occurs in $[t_1,t_2]$. In the latter case, the last empty round $j=\max\{i\in[t_1,t_3]:Q(i)=0\}$ satisfies $j\in[t_1,t_3-\ell_{0,t}]$ and $Q(i)>0$ for all $i\in[j+1,j+\ell_{0,t}]$. The union bound therefore gives
\begin{align}
 &\Pp(Q(i)>0\text{ for all }i\in[t_3-\ell_{0,t},t_3]\mid U_0)\notag\\
 &\quad\le\underbrace{\Pp(\calE_{0,t}^{\mathrm c}\mid U_0)}_{I_1}
       +\underbrace{\Pp(\calE_{0,t}\cap
            \{Q(i)>0\text{ for all }i\in[t_1,t_2]\}\mid U_0)}_{I_2}\notag\\
 &\qquad{}+\underbrace{\sum_{j=t_1}^{t_3-\ell_{0,t}}
       \Pp\!\left(\calE_{0,t}\cap
       \{Q(j)=0,\ Q(i)>0\text{ for all }i\in[j+1,j+\ell_{0,t}]\}
       \mid U_0\right)}_{I_3}.
 \label{eq:recent-empty-decomposition}
\end{align}
We bound $I_1,I_2,I_3$ in turn.

For $I_1$, \Cref{lem:confidence}-(i) gives
\begin{equation}
 I_1
 \le(\lfloor\eps\underline p\,t/1024\rfloor+1)^{-2}
 \le\frac{1024^2}{\eps^2\underline p^2t^2}.
 \label{eq:new-recent-confidence}
\end{equation}

To bound $I_2$, we argue by contradiction. On the event defining $I_2$, the queue is nonempty throughout $[t_1,t_2]$. We apply \Cref{lem:workload}-(ii) on $[t_1,t_2-1]$ with $W(t_1)\le t_1$ and show that its upper bound on $W(t_2)$ is negative. Thus $I_2$ is bounded by the probability that the bound in \Cref{lem:workload}-(ii) fails. We first verify its confidence condition. By \Cref{lem:confidence}-(ii), it suffices to show that $\tau_{m_i}-1,\ldots,i-1$ lie in the range defining $\calE_{0,t}$. We verify this for every $i\in[t_1,t]$, so that the same confidence condition also holds on the intervals used to bound $I_3$. We have
\begin{equation*}
 \tau_{m_i}-1\ge i/2-1
 \ge\eps\underline p\,t_3/128-3/2
 \ge\eps\underline p\,t/384-3/2
 \ge\eps\underline p\,t/1024.
\end{equation*}
The first inequality follows from $i<\tau_{m_i+1}\le2\tau_{m_i}$ by \Cref{eq:epoch}, the second from $i\ge t_1\ge\eps\underline p\,t_3/64-1$, the third from $t_3\ge t/3$, and the last follows by taking $C_4\ge1024$, since \Cref{eq:learning-threshold} gives $\eps\underline p\,t\ge C_4$. Consequently,
\begin{equation*}
 [\tau_{m_i}-1,i-1]
 \subseteq[\lfloor\eps\underline p\,t/1024\rfloor,t]
 \qquad\text{for every }i\in[t_1,t].
\end{equation*}
On $\calE_{0,t}$, we therefore have $\v\theta^*\in\calC_j$ for every $j=\tau_{m_i}-1,\ldots,i-1$. Applying \Cref{lem:confidence}-(ii) gives
\begin{equation*}
 \underline p_i(x,k)\le p(x,k)\le\overline p_i(x,k)
 \qquad\text{for all }(x,k)\in\calX\times[K],\ i\in[t_1,t].
\end{equation*}
Thus the confidence bounds required by \Cref{lem:workload}-(ii) hold for both applications below.

Since $t_3\ge t/3\ge10$, we have
\begin{align}
 \frac{8d\beta_t^2}{\eps^2\underline p^2}\log t
 &\le\frac{8\eps\underline p\,t}{C_4}
 \le\frac{24\eps\underline p\,t_3}{C_4}
 \le\eps\underline p\,t_3/64,
 \label{eq:first-empty-count}\\
 t_2-t_1
 &\ge t_3/2-1-\eps\underline p\,t_3/64
 \ge31t_3/64-1
 \ge3t_3/8,\qquad t_2-t_1\le t_3,
 \label{eq:first-empty-length}
\end{align}
where we use \Cref{eq:learning-threshold}, $t\le3t_3$, $\eps\underline p\le1$, and the definitions of $t_1,t_2$.

Applying \Cref{lem:workload}-(ii) to $[t_1,t_2-1]$ with $\delta=\exp(-\eps^2\underline p^2t_3/512)$ gives, on $\calE_{0,t}\cap\{Q(i)>0\text{ for all }i\in[t_1,t_2]\}$,
\begin{align*}
 W(t_2)
 &\le W(t_1)-\frac{\eps\underline p}{2}(t_2-t_1)
       +\frac{8d\beta_t^2}{\eps^2\underline p^2}\log t
       +\frac{\eps\underline p}{8}\sqrt{(t_2-t_1)t_3}\\
 &\le\frac{\eps\underline p\,t_3}{64}
       -\frac{3\eps\underline p\,t_3}{16}
       +\frac{\eps\underline p\,t_3}{64}
       +\frac{\eps\underline p\,t_3}{8}\\
 &=-\frac{\eps\underline p\,t_3}{32}<0,
\end{align*}
except on an event of probability at most $\delta$ conditional on $\calF_{t_1}$, where the second inequality follows from $W(t_1)\le t_1\le\eps\underline p\,t_3/64$ and \Cref{eq:first-empty-count,eq:first-empty-length}. This contradicts $W(t_2)\ge0$. Thus the event defining $I_2$ has conditional probability at most $\delta$ given $\calF_{t_1}$. Taking conditional expectations given $U_0$, we obtain
\begin{align}
 I_2
 &=\E\!\left[
   \Pp(\calE_{0,t}\cap\{Q(i)>0\text{ for all }i\in[t_1,t_2]\}
       \mid\calF_{t_1})\,\middle|\,U_0\right]\notag\\
 &\le e^{-\eps^2\underline p^2t_3/512}\le t^{-2}.
 \label{eq:new-first-empty}
\end{align}
The last inequality follows from
\begin{equation*}
 \frac{\eps^2\underline p^2t_3}{512}
 \ge\frac{\eps^2\underline p^2t}{1536}
 \ge\frac{C_4d\beta_t^2\log t}{1536\eps\underline p}
 \ge2\log t,
\end{equation*}
where we use $t_3\ge t/3$, \Cref{eq:learning-threshold}, $d\beta_t^2\ge16\log t$, and $C_4\ge1536$.

To bound $I_3$, fix $j\in[t_1,t_3-\ell_{0,t}]$. If $Q(j)=0$, then $W(j+1)\le1$. We again argue by contradiction using \Cref{lem:workload}-(ii), this time on $[j+1,j+\ell_{0,t}-1]$. We show that its upper bound on $W(j+\ell_{0,t})$ is negative when $\delta=t^{-3}$. Hence each term is at most $t^{-3}$, and summing over $j$ gives $I_3\le t^{-2}$. For this application of \Cref{lem:workload}-(ii), we use
\begin{align}
 \frac{8d\beta_t^2}{\eps^2\underline p^2}
                  \log(1+\ell_{0,t}/(d\kappa))
 &\le\frac{8d\beta_t^2}{\eps^2\underline p^2}\log t
 \le\frac{\eps\underline p\,\ell_{0,t}}8,
 \label{eq:recent-empty-count}\\
 \ell_{0,t}
 &\ge\frac{64d\beta_t^2\log t}{(\eps\underline p)^3}
 \ge\frac{1024\log^2 t}{(\eps\underline p)^3}
 \ge\frac{384\log t}{\eps^2\underline p^2}
 \ge\frac{16}{\eps\underline p}.
 \label{eq:recent-empty-length-lower}
\end{align}
The first inequality in \Cref{eq:recent-empty-count} uses \Cref{eq:recent-empty-length-upper} and $d\kappa\ge1$, and the second follows from \Cref{eq:proof-lengths}. In \Cref{eq:recent-empty-length-lower}, the first inequality follows from the definition of $\ell_{0,t}$, and the second uses $d\beta_t^2\ge16\log t$ by \Cref{eq:confidence-radius} and $\kappa\ge4$. The last two inequalities use $t\ge3$ and $\eps\underline p\le1$.

Applying \Cref{lem:workload}-(ii) to $[j+1,j+\ell_{0,t}-1]$ with $\delta=t^{-3}$ gives, on $\calE_{0,t}\cap\{Q(j)=0,\ Q(i)>0\text{ for all }i\in[j+1,j+\ell_{0,t}]\}$,
\begin{align*}
 W(j+\ell_{0,t})
 &\le W(j+1)-\frac{\eps\underline p}{2}(\ell_{0,t}-1)
       +\frac{8d\beta_t^2}{\eps^2\underline p^2}
                    \log(1+\ell_{0,t}/(d\kappa))\\
 &\quad{}+\sqrt{24(\ell_{0,t}-1)\log t}\\
 &\le1-\frac{\eps\underline p}{2}(\ell_{0,t}-1)
          +\frac{\eps\underline p\,\ell_{0,t}}8
          +\frac{\eps\underline p\,\ell_{0,t}}4\\
 &=1+\eps\underline p/2-\eps\underline p\,\ell_{0,t}/8\\
 &\le-1/2<0,
\end{align*}
except on an event of probability at most $t^{-3}$ conditional on $\calF_{j+1}$. The second and last inequalities follow from $W(j+1)\le1$ on $\{Q(j)=0\}$, \Cref{eq:recent-empty-count,eq:recent-empty-length-lower}, and $\eps\underline p\le1$. This contradicts $W(j+\ell_{0,t})\ge0$. Thus the event in each term of $I_3$ has conditional probability at most $t^{-3}$ given $\calF_{j+1}$. Taking conditional expectations given $U_0$, we obtain
\begin{align}
 &\Pp\!\left(\calE_{0,t}\cap
   \{Q(j)=0,\ Q(i)>0\text{ for all }i\in[j+1,j+\ell_{0,t}]\}
       \mid U_0\right)\notag\\
 &\quad=\E\!\left[
   \Pp\!\left(\calE_{0,t}\cap
     \{Q(j)=0,\ Q(i)>0\text{ for all }i\in[j+1,j+\ell_{0,t}]\}
       \mid\calF_{j+1}\right)\,\middle|\,U_0\right]\notag\\
 &\quad\le t^{-3}.
 \label{eq:new-long-busy}
\end{align}

Summing \Cref{eq:new-long-busy} over $j$ gives
\begin{equation}
 I_3\le(t_3-\ell_{0,t}-t_1+1)t^{-3}\le t^{-2},
 \label{eq:recent-empty-third-term}
\end{equation}
Substituting \Cref{eq:new-recent-confidence,eq:new-first-empty,eq:recent-empty-third-term} into \Cref{eq:recent-empty-decomposition}, we obtain
\begin{align*}
 &\Pp(Q(i)>0\text{ for all }i\in[t_3-\ell_{0,t},t_3]\mid U_0)\\
 &\quad\le\frac{1024^2}{\eps^2\underline p^2t^2}+2t^{-2}
 \le\frac{C_4}{\eps^2\underline p^2t^2},
\end{align*}
where the last inequality follows by taking $C_4\ge1024^2+2$. This proves part (i).

For part (ii), if $Q(j)=0$ for some $j\in[t_3-\ell_{0,t},t_3]$, then
\begin{equation*}
 W(t_3)=W(j)+\sum_{i=j}^{t_3-1}\Delta W(i)
       \le t_3-j\le\ell_{0,t},
\end{equation*}
where the first inequality follows from $W(j)=0$ and \Cref{lem:workload}-(i). Therefore,
\begin{align*}
 &\Pp(W(t_3)>\ell_{0,t}\mid U_0)\\
 &\quad\le\Pp(Q(i)>0\text{ for all }i\in[t_3-\ell_{0,t},t_3]\mid U_0)\\
 &\quad\le\frac{C_4}{\eps^2\underline p^2t^2},
\end{align*}
where the last inequality follows from part (i).

For part (iii), we have
\begin{equation*}
 \E[W(t_3)\mid U_0]
 \le\ell_{0,t}+t\Pp(W(t_3)>\ell_{0,t}\mid U_0)
 \le\ell_{0,t}+\frac{C_4}{\eps^2\underline p^2t},
\end{equation*}
where the first inequality uses $0\le W(t_3)\le t$, and the second follows from part (ii).
\end{proof}

\subsection{Proofs of \texorpdfstring{Lemmas~\ref{lem:fresh-context} to~\ref{lem:service-error}}{Lemmas \getrefnumber{lem:fresh-context}, \getrefnumber{lem:width-transfer}, and \getrefnumber{lem:service-error}}}
\label{app:new-learning}

Recall that $\Delta W(i)=W(i+1)-W(i)$ and $\calG_i=\calF_i\vee\sigma(D(i))$. The arrival $(A(i),X_i)$ is independent of $\calG_i$. For $r>0$, define
\begin{equation}
 z_i(r)=\Pp_{X\sim\calD}\!\left(
 \exists k\in[K]:\ \overline p_i(X,k)\ge\max_{k'}\underline p_i(X,k'),
 \quad w_i(X,k)>r\right).
 \label{eq:new-context-mass}
\end{equation}

\begin{lemma}
\label{lem:fresh-context}
Under \Cref{alg:wise}, fix $U_0$, $1\le t_1\le t_2$, $\ell=t_2-t_1+1$, and $0<r\le1$. Suppose the nonempty, $\calF_i$-measurable intervals satisfy
\begin{equation*}
 [\underline p_{i+1}(x,k),\overline p_{i+1}(x,k)]
 \subseteq[\underline p_i(x,k),\overline p_i(x,k)]
 \quad\text{for all }(x,k)\in\calX\times[K],\ i\in[t_1,t_2],
\end{equation*}
and remain unchanged when $Q(i)=0$. Then
\begin{align*}
 \E[z_{t_2}(r)\mid U_0]
 &\le\frac2{\eps\underline p\,\ell}
 \left(\frac{\underline p\,d\beta_{t_2+1}^2}{2\lambda r^2}\log(1+\ell/(d\kappa))
       +\E[W(t_1)\mid U_0]\right.\\
 &\hphantom{\le\frac2{\eps\underline p\,\ell}}\quad\left.{}
       +\E\!\left[\sum_{i=t_1}^{t_2}\ind{g_i>\eps\underline p/2}\,\middle|\,U_0\right]
 \right).
\end{align*}
\end{lemma}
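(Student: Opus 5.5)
The plan is to follow the sketch in \Cref{sec:learning-tools} and make each step rigorous.

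\textbf{Per-round inequality.} Fix $U_0$ throughout; all expectations are conditional on it. By \Cref{lem:workload}-(i) together with the empty-round bound $\E[\Delta W(i)\mid\calF_i]\le\lambda\underline p\,\E[1/p^*(X)]=(1-\eps)\underline p$, every round satisfies
\begin{equation*}
 \E[\Delta W(i)\mid\calF_i]\le-\eps\underline p+g_i+\underline p\ind{Q(i)=0}.
\end{equation*}
Using $0\le g_i\le1$ and $g_i\le\eps\underline p/2+\ind{g_i>\eps\underline p/2}$, this rearranges to
\begin{equation*}
 \eps\underline p/2\le\underline p\ind{Q(i)=0}-\E[\Delta W(i)\mid\calF_i]+\ind{g_i>\eps\underline p/2}.
\end{equation*}

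\textbf{Weighting and summing.} Multiply the inequality by $z_i(r)\in[0,1]$. Since the intervals are $\calF_i$-measurable, $z_i(r)$ is $\calF_i$-measurable, so $\E[z_i(r)\E[\Delta W(i)\mid\calF_i]]=\E[z_i(r)\Delta W(i)]$. Sum over $i\in[t_1,t_2]$ and take expectations.

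\textbf{Bounding the left-hand side.} The nesting hypothesis gives $w_{i+1}(x,k)\le w_i(x,k)$. It also implies the candidate property is inherited. If $k$ is a candidate for $x$ at $i+1$, i.e.\ $\overline p_{i+1}(x,k)\ge\underline p_{i+1}(x,k')$ for some $k'$, then $\overline p_i(x,k)\ge\overline p_{i+1}(x,k)\ge\max_{k'}\underline p_{i+1}(x,k')\ge\max_{k'}\underline p_i(x,k')$. Hence the event in \Cref{eq:new-context-mass} at $i+1$ is contained in that at $i$, and $z_{i+1}(r)\le z_i(r)$. Therefore the left-hand side is at least $(\eps\underline p\,\ell/2)\E[z_{t_2}(r)]$.

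\textbf{Bounding the three right-hand terms.}
\begin{itemize}
\item \emph{Workload term.} Summation by parts as in \Cref{eq:weighted-workload-tools}, with $W\ge0$ and monotone $z_i$, gives $-\sum_i z_i(r)\Delta W(i)\le z_{t_1}(r)W(t_1)\le W(t_1)$.
\item \emph{Large-gap term.} Since $z_i\le1$, it is at most $\E[\sum_i\ind{g_i>\eps\underline p/2}]$.
\item \emph{Empty-round term.} This needs \Cref{eq:empty-count-tools}, handled in the next paragraph.
\end{itemize}

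\textbf{The empty-round term (main obstacle).} Suppose $Q(i)=0$ and $A(i)=1$, and the arriving context $X_i$ has a candidate $k$ with $w_i(X_i,k)>r$. Then $\calX_{i+1}=\{X_i\}$. Intervals are unchanged on empty rounds, so $[\underline p_{i+1},\overline p_{i+1}]=[\underline p_i,\overline p_i]$ for that context, and the candidate set at $i+1$ for the lone job is exactly the per-context candidate set in \Cref{eq:new-context-mass}. The widest-interval rule then gives $w_{i+1}>r$.

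Since $(A(i),X_i)$ is independent of $\calG_i\supseteq\calF_i$ and $z_i(r)$ is $\calF_i$-measurable,
\begin{equation*}
 \Pp(w_{i+1}>r\mid\calF_i)\ge\lambda z_i(r)\ind{Q(i)=0}.
\end{equation*}
Summing and applying \Cref{lem:width-count} on $[t_1+1,t_2+1]$, which has length $\ell$, gives
\begin{equation*}
 \lambda\E\Big[\sum_i z_i\ind{Q(i)=0}\Big]\le\frac{d\beta_{t_2+1}^2}{2r^2}\log(1+\ell/(d\kappa)).
\end{equation*}
One point needs care here: the nesting hypothesis covers $i=t_2$, so the interval at $t_2+1$ is still governed by it. I also need to check that the rule in \Cref{alg:wise} matches the stated ``unchanged when $Q(i)=0$'' hypothesis at the step $i\to i+1$.

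\textbf{Conclusion.} Multiplying the empty-round bound by $\underline p/\lambda$ and combining it with the other two terms, then dividing by $\eps\underline p\,\ell/2$, yields the stated bound.
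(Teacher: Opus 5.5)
Your proposal is correct and follows the paper's proof essentially step for step. It uses monotonicity of $z_i(r)$ from nesting, the per-round drift inequality weighted by $z_i(r)$, summation by parts for the workload term, the bound $z_i(r)\le1$ for the large-gap term, and the lone-arrival-after-an-empty-round argument combined with \Cref{lem:width-count} on $[t_1+1,t_2+1]$. One small slip: a server is a candidate when $\overline p_{i+1}(x,k)\ge\underline p_{i+1}(x,k')$ for \emph{all} $k'$, not for some $k'$, though your displayed chain with $\max_{k'}$ is stated correctly.
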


\subsubsection{Proof of \texorpdfstring{\Cref{lem:fresh-context}}{Lemma \getrefnumber{lem:fresh-context}}}

\begin{proof}
Fix $U_0$. We first express $\E[z_{t_2}(r)\mid U_0]$ in terms of three quantities and then bound them separately. The variables $z_i(r)$ are $\calF_i$-measurable. If $x,k$ satisfy the inequalities in \Cref{eq:new-context-mass} at $i+1$, then
\begin{align*}
 \overline p_i(x,k)
 &\ge\overline p_{i+1}(x,k)
 \ge\max_{k'}\underline p_{i+1}(x,k')
 \ge\max_{k'}\underline p_i(x,k'),\\
 w_i(x,k)&\ge w_{i+1}(x,k)>r.
\end{align*}
Thus $z_{i+1}(r)\le z_i(r)$ and $\ell z_{t_2}(r)\le\sum_{i=t_1}^{t_2}z_i(r)$.

By \Cref{lem:workload}-(i) and $\E[\Delta W(i)\mid\calF_i]=(1-\eps)\underline p$ when $Q(i)=0$,
\begin{align*}
 \E[\Delta W(i)\mid\calF_i]
 &\le-\eps\underline p+g_i+\underline p\ind{Q(i)=0}\\
 &\le-\eps\underline p/2
        +\underline p\ind{Q(i)=0}+\ind{g_i>\eps\underline p/2}.
\end{align*}
The second inequality uses $g_i\le\eps\underline p/2+\ind{g_i>\eps\underline p/2}$. Multiplying by $z_i(r)$, summing, and taking expectations conditional on $U_0$ gives
\begin{align}
 \frac{\eps\underline p\,\ell}{2}\E[z_{t_2}(r)\mid U_0]
 &\le\frac{\eps\underline p}{2}
          \E\!\left[\sum_{i=t_1}^{t_2}z_i(r)\,\middle|\,U_0\right]\notag\\
 &\le\underbrace{\underline p\,\E\!\left[
                  \sum_{i=t_1}^{t_2}z_i(r)\ind{Q(i)=0}\,\middle|\,U_0\right]}_{I_1}
       +\underbrace{\E\!\left[
                  -\sum_{i=t_1}^{t_2}z_i(r)\Delta W(i)\,\middle|\,U_0\right]}_{I_2}
          \notag\\
 &\quad{}+\underbrace{\E\!\left[\sum_{i=t_1}^{t_2}
                  z_i(r)\ind{g_i>\eps\underline p/2}\,\middle|\,U_0\right]}_{I_3}.
 \label{eq:new-weighted-mass}
\end{align}
Here we use the tower rule
\begin{equation*}
 \E[z_i(r)\Delta W(i)\mid U_0]
 =\E\!\left[z_i(r)\E[\Delta W(i)\mid\calF_i]\,\middle|\,U_0\right],
\end{equation*}
which holds because $z_i(r)$ is $\calF_i$-measurable and $\sigma(U_0)\subseteq\calF_i$. We bound $I_1,I_2,I_3$ in turn.

To bound $I_1$, we count arrivals after empty rounds that have a candidate server with width greater than $r$. Their expected number conditional on $U_0$ is $\lambda I_1/\underline p$. We show that each such arrival gives $w_{i+1}>r$, so that \Cref{lem:width-count} bounds $I_1$.

On $\{Q(i)=0,A(i)=1\}$, we have $\calX_{i+1}=\{X_i\}$. The intervals at rounds $i$ and $i+1$ are equal by assumption. Thus \Cref{alg:wise} gives
\begin{equation*}
 w_{i+1}
 =\max_{\substack{k\in[K]:\\
       \overline p_i(X_i,k)\ge\max_{k'}\underline p_i(X_i,k')}}
       w_i(X_i,k).
\end{equation*}
Consequently,
\begin{align*}
 &\E[\ind{w_{i+1}>r}\mid\calF_i]\\
 &\quad\ge\Pp\!\left(
   \begin{gathered}
    Q(i)=0,\ A(i)=1,\\
    \exists k\in[K]:\
       \overline p_i(X_i,k)\ge\max_{k'}\underline p_i(X_i,k'),\
       w_i(X_i,k)>r
   \end{gathered}\,\middle|\,\calF_i\right)\\
 &\quad=\lambda\ind{Q(i)=0}\,
   \Pp_{X\sim\calD}\!\left(
    \exists k\in[K]:\
       \overline p_i(X,k)\ge\max_{k'}\underline p_i(X,k'),\
       w_i(X,k)>r\right)\\
 &\quad=\lambda z_i(r)\ind{Q(i)=0}.
\end{align*}
The first equality follows from the independence of $A(i)$, $X_i$, and $\calF_i$, and the second follows from \Cref{eq:new-context-mass}. Taking expectations conditional on $U_0$ and summing gives
\begin{align}
 I_1
 &=\underline p\sum_{i=t_1}^{t_2}
       \E[z_i(r)\ind{Q(i)=0}\mid U_0]\notag\\
 &\le\frac{\underline p}{\lambda}\sum_{i=t_1}^{t_2}
       \E\!\left[\E[\ind{w_{i+1}>r}\mid\calF_i]\,\middle|\,U_0\right]\notag\\
 &=\frac{\underline p}{\lambda}
       \E\!\left[\sum_{i=t_1+1}^{t_2+1}\ind{w_i>r}\,\middle|\,U_0\right]\notag\\
 &\le\frac{\underline p\,d\beta_{t_2+1}^2}{2\lambda r^2}
                     \log(1+\ell/(d\kappa)),
 \label{eq:new-empty-observation}
\end{align}
where the second equality follows from the tower rule and a change of summation index, and the last inequality follows from \Cref{lem:width-count}.

To bound $I_2$, summation by parts gives
\begin{align*}
 I_2
 &=\E[z_{t_1}(r)W(t_1)\mid U_0]-\E[z_{t_2}(r)W(t_2+1)\mid U_0]\\
 &\quad{}-\E\!\left[
       \sum_{i=t_1+1}^{t_2}(z_{i-1}(r)-z_i(r))W(i)\,\middle|\,U_0\right]\\
 &\le\E[W(t_1)\mid U_0],
\end{align*}
where the inequality follows from $0\le z_i(r)\le1$, $z_{i-1}(r)\ge z_i(r)$, and $W(i)\ge0$. For $I_3$, $z_i(r)\le1$ gives $I_3\le\E[\sum_{i=t_1}^{t_2}\ind{g_i>\eps\underline p/2}\mid U_0]$. Substituting \Cref{eq:new-empty-observation} and the bounds on $I_2,I_3$ into \Cref{eq:new-weighted-mass} and dividing by $\eps\underline p\,\ell/2$ proves the result. Taking expectations over $U_0$ gives the same bound without conditioning.
\end{proof}

For use in the next two proofs, define
\begin{equation*}
 \calE_{1,i}
 =\bigcap_{j=\tau_{m_i}-1}^{i-1}\{\v\theta^*\in\calC_j\}.
\end{equation*}

\begin{lemma}
\label{lem:width-transfer}
For every $t$ satisfying \Cref{eq:learning-threshold}, every $0<r\le1$, every $2\le\ell\le t/8$, and every event $\calE$,
\begin{align*}
 \Pp(w_t>r,\calE)
 &\le\frac{C_4\ell\,d\beta_t^2\log^2 t}t
       \left[\frac1{\eps r^2}+\frac\lambda{(\eps\underline p)^4}\right]\\
 &\quad{}+\Pp\bigl(\calE,\ Q(t)>0,\
        \text{the job selected at }t\text{ arrived before }t-\ell\bigr).
\end{align*}
\end{lemma}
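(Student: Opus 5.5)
We follow the transfer argument sketched in \Cref{eq:arrival-transfer-tools}. On $\{w_t>r\}$ we have $Q(t)>0$, so we split according to whether the selected job $x_t$ arrived before $t-\ell$ (the last term, kept as is) or at some round $i-1$ with $i\in[t-\ell+1,t]$. We also split on the epoch structure. Let $\calR$ be the event that an epoch boundary falls in $[t-\ell,t]$. Since $\tau_m=\lceil 2^{m-1+U_0}\rceil$ with $U_0$ uniform, $\log_2$ of the boundaries is uniformly shifted. Hence $\Pp(\calR)\le C\ell/t$, which is the random-shift bound alluded to in \Cref{eq:new-reset-probability}. This is absorbed into the first term because $d\beta_t^2\log^2 t/(\eps r^2)\ge 1$.

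Off $\calR$, the rounds $[t-\ell,t]$ lie in one epoch. There the intervals are nested, and they are unchanged on empty rounds because $\calI_i=\calI_{i-1}$ when $Q(i)=0$. Suppose $x_t=X_{i-1}$ arrived at round $i-1$. If $w_t(x_t,k_t)>r$, nesting gives $w_i(x_t,k_t)>r$. Also $(x_t,k_t)\in\calZ_t$, so $\overline p_t(x_t,k_t)\ge\max_{k'}\underline p_t(x_t,k')$, and nesting transfers this candidate condition back to round $i$, exactly as in the monotonicity step of \Cref{lem:fresh-context}. Thus the arrival $(A(i-1),X_{i-1})$ lies in the $z_i(r)$ event.

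This event is determined by the intervals at $i$, which are $\calF_{i-1}\vee\sigma(D(i-1))$-measurable, since $\calI_{i-1}$ uses $\widehat{\v\theta}_{i-1}$. Independence of the arrival from $\calG_{i-1}$ then gives probability $\lambda\E[z_i(r)]$ up to this indexing, and a union bound gives
\[
\Pp(w_t>r,\calE)\le\Pp(\calR)+\lambda\sum_{i=t-\ell+1}^{t}\E[z_i(r)\ind{\calR^{\mathrm c}}]+(\text{arrived before }t-\ell).
\]

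It remains to bound each $\E[z_i(r)]$ with $i\in[t-\ell,t]$ by \Cref{lem:fresh-context}, which we apply on a window $[t_1,i]$ of length $L\approx t/8$ contained in one epoch. Taking $t_1\ge t/3$ keeps \Cref{lem:recent-empty} available, and we work on the event that no boundary falls in $[t_1,t]$. Boundaries occur near $2^{m+U_0}$, so this event fails with probability bounded by a constant, which is too crude. This is the main obstacle. I would handle it by choosing $t_1$ adaptively as $\max(\tau_{m_t},\,t-t/8)$ after conditioning on $U_0$. In the case $t_1=\tau_{m_t}$, the window length is $t-\tau_{m_t}$, and averaging $1/(t-\tau_{m_t})$ over $U_0$ gives a $\log t$ factor. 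This is why the statement carries $\log^2 t$ rather than $\log t$.

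On such a window, \Cref{lem:fresh-context} gives three contributions after multiplication by $\lambda$:
\[
\frac{\lambda}{\eps\underline p L}\cdot\frac{\underline p\,d\beta_t^2\log t}{\lambda r^2}\approx\frac{d\beta_t^2\log t}{\eps r^2 L},
\qquad
\frac{\lambda\,\E[W(t_1)]}{\eps\underline p L}\lesssim\frac{\lambda\,\ell_{0,t}}{\eps\underline p L},
\qquad
\frac{\lambda}{\eps\underline p L}\,\E\Big[\sum\ind{g_i>\eps\underline p/2}\Big].
\]
The second uses \Cref{lem:recent-empty}-(iii). For the third, we bound the sum on $\calE_{0,t}$ by \Cref{lem:confidence}-(ii) and \Cref{lem:width-count} with $r=\eps\underline p/4$, giving $O(d\beta_t^2\log t/(\eps\underline p)^2)$, and bound $\calE_{0,t}^{\mathrm c}$ by \Cref{lem:confidence}-(i). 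Both the second and third contributions are then $O(\lambda d\beta_t^2\log t/((\eps\underline p)^4L))$.

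Finally, summing over the $\ell$ values of $i$ and using $L\gtrsim t$, or the $\log t$ average in the adaptive case, gives the stated first term. The event $\calE$ is only ever intersected and dropped, so it needs no further treatment.
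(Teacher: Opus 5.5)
Your route is the paper's. You split on the arrival time of $x_t$ and on the position of $t$ in its epoch. Nesting transfers the width and the candidate condition back to the arrival round, and independence of $(A(i-1),X_{i-1})$ from $\calG_{i-1}$ gives $\lambda\sum_i\E[z_i(r)]$. Each term is then bounded by \Cref{lem:fresh-context} on a window inside the current epoch, using \Cref{lem:recent-empty}-(iii) and \Cref{lem:width-count}. Finally, averaging the reciprocal epoch age over $U_0$ gives the second $\log t$.

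The gap is in the window-length bookkeeping, which is exactly the step that produces the factor $\ell/t$, and as written it fails. Your excluded event $\mathcal{R}$ only guarantees $\tau_{m_t}<t-\ell$. So for $i$ near $t-\ell+1$, the window from $\tau_{m_t}$ to $i$ can have length $O(1)$; its length is not $t-\tau_{m_t}$ uniformly in $i$. Summing the \Cref{lem:fresh-context} bounds over $i$ then gives the harmonic sum $\sum_{j=t-\tau_{m_t}-\ell+1}^{t-\tau_{m_t}}1/j$, not $\ell/(t-\tau_{m_t})$. This sum is of order $\log\ell$ when $t-\tau_{m_t}$ is just above $\ell$. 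The cutoff $t-t/8$ has the same defect: since $\ell$ may be as large as $t/8$, the window $[t-t/8,\,t-\ell+1]$ can have length $2$, so $L\gtrsim t$ is false.

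The paper's fix is to exclude the larger event $\{t-\tau_{m_t}+1\le2\ell\}$. By the uniform shift this still has probability at most $4\ell/(t\log2)$. On its complement, $i-\tau_{m_t}>(t-\tau_{m_t}+1)/2$ for every $i\in[t-\ell+1,t]$, so all $\ell$ terms share the denominator $t-\tau_{m_t}+1$. Then $\E_{U_0}[\ind{t-\tau_{m_t}+1>2\ell}/(t-\tau_{m_t}+1)]\le2\log t/(t\log2)$ yields the stated bound.

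Three smaller points:
\begin{enumerate}
\item The adaptive cutoff is unnecessary. Since $t<\tau_{m_t+1}\le2\tau_{m_t}$, you always have $\tau_{m_t}>t/2$, so \Cref{lem:recent-empty}-(iii) applies at $\tau_{m_t}$ itself and the window can always start at the epoch start. The constant-probability obstacle you describe arises only from fixing $t_1$ independently of $U_0$.
\item Apply \Cref{lem:fresh-context} on $[\tau_{m_t},i-1]$ and then use $z_i(r)\le z_{i-1}(r)$, rather than on a window ending at $i$. The lemma's hypotheses involve the intervals at round $t_2+1$, so taking $t_2=i=t$ would require round $t+1$ to lie in the current epoch, and $\mathcal{R}^{\mathrm c}$ does not guarantee this.
\item On empty rounds one has $\calI_i\supseteq\calI_{i-1}$, because $\beta_i$ increases, rather than $\calI_i=\calI_{i-1}$. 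The running intersection is still unchanged, so your conclusion there stands.
\end{enumerate}
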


\subsubsection{Proof of \texorpdfstring{\Cref{lem:width-transfer}}{Lemma \getrefnumber{lem:width-transfer}}}

\begin{proof}
We bound $\Pp(w_t>r,\calE)$ by comparing the width of $(x_t,k_t)$ at round $t$ with its width just after $x_t$ arrived. To use the interval intersections, both rounds must be in the same epoch. We therefore separate jobs arriving before $t-\ell$ and the event $t-\tau_{m_t}+1\le2\ell$. In the remaining case, every possible arrival round $t-\ell,\ldots,t-1$ is in the current epoch. Thus
\begin{align}
 \Pp(w_t>r,\calE)
 &\le\underbrace{\Pp(t-\tau_{m_t}+1\le2\ell)}_{I_1}\notag\\
 &\quad{}+\underbrace{\Pp\bigl(w_t>r,\
       \text{the selected job arrived at or after }t-\ell,\
       t-\tau_{m_t}+1>2\ell\bigr)}_{I_2}\notag\\
 &\quad{}+\Pp\bigl(\calE,\ Q(t)>0,\
       \text{the selected job arrived before }t-\ell\bigr).
 \label{eq:width-transfer-decomposition}
\end{align}
We bound $I_1$ and $I_2$ in turn.

The term $I_1$ is the probability that the current epoch contains at most $2\ell$ rounds up to $t$. We will show that $I_1\le4\ell/(t\log2)$ using the uniform shift $U_0$. To do so, we first express the number of rounds $t-\tau_{m_t}+1$ in terms of $U_0$. By \Cref{eq:epoch}, $m_t=1+\lfloor\log_2t-U_0\rfloor$. Define $U=\log_2t-(m_t-1+U_0)\in[0,1)$, so that
\begin{equation*}
 \tau_{m_t}=\lceil2^{m_t-1+U_0}\rceil=\lceil t2^{-U}\rceil.
\end{equation*}
To remove the ceiling, set $Y=1-2^{-U}$. Then $tY=t-2^{m_t-1+U_0}$ is the elapsed time from the epoch start before rounding, and
\begin{equation}
 tY<t-\tau_{m_t}+1\le tY+1.
 \label{eq:new-reset-age}
\end{equation}
Thus $I_1\le\Pp(Y\le2\ell/t)$. We compute this probability by first finding the distribution of $U$. Since $U=\log_2t-U_0-\lfloor\log_2t-U_0\rfloor$ and $U_0\sim\Unif(0,1)$, for $0\le u\le1$,
\begin{align*}
 \Pp(U\le u)
 &=\int_{\log_2t-1}^{\log_2t}\ind{z-\lfloor z\rfloor\le u}\,dz\\
 &=\int_{\log_2t-\lfloor\log_2t\rfloor}^{1}\ind{z\le u}\,dz
   +\int_0^{\log_2t-\lfloor\log_2t\rfloor}\ind{z\le u}\,dz=u,
\end{align*}
where we split the first integral at $\lfloor\log_2t\rfloor$ and subtract the integer part on each interval. Hence $U\sim\Unif(0,1)$. For $0\le y\le1/2$,
\begin{equation*}
 \Pp(Y\le y)
 =\Pp(U\le-\log_2(1-y))
 =-\frac{\log(1-y)}{\log2}.
\end{equation*}
Hence $Y$ has density $f_Y(y)=1/((1-y)\log2)\le2/\log2$ on $[0,1/2]$, and
\begin{align}
 I_1
 &\le\Pp(Y\le2\ell/t)
 =\int_0^{2\ell/t}f_Y(y)\,dy
 \le\frac{4\ell}{t\log2}.
 \label{eq:new-reset-probability}
\end{align}
The first inequality follows from \Cref{eq:new-reset-age}, and the last uses $f_Y(y)\le2/\log2$ and $2\ell/t\le1/4$.

The term $I_2$ is the probability that $w_t>r$, the selected job arrived at or after $t-\ell$, and the current epoch contains more than $2\ell$ rounds up to $t$. To bound it, we show that, just after $x_t$ arrived, server $k_t$ already had width greater than $r$ and an upper bound at least as large as every lower bound for $x_t$. We can then sum over the possible arrival rounds and apply \Cref{lem:fresh-context} to bound $\E[z_i(r)\mid U_0]$.

Suppose the event in $I_2$ holds, and let $i-1$ be the arrival round of $x_t$. Then $A(i-1)=1$, $X_{i-1}=x_t$, and
\begin{equation*}
 \tau_{m_t}<t-2\ell+1<t-\ell+1\le i\le t.
\end{equation*}
Thus $i$ and $t$ are in the same epoch. For the pair $(x_t,k_t)$ selected at $t$, we have
\begin{align*}
 \overline p_i(x_t,k_t)
 &\ge\overline p_t(x_t,k_t)
 \ge\max_{k'}\underline p_t(x_t,k')
 \ge\max_{k'}\underline p_i(x_t,k'),\\
 w_i(x_t,k_t)&\ge w_t>r.
\end{align*}
The first and third inequalities follow from $[\underline p_t(x,k),\overline p_t(x,k)] \subseteq[\underline p_i(x,k),\overline p_i(x,k)]$, and the second follows from $(x_t,k_t)\in\calZ_t$. Substituting $X_{i-1}=x_t$ gives
\begin{equation*}
 A(i-1)=1,\qquad
 \overline p_i(X_{i-1},k_t)\ge\max_{k'}\underline p_i(X_{i-1},k'),
 \qquad w_i(X_{i-1},k_t)>r.
\end{equation*}
These inequalities hold for at least one $i\in[t-\ell+1,t]$ whenever the event in $I_2$ occurs. The union bound therefore gives
\begin{align}
 I_2
 &\le\sum_{i=t-\ell+1}^t
 \Pp\!\left(
 \begin{gathered}
  A(i-1)=1,\ t-\tau_{m_t}+1>2\ell,\\
  \exists k\in[K]:\
   \overline p_i(X_{i-1},k)\ge\max_{k'}\underline p_i(X_{i-1},k'),\
   w_i(X_{i-1},k)>r
 \end{gathered}\right)\notag\\
 &=\lambda\sum_{i=t-\ell+1}^t
       \E\!\left[z_i(r)\ind{t-\tau_{m_t}+1>2\ell}\right]\notag\\
 &=\lambda\sum_{i=t-\ell+1}^t
       \E_{U_0}\!\left[\E[z_i(r)\mid U_0]
                  \ind{t-\tau_{m_t}+1>2\ell}\right].
 \label{eq:new-arrival-probability}
\end{align}
The first equality follows from independence of $(A(i-1),X_{i-1})$ and $\calG_{i-1}$, since the bounds $\underline p_i(x,k)$, $\overline p_i(x,k)$ and $U_0$ are $\calG_{i-1}$-measurable. The second equality follows from the tower rule, since $\ind{t-\tau_{m_t}+1>2\ell}$ depends only on $U_0$.

We now bound $\E[z_i(r)\mid U_0]$ using \Cref{lem:fresh-context}. Fix $U_0$, set $m=m_t$, and let $\tau_m<i\le t$. We will use this lemma on $[\tau_m,i-1]$, so we need to bound $\E[\sum_{j=\tau_m}^{i-1}\ind{g_j>\eps\underline p/2}\mid U_0]$ and $\E[W(\tau_m)\mid U_0]$. We first verify its two interval conditions. Within epoch $m$, \Cref{alg:wise} gives $[\underline p_{j+1}(x,k),\overline p_{j+1}(x,k)] \subseteq[\underline p_j(x,k),\overline p_j(x,k)]$ for every $(x,k)$ and $j\in[\tau_m,i-1]$. When $Q(j)=0$, $\widehat{\v\theta}_j$ and $\m V_j$ are unchanged, while $\beta_j\ge\beta_{j-1}$. Thus $\calI_j(x,k)\supseteq\calI_{j-1}(x,k)$, and the intersection is unchanged.

By \Cref{lem:confidence}-(ii),
\begin{equation*}
 \ind{g_j>\eps\underline p/2}
 \le\ind{w_j>\eps\underline p/4}+\ind{\calE_{1,j}^{\mathrm c}}.
\end{equation*}
Summing over $j=\tau_m,\ldots,i-1$ and taking expectations conditional on $U_0$ gives
\begin{align}
 &\E\!\left[\sum_{j=\tau_m}^{i-1}\ind{g_j>\eps\underline p/2}
                  \,\middle|\,U_0\right]\notag\\
 &\quad\le\E\!\left[\sum_{j=\tau_m}^{i-1}\ind{w_j>\eps\underline p/4}
                  \,\middle|\,U_0\right]
       +\sum_{j=\tau_m}^{i-1}\Pp(\calE_{1,j}^{\mathrm c}\mid U_0)\notag\\
 &\quad\le\frac{8d\beta_t^2}{\eps^2\underline p^2}
                    \log(1+(i-\tau_m)/(d\kappa))
       +(i-\tau_m)\Pp(\exists j'\ge\tau_m-1:
                    \v\theta^*\notin\calC_{j'}\mid U_0)\notag\\
 &\quad\le\frac{8d\beta_t^2}{\eps^2\underline p^2}
                    \log(1+(i-\tau_m)/(d\kappa))
       +\frac{i-\tau_m}{\tau_m^2}\notag\\
 &\quad\le\frac{8d\beta_t^2\log t}{\eps^2\underline p^2}
       +\frac2t
 \le\frac{C_4d\beta_t^2\log t}{\eps^2\underline p^2}.
 \label{eq:new-epoch-gap-count}
\end{align}
The second inequality follows from \Cref{lem:width-count} with $r=\eps\underline p/4$, $\beta_{i-1}\le\beta_t$, and
\begin{equation*}
 \calE_{1,j}^{\mathrm c}
 =\{\exists j'\in[\tau_m-1,j-1]:\v\theta^*\notin\calC_{j'}\}
 \subseteq\{\exists j'\ge\tau_m-1:\v\theta^*\notin\calC_{j'}\},
\end{equation*}
since every $j\in[\tau_m,i-1]$ is in epoch $m$. The third inequality follows from \Cref{lem:confidence}-(i) with $t_1=\tau_m-1$. The fourth uses $\tau_m>t/2$, $i\le t$, and $d\kappa\ge1$. The last uses $d\beta_t^2\log t\ge1$ and $\eps\underline p\le1$. Since $t/2<\tau_m\le t$, \Cref{lem:recent-empty}-(iii) gives
\begin{equation}
 \E[W(\tau_m)\mid U_0]
 \le\ell_{0,t}+\frac{C_4}{\eps^2\underline p^2t}
 \le\frac{C_4d\beta_t^2\log t}{(\eps\underline p)^3}.
 \label{eq:new-epoch-workload}
\end{equation}
Since $U_0$ is fixed, $\tau_m$ is fixed, and we obtain
\begin{align}
 \E[z_i(r)\mid U_0]
 &\le\E[z_{i-1}(r)\mid U_0]\notag\\
 &\le\frac2{\eps\underline p(i-\tau_m)}
   \left(\frac{\underline p\,d\beta_i^2}{2\lambda r^2}\log(1+(i-\tau_m)/(d\kappa))
        +\E[W(\tau_m)\mid U_0]\right.\notag\\
 &\hphantom{\le\frac2{\eps\underline p(i-\tau_m)}}\quad\left.{}
        +\E\!\left[\sum_{j=\tau_m}^{i-1}\ind{g_j>\eps\underline p/2}
                       \,\middle|\,U_0\right]\right)\notag\\
 &\le\frac2{\eps\underline p(i-\tau_m)}
   \left(\frac{\underline p\,d\beta_t^2\log t}{2\lambda r^2}
        +\frac{C_4d\beta_t^2\log t}{(\eps\underline p)^3}
        +\frac{C_4d\beta_t^2\log t}{(\eps\underline p)^2}\right)\notag\\
 &=\frac{d\beta_t^2\log t}{i-\tau_m}
   \left(\frac1{\eps\lambda r^2}
        +\frac{2C_4}{(\eps\underline p)^4}
        +\frac{2C_4}{(\eps\underline p)^3}\right)\notag\\
 &\le\frac{C_4d\beta_t^2\log t}{i-\tau_m}
       \left[\frac1{\eps\lambda r^2}
                   +\frac1{(\eps\underline p)^4}\right],
 \label{eq:new-epoch-mass}
\end{align}
where the first inequality follows from $z_i(r)\le z_{i-1}(r)$, and the second from \Cref{lem:fresh-context} on $[\tau_m,i-1]$. The third inequality follows from \Cref{eq:new-epoch-gap-count,eq:new-epoch-workload}, $\beta_i\le\beta_t$, and $\log(1+(i-\tau_m)/(d\kappa))\le\log t$. The last inequality uses $\eps\underline p\le1$, increasing $C_4$.

We next sum over the possible arrival rounds in \Cref{eq:new-arrival-probability}. On $\{t-\tau_{m_t}+1>2\ell\}$, every $i\in[t-\ell+1,t]$ satisfies
\begin{equation*}
 i-\tau_{m_t}
 \ge t-\tau_{m_t}+1-\ell
 >\frac{t-\tau_{m_t}+1}{2}.
\end{equation*}
Thus \Cref{eq:new-epoch-mass} bounds all terms in the sum using the same denominator $t-\tau_{m_t}+1$. We obtain
\begin{align}
 I_2
 &\le\lambda\sum_{i=t-\ell+1}^{t}
       \E_{U_0}\!\left[\E[z_i(r)\mid U_0]
                  \ind{t-\tau_{m_t}+1>2\ell}\right]\notag\\
 &\le C_4\lambda d\beta_t^2\log t
       \left(\frac1{\eps\lambda r^2}+\frac1{(\eps\underline p)^4}\right)
       \sum_{i=t-\ell+1}^{t}\E_{U_0}\!\left[
       \frac{\ind{t-\tau_{m_t}+1>2\ell}}{t-\tau_{m_t}+1}\right]\notag\\
 &=C_4\ell\,d\beta_t^2\log t
       \left(\frac1{\eps r^2}+\frac\lambda{(\eps\underline p)^4}\right)
       \E_{U_0}\!\left[
       \frac{\ind{t-\tau_{m_t}+1>2\ell}}{t-\tau_{m_t}+1}\right].
 \label{eq:new-width-mass-average}
\end{align}
The first inequality is \Cref{eq:new-arrival-probability}. The second follows from \Cref{eq:new-epoch-mass} and the preceding inequality, increasing $C_4$ to absorb the factor $2$. The equality follows because the sum has $\ell$ identical terms.

It remains to bound the expectation in \Cref{eq:new-width-mass-average}. Since $t-\tau_{m_t}+1=t-\lceil t(1-Y)\rceil+1=\lfloor tY\rfloor+1$, we have
\begin{align}
 &\E_{U_0}\!\left[
   \frac{\ind{t-\tau_{m_t}+1>2\ell}}{t-\tau_{m_t}+1}\right]\notag\\
 &\quad=\int_0^{1/2}
   \frac{\ind{\lfloor ty\rfloor+1>2\ell}}{\lfloor ty\rfloor+1}
   f_Y(y)\,dy\notag\\
 &\quad\le\int_{(2\ell-1)/t}^{1/2}
          \frac{f_Y(y)}{\lfloor ty\rfloor+1}\,dy\notag\\
 &\quad\le\frac1t\int_{(2\ell-1)/t}^{1/2}\frac{f_Y(y)}{y}\,dy\notag\\
 &\quad\le\frac2{t\log2}\int_{(2\ell-1)/t}^{1/2}\frac{dy}{y}\notag\\
 &\quad=\frac2{t\log2}\log\!\left(\frac{t}{4\ell-2}\right)
 \le\frac{2\log t}{t\log2}.
 \label{eq:new-reset-reciprocal}
\end{align}
The first inequality follows from $\{\lfloor ty\rfloor+1>2\ell\}\subseteq\{ty>2\ell-1\}$. The second inequality follows from $\lfloor ty\rfloor+1>ty$, and the third from $f_Y(y)\le2/\log2$. The last inequality follows from $\ell\ge2$. Substituting \Cref{eq:new-reset-reciprocal} into \Cref{eq:new-width-mass-average} gives
\begin{equation*}
 I_2\le\frac{C_4\ell\,d\beta_t^2\log^2 t}t
       \left(\frac1{\eps r^2}+\frac\lambda{(\eps\underline p)^4}\right).
\end{equation*}
Substituting this bound and \Cref{eq:new-reset-probability} into \Cref{eq:width-transfer-decomposition}, and using $\eps,r\le1$ and $d\beta_t^2\log^2 t\ge1$, proves the result.
\end{proof}

\begin{lemma}
\label{lem:service-error}
Let $t$ satisfy \Cref{eq:learning-threshold}, set $t_1=t-\ell_{0,t}-\ell_{1,t}$ and $t_2=t-\ell_{1,t}$, and define
\begin{equation*}
 \calE_{2,t}
 =\{Q(j)=0\text{ for some }j\in[t_1,t_2]\}
   \cap\bigcap_{i=t_1}^{t}
        (\calE_{1,i}\cap\{w_i\le\eps\underline p/4\}).
\end{equation*}
Then the following hold.
\begin{enumerate}
\renewcommand{\labelenumi}{(\roman{enumi})}
\item $\displaystyle \Pp(\calE_{2,t}^{\mathrm c}) \le\frac{C_4(\eps\underline p^2+\lambda) (d\beta_t^2)^3\log^4 t}{(\eps\underline p)^{10}t}.$
\item For every $z\ge0$,
\begin{equation*}
 \Pp(W(t)>z)
 \le\frac{C_4}{\eps^2\underline p^2}e^{-\eps\underline p z/2}
       +\frac{C_4(\eps\underline p^2+\lambda)
                    (d\beta_t^2)^3\log^4 t}{(\eps\underline p)^{10}t}.
\end{equation*}
\item For every $2\le\ell\le\ell_{1,t}$,
\begin{equation*}
 \Pp\bigl(\calE_{2,t},\
       \text{the job selected at }t\text{ arrived before }t-\ell\bigr)
 \le\frac{C_4}{\eps^2\underline p^2}e^{-\ell/C_3}.
\end{equation*}
\item $\displaystyle \Pp(Q(t)>0,\calE_{2,t})\le\frac{C_4\lambda}{\eps^2\underline p^2}.$
\end{enumerate}
\end{lemma}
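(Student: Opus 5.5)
The plan is to prove (i) by a union bound over the three ways $\calE_{2,t}$ can fail, using \Cref{lem:recent-empty,lem:confidence,lem:width-transfer}. Parts (ii)--(iv) then follow from one drift argument on $\calE_{2,t}$, started at the last empty round before $t$.

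\textbf{Part (i).} I would split $\calE_{2,t}^{\mathrm c}$ into three events.
\begin{enumerate}[label=(\alph*)]
\item The queue is nonempty throughout $[t_1,t_2]$. This window has length $\ell_{0,t}$ and ends at $t_3=t_2\in[t/3,t]$. By \Cref{eq:recent-empty-length-upper} and \Cref{lem:recent-empty}-(i), its probability is at most $C_4/(\eps^2\underline p^2t^2)$.
\item Some $\calE_{1,i}^{\mathrm c}$ occurs with $i\in[t_1,t]$. Every such $i$ has $\tau_{m_i}-1\ge i/2-1\ge t/4$, so \Cref{lem:confidence}-(i) bounds this by $C/t^2$.
\item $w_i>\eps\underline p/4$ for some $i\in[t_1,t]$. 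For each of the $\ell_{0,t}+\ell_{1,t}+1$ such rounds $i$, I apply \Cref{lem:width-transfer} at time $i$ with $r=\eps\underline p/4$, $\ell=\ell_{0,t}$ and the trivial event. The threshold \Cref{eq:learning-threshold} at $i\ge t/2$ holds after enlarging $C_4$. The residual term requires $Q>0$ on all of $[i-\ell_{0,t},i]$, which \Cref{lem:recent-empty}-(i) bounds by $C/(\eps^2\underline p^2t^2)$. The main term is
\[
C\ell_{0,t}\,d\beta_t^2\log^2t\,\bigl[1/(\eps^3\underline p^2)+\lambda/(\eps\underline p)^4\bigr]/t
=C\ell_{0,t}\,d\beta_t^2\log^2t\,(\eps\underline p^2+\lambda)/\bigl((\eps\underline p)^4\,t\bigr).
\]
\end{enumerate}
Multiplying the main term in (c) by the number of rounds, with $\ell_{0,t}+\ell_{1,t}\le 2C_2d\beta_t^2\log t$ and $\ell_{0,t}\le 2C_2 d\beta_t^2\log t$, gives $(d\beta_t^2)^3\log^4t$ over $(\eps\underline p)^{10}t$, times $(\eps\underline p^2+\lambda)$. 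This is the claimed bound, and the $1/t^2$ terms from (a), (b) and the residuals are absorbed into it.

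\textbf{Drift on $\calE_{2,t}$.} On this event, for every $i\in[t_1,t]$ the intervals are valid, so \Cref{lem:confidence}-(ii) gives $g_i\le2w_i\le\eps\underline p/2$. Then \Cref{lem:workload}-(i) gives $\E[\Delta W(i)\mid\calF_i]\le-\eps\underline p/2$ on nonempty rounds, with $|\Delta W|\le1$. Let $s\in[t_1,t]$ be the last empty round before $t$. Then $W(s+1)\le1$ and the queue is nonempty on $(s,t]$. For a fixed $s$, I apply Azuma--Hoeffding (\Cref{lem:aux-azuma}) conditionally on $\calF_{s+1}$. To keep the martingale argument valid despite intersecting with $\calE_{2,t}$, I freeze the increments at the first round where nonemptiness, validity or $w_i\le\eps\underline p/4$ fails. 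This yields
\[
\Pp\bigl(\calE_{2,t},\ \text{last empty round }s,\ W(t)>z\bigr)\le\exp\!\bigl(-(z-1+\eps\underline p(t-s-1)/2)^2/(8(t-s))\bigr)\le C e^{-\eps\underline p z/2}e^{-\eps^2\underline p^2(t-s)/C}.
\]
For (ii), I sum over $s$ to get the prefactor $C/(\eps^2\underline p^2)$ and add $\Pp(\calE_{2,t}^{\mathrm c})$ from (i); the exponent constant is matched by adjusting $C_4$. For (iii), if the job selected at $t$ arrived before $t-\ell$, then no empty round occurs in $[t-\ell,t]$, so $s<t-\ell$. Summing $e^{-\eps^2\underline p^2(t-s)/C}$ over $s<t-\ell$ gives $C(\eps\underline p)^{-2}e^{-\ell/C_3}$ with $C_3=128/(\eps\underline p)^2$, using $\ell\le\ell_{1,t}$ so that $s\ge t_1$ is not needed beyond $\calE_{2,t}$. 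For (iv), $Q(t)>0$ with last empty round $s$ additionally needs $A(s)=1$. That arrival has probability $\lambda$ and is independent of $\calG_s$, so I condition on $\calF_{s+1}$ afterwards and sum $\lambda e^{-\eps^2\underline p^2(t-s)/C}$ over $s$, which gives $C_4\lambda/(\eps^2\underline p^2)$.

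\textbf{Main obstacle.} I expect the hardest step to be the careful use of \Cref{lem:width-transfer} in (i). One must check its hypotheses at every $i\in[t_1,t]$, namely \Cref{eq:learning-threshold} at $i$ and $2\le\ell_{0,t}\le i/8$. One must also track the $\ell_{0,t}$ and $\ell_{1,t}$ factors so that the exponents combine to exactly $(d\beta_t^2)^3\log^4t/(\eps\underline p)^{10}$. A secondary technical point is the stopped-martingale construction, which handles the fact that $\calE_{2,t}$ is not adapted.
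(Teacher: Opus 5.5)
Your part (i) is the paper's argument: a union bound over no empty round in $[t_1,t_2]$, confidence failures, and wide selected intervals, with \Cref{lem:width-transfer} for the main term and \Cref{lem:recent-empty}-(i) for the residual. Your bookkeeping to $(d\beta_t^2)^3\log^4t/(\eps\underline p)^{10}$ is correct, and your (iv) coincides with the paper's.

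For (ii) and (iii) you take a different but legitimate route. The paper conditions on the \emph{first} empty round in $[t_1,t_2]$ and uses the exponential supermartingale $e^{\eps\underline p W(i)/2}\ind{\tau_{\mathrm f,j}>i}$. That supermartingale tolerates later re-emptying and gives the tail at every $t_3\in[t_2,t]$; the paper then proves (iii) by splitting on $W(t-\ell)$. You instead condition on the \emph{last} empty round $s$ before $t$, so the queue is nonempty on $(s,t]$ and one stopped martingale handles (ii)--(iv). That decomposition is sound. For (iii) and (iv), the decay rate $c\,\eps^2\underline p^2$ in $t-s$ that your Azuma step gives at $z=0$, with $c\ge1/128$, suffices.

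The gap is the second inequality in your display for (ii). You use \Cref{lem:aux-azuma} with $c_i=2$, i.e.\ denominator $8(t-s)$. Put $m=t-s-1$, $a=z-1$ and $b=\eps\underline p\,m/2$. Minimizing the exponent $(a+b)^2/(8m)$ over $m$ (attained at $b=a$) gives only $\eps\underline p(z-1)/4$. The split $(a+b)^2\ge2ab+b^2$ then yields $e^{-\eps\underline p(z-1)/8}e^{-\eps^2\underline p^2m/32}$. So the claimed bound $Ce^{-\eps\underline p z/2}e^{-\eps^2\underline p^2(t-s)/C}$ is false for large $z$. Enlarging $C_4$ cannot fix this, because $C_4$ is a multiplicative constant and the discrepancy is in the exponent.

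This is not cosmetic. In the proof of \Cref{thm:main-upper}, the $I_2$ bound substitutes $z=\eps\underline p(t-i)/32-1$. It needs $e^{-\eps^2\underline p^2(t-i)/128}$ to combine with $\ell_{1,t}\ge128\log t/(\eps\underline p)^2$ into an $O(1/t)$ term; with exponent $\eps\underline p z/8$ you would only get $t^{-1/2}$ there.

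The fix is to use the range-based tail instead of the $c_i$-based one. Since $\Delta W(i)\in[-1,1]$, \Cref{lem:aux-hoeffding} bounds the conditional moment generating function of each stopped, centered increment by $e^{\theta^2/2}$. Hence $\Pp(M>x\mid\calF_{s+1})\le e^{-x^2/(2m)}$. Then $(a+b)^2/(2m)\ge ab/m+b^2/(2m)$ gives exactly $e^{\eps\underline p/2}e^{-\eps\underline p z/2}e^{-\eps^2\underline p^2m/8}$. This is trivially valid when $a+b<0$, since the right-hand side is then at least $1$. Summing over $m$ gives the prefactor $C/(\eps^2\underline p^2)$. Alternatively, run the paper's exponential-supermartingale step with $\theta=\eps\underline p/2$ starting from round $s+1$.
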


\subsubsection{Proof of \texorpdfstring{\Cref{lem:service-error}}{Lemma \getrefnumber{lem:service-error}}}

\begin{proof}
We first prove parts (i) and (ii). Splitting $\{W(t)>z\}$ according to whether $\calE_{2,t}$ holds gives
\begin{equation}
 \Pp(W(t)>z)
 \le\underbrace{\Pp(\calE_{2,t}^{\mathrm c})}_{I_1}
       +\underbrace{\Pp(W(t)>z,\calE_{2,t})}_{I_2}.
 \label{eq:workload-tail-decomposition}
\end{equation}
We bound $I_1$ and $I_2$ in turn.

The term $I_1$ is the probability that either no empty round occurs in $[t_1,t_2]$, or $\calE_{1,i}$ fails or $w_i>\eps\underline p/4$ at some $i\in[t_1,t]$. We bound these two probabilities separately. The union bound gives
\begin{equation*}
 I_1
 \le\Pp(Q(j)>0\text{ for all }j\in[t_1,t_2])
       +\sum_{i=t_1}^t
           \Pp(\calE_{1,i}^{\mathrm c}\text{ or }w_i>\eps\underline p/4).
\end{equation*}
For the first term,
\begin{align*}
 &\Pp(Q(j)>0\text{ for all }j\in[t_1,t_2])\\
 &\quad=\E_{U_0}\!\left[
       \Pp(Q(j)>0\text{ for all }j\in[t_1,t_2]\mid U_0)\right]
 \le\frac{C_4}{\eps^2\underline p^2t^2}.
\end{align*}
The equality follows from the tower rule. The inequality follows from \Cref{lem:recent-empty}-(i) with $t_3\gets t_2$, since $t_1=t_2-\ell_{0,t}$ and $t_2\in[t/3,t]$. For the sum, \Cref{lem:width-transfer} bounds $\Pp(w_i>\eps\underline p/4)$ in terms of the probability that the selected job arrived before $i-\ell_{0,i}$. Such a job must remain in the queue throughout $[i-\ell_{0,i},i]$, so this last probability is bounded by \Cref{lem:recent-empty}-(i). We first check that these lemmas apply. By \Cref{eq:learning-threshold}, $t_1\ge3t/4$. Choose the absolute constant in that condition large enough to apply \Cref{lem:recent-empty,lem:width-transfer} at every $i\in[t_1,t]$, with $2\le\ell_{0,i}\le i/8$. We obtain
\begin{align}
 &\Pp(\calE_{1,i}^{\mathrm c}\text{ or }w_i>\eps\underline p/4)\notag\\
 &\quad\le\Pp(\calE_{1,i}^{\mathrm c})+\Pp(w_i>\eps\underline p/4)\notag\\
 &\quad\le C_4(i+1)^{-2}
       +\frac{C_4\ell_{0,i}d\beta_i^2\log^2 i}i
          \left(\frac1{\eps^3\underline p^2}
                     +\frac\lambda{(\eps\underline p)^4}\right)\notag\\
 &\qquad{}+\Pp(Q(j)>0\text{ for all }j\in[i-\ell_{0,i},i])\notag\\
 &\quad\le\frac{C_4}{\eps^2\underline p^2i^2}
       +\frac{C_4(d\beta_i^2)^2\log^3 i}{(\eps\underline p)^3i}
          \left(\frac1{\eps^3\underline p^2}
                     +\frac\lambda{(\eps\underline p)^4}\right)\notag\\
 &\quad\le\frac{C_4(\eps\underline p^2+\lambda)
                  (d\beta_i^2)^2\log^3 i}{(\eps\underline p)^7i}.
 \label{eq:new-fixed-threshold}
\end{align}
The second inequality follows from \Cref{lem:confidence}-(i), averaged over $U_0$ with $\tau_{m_i}>i/2$, and \Cref{lem:width-transfer} with $t\gets i$, $r=\eps\underline p/4$, and $\ell=\ell_{0,i}$. The third inequality follows from \Cref{lem:recent-empty}-(i) with $(t,t_3)\gets(i,i)$, averaged over $U_0$, and $\ell_{0,i}\le C_4d\beta_i^2\log i/(\eps\underline p)^3$. The last uses $\eps\underline p^2+\lambda\ge(\eps\underline p)^2$, $d\beta_i^2\ge1$, and $\log i\ge1$.

Substituting \Cref{eq:new-fixed-threshold} into the sum gives
\begin{align}
 I_1
 &\le\frac{C_4}{\eps^2\underline p^2t^2}
       +\sum_{i=t_1}^t
          \frac{C_4(\eps\underline p^2+\lambda)
                       (d\beta_i^2)^2\log^3 i}{(\eps\underline p)^7i}
                  \notag\\
 &\le\frac{C_4}{\eps^2\underline p^2t^2}
       +\frac{C_4(\ell_{0,t}+\ell_{1,t}+1)(\eps\underline p^2+\lambda)
                       (d\beta_t^2)^2\log^3 t}{(\eps\underline p)^7t}\notag\\
 &\le\frac{C_4(\eps\underline p^2+\lambda)
                      (d\beta_t^2)^3\log^4 t}{(\eps\underline p)^{10}t}.
 \label{eq:new-clean-probability}
\end{align}
The second inequality follows from $i\ge t_1\ge3t/4$, $\beta_i\le\beta_t$, and $t-t_1+1=\ell_{0,t}+\ell_{1,t}+1$. The last uses $\ell_{0,t}+\ell_{1,t}+1\le C_4d\beta_t^2\log t/(\eps\underline p)^3$ and $\eps\underline p^2+\lambda\ge(\eps\underline p)^2$.

This proves part (i). We next bound $I_2$ for part (ii). On $\calE_{2,t}$, there is an empty round $j\in[t_1,t_2]$, so $W(j)=0$. Starting from this round, we use negative workload drift on subsequent nonempty rounds to bound the probability that $W(t)>z$. We first obtain a bound for every $t_3\in[t_2,t]$, since part (iii) also uses the workload at $t_3=t-\ell$. To average over the possible choices of $j$, we use the first empty round in $[t_1,t_2]$. For $j\in[t_1,t_2]$, define
\begin{equation*}
 \calE_{3,j}=\{Q(j)=0,\ Q(i)>0\text{ for all }i\in[t_1,j-1]\}.
\end{equation*}
These events are disjoint, so $\sum_{j=t_1}^{t_2}\Pp(\calE_{3,j})\le1$. We will obtain the same conditional probability bound for every $j$ and use this inequality when averaging over the first empty round. On nonempty rounds, the negative drift requires $\calE_{1,i}$ and $w_i\le\eps\underline p/4$. Define the first round at which one of these conditions fails by
\begin{equation*}
 \tau_{\mathrm f,j}
 =\inf\{i\ge j:\calE_{1,i}^{\mathrm c}\text{ occurs or }
                              w_i>\eps\underline p/4\}.
\end{equation*}
For every $t_3\in[t_2,t]$, we have
\begin{align}
 \Pp(W(t_3)>z,\calE_{2,t})
 &=\sum_{j=t_1}^{t_2}
       \Pp(W(t_3)>z,\calE_{2,t},\calE_{3,j})\notag\\
 &\le\sum_{j=t_1}^{t_2}
       \Pp(W(t_3)>z,\tau_{\mathrm f,j}>t_3,\calE_{3,j})\notag\\
 &=\sum_{j=t_1}^{t_2}
       \E\!\left[\ind{\calE_{3,j}}
           \Pp(W(t_3)>z,\tau_{\mathrm f,j}>t_3\mid\calF_j)\right].
 \label{eq:workload-first-empty-decomposition}
\end{align}
The first equality follows because the events $\calE_{3,j}$ are disjoint and their union contains $\calE_{2,t}$. The inequality follows from $\calE_{2,t}\cap\calE_{3,j}\subseteq\{\tau_{\mathrm f,j}>t_3\}$. The last equality follows from the tower rule and $\calE_{3,j}\in\calF_j$.

Fix $j\in[t_1,t_2]$. To bound the conditional probability in \Cref{eq:workload-first-empty-decomposition}, set $Z_i=e^{\eps\underline p W(i)/2}\ind{\tau_{\mathrm f,j}>i}$ for $i\ge j$. On $\{Q(j)=0\}$, we will show that
\begin{equation*}
 \E[Z_{t_3}\mid\calF_j]\le\frac{C_4}{\eps^2\underline p^2}
 \qquad\text{for every }t_3\in[t_2,t],
\end{equation*}
and then apply Markov's inequality. Since $Z_i=0$ for $i\ge\tau_{\mathrm f,j}$, it suffices to consider rounds before $\tau_{\mathrm f,j}$. We first bound $\E[Z_{i+1}\mid\calF_i]$ separately for $W(i)>0$ and $W(i)=0$. On $\{W(i)>0,\tau_{\mathrm f,j}>i\}$, \Cref{lem:workload}-(i) and \Cref{lem:confidence}-(ii) give
\begin{equation}
 \E[\Delta W(i)\mid\calF_i]
 \le-\eps\underline p+g_i
 \le-\eps\underline p+2w_i
 \le-\eps\underline p/2.
 \label{eq:new-good-round-drift}
\end{equation}
We subtract the conditional mean of $\Delta W(i)$ to apply Hoeffding's lemma. We have
\begin{align}
 &\E[e^{\eps\underline p\Delta W(i)/2}\mid\calF_i]\notag\\
 &\quad=e^{(\eps\underline p/2)\E[\Delta W(i)\mid\calF_i]}
       \E\!\left[
       e^{(\eps\underline p/2)(\Delta W(i)-\E[\Delta W(i)\mid\calF_i])}
       \,\middle|\,\calF_i\right]\notag\\
 &\quad\le e^{(\eps\underline p/2)\E[\Delta W(i)\mid\calF_i]}
             e^{\eps^2\underline p^2/8}\notag\\
 &\quad\le e^{-\eps^2\underline p^2/4+\eps^2\underline p^2/8}
       =e^{-\eps^2\underline p^2/8}.
 \label{eq:workload-exponential-increment}
\end{align}
The first inequality follows from Hoeffding's lemma (\Cref{lem:aux-hoeffding}) with $z=\eps\underline p/2$ and $-1\le\Delta W(i)\le1$. The second inequality follows from \Cref{eq:new-good-round-drift}. On $\{W(i)=0\}$, we have $W(i+1)\le1$. Hence
\begin{align*}
 \E[Z_{i+1}\mid\calF_i]
 &\le\ind{\tau_{\mathrm f,j}>i}
          \E[e^{\eps\underline p W(i+1)/2}\mid\calF_i]\\
 &\le e^{-\eps^2\underline p^2/8}Z_i\ind{W(i)>0}
       +e^{\eps\underline p/2}\ind{\tau_{\mathrm f,j}>i,\ W(i)=0}\\
 &\le e^{-\eps^2\underline p^2/8}Z_i+e^{\eps\underline p/2}.
\end{align*}
The first inequality follows from $\{\tau_{\mathrm f,j}>i+1\}\subseteq\{\tau_{\mathrm f,j}>i\}\in\calF_i$. The second inequality follows from \Cref{eq:workload-exponential-increment} on $\{W(i)>0\}$ and $W(i+1)\le1$ on $\{W(i)=0\}$. Taking conditional expectations given $\calF_j$ gives, for $i\ge j$,
\begin{align*}
 \E[Z_{i+1}\mid\calF_j]
 &=\E\!\left[\E[Z_{i+1}\mid\calF_i]\mid\calF_j\right]\\
 &\le e^{-\eps^2\underline p^2/8}\E[Z_i\mid\calF_j]
       +e^{\eps\underline p/2}.
\end{align*}
The equality follows from the tower rule. By induction on $t_3-j$, on $\{Q(j)=0\}$ we obtain
\begin{align*}
 \E[Z_{t_3}\mid\calF_j]
 &\le e^{-\eps^2\underline p^2(t_3-j)/8}Z_j
       +e^{\eps\underline p/2}
             \sum_{i=0}^{t_3-j-1}e^{-\eps^2\underline p^2i/8}\\
 &\le e^{-\eps^2\underline p^2(t_3-j)/8}
       +e^{\eps\underline p/2}
             \sum_{i=0}^{t_3-j-1}e^{-\eps^2\underline p^2i/8}\\
 &\le1+\frac{e^{\eps\underline p/2}}
                   {1-e^{-\eps^2\underline p^2/8}}
 \le\frac{C_4}{\eps^2\underline p^2}.
\end{align*}
The second inequality follows from $Z_j\le1$ on $\{Q(j)=0\}$. The last inequality uses $1-e^{-z}\ge z/(1+z)$ and $0<\eps\underline p\le1$. On $\{Q(j)=0\}$,
\begin{align}
 \Pp(W(t_3)>z,\tau_{\mathrm f,j}>t_3\mid\calF_j)
 &=\Pp(Z_{t_3}>e^{\eps\underline p z/2}\mid\calF_j)\notag\\
 &\le e^{-\eps\underline p z/2}\E[Z_{t_3}\mid\calF_j]\notag\\
 &\le\frac{C_4}{\eps^2\underline p^2}e^{-\eps\underline p z/2},
 \label{eq:new-empty-workload-tail}
\end{align}
where the first inequality follows from Markov's inequality. Substituting \Cref{eq:new-empty-workload-tail} into \Cref{eq:workload-first-empty-decomposition} gives, for every $t_3\in[t_2,t]$,
\begin{align}
 \Pp(W(t_3)>z,\calE_{2,t})
 &\le\frac{C_4}{\eps^2\underline p^2}e^{-\eps\underline p z/2}
                      \sum_{j=t_1}^{t_2}\Pp(\calE_{3,j})\notag\\
 &\le\frac{C_4}{\eps^2\underline p^2}e^{-\eps\underline p z/2}.
 \label{eq:new-clean-workload-tail}
\end{align}
The last inequality follows because the events $\calE_{3,j}$ are disjoint. Taking $t_3=t$ in \Cref{eq:new-clean-workload-tail} bounds $I_2$. Substituting this bound and \Cref{eq:new-clean-probability} into \Cref{eq:workload-tail-decomposition} proves part (ii).

We next prove part (iii). If the job selected at $t$ arrived before $t-\ell$, it is present throughout $[t-\ell,t]$, so the queue cannot be empty in this interval. We separate the cases $W(t-\ell)>\eps\underline p\,\ell/4$ and $W(t-\ell)\le\eps\underline p\,\ell/4$. We bound $\Pp(W(t-\ell)>\eps\underline p\,\ell/4,\calE_{2,t})$ by \Cref{eq:new-clean-workload-tail}. In the second case, on $\calE_{2,t}$, the conditional mean increments over this nonempty interval sum to at most $-\eps\underline p\,\ell/2$. For the workload to remain nonnegative, the sum of $\Delta W(i)-\E[\Delta W(i)\mid\calF_i]$ must therefore be at least $\eps\underline p\,\ell/4$. We bound this probability by the Azuma-Hoeffding inequality. Indeed, for $2\le\ell\le\ell_{1,t}$, on $\calE_{2,t}$ with the job selected at $t$ arriving before $t-\ell$ and $W(t-\ell)\le\eps\underline p\,\ell/4$, \Cref{eq:new-good-round-drift} gives
\begin{align*}
 \sum_{i=t-\ell}^{t-1}
       (\Delta W(i)-\E[\Delta W(i)\mid\calF_i])
 &=W(t)-W(t-\ell)-\sum_{i=t-\ell}^{t-1}\E[\Delta W(i)\mid\calF_i]\\
 &\ge-\eps\underline p\,\ell/4+\eps\underline p\,\ell/2
 =\eps\underline p\,\ell/4.
\end{align*}
It follows that
\begin{align}
 &\Pp\bigl(\calE_{2,t},\
       \text{the job selected at }t\text{ arrived before }t-\ell\bigr)
       \notag\\
 &\quad\le\Pp(W(t-\ell)>\eps\underline p\,\ell/4,\calE_{2,t})\notag\\
 &\qquad{}+\Pp\!\left(
       \sum_{i=t-\ell}^{t-1}
        (\Delta W(i)-\E[\Delta W(i)\mid\calF_i])
                \ge\eps\underline p\,\ell/4\right)\notag\\
 &\quad\le\frac{C_4}{\eps^2\underline p^2}
                  e^{-\eps^2\underline p^2\ell/8}
           +e^{-\eps^2\underline p^2\ell/128}
 \le\frac{C_4}{\eps^2\underline p^2}e^{-\ell/C_3},
 \label{eq:new-short-age}
\end{align}
where the second inequality follows from \Cref{eq:new-clean-workload-tail} with $t-\ell\in[t_2,t]$ and the Azuma-Hoeffding inequality (\Cref{lem:aux-azuma}). The last inequality uses $C_3=128/(\eps\underline p)^2$. In particular, taking $\ell=\ell_{1,t}=\lceil C_3\log t\rceil$ bounds \Cref{eq:new-short-age} by $C_4/(\eps^2\underline p^2t)$.

Finally, we prove part (iv). We obtain the factor $\lambda$ by using the arrival immediately after the last empty round. On $\{Q(t)>0\}\cap\calE_{2,t}$, let $j=\max\{i\in[t_1,t-1]:Q(i)=0\}$. Then $A(j)=1$ and $Q(i)>0$ for every $i\in[j+1,t]$. We will bound the probability of remaining nonempty from $j+1$ to $t$ by an exponential function of $t-j$ and sum over $j$. We have
\begin{align}
 &\Pp(Q(t)>0,\calE_{2,t})\notag\\
 &\quad\le\sum_{j=t_1}^{t-1}
       \Pp\bigl(Q(j)=0,A(j)=1,\ 
          Q(i)>0\text{ for all }i\in[j+1,t],\calE_{2,t}\bigr)\notag\\
 &\quad=\sum_{j=t_1}^{t-1}\E\!\left[
       \ind{Q(j)=0,A(j)=1}
       \Pp\bigl(\calE_{2,t},\
          Q(i)>0\text{ for all }i\in[j+1,t]\mid\calF_{j+1}\bigr)\right].
 \label{eq:nonempty-last-empty-decomposition}
\end{align}
The equality follows from the tower rule and $\{Q(j)=0,A(j)=1\}\in\calF_{j+1}$.

Fix $j\in[t_1,t-1]$. On $\{Q(j)=0,A(j)=1\}$, we have $W(j+1)\le1$. We bound the conditional probability in \Cref{eq:nonempty-last-empty-decomposition} by $2e^{-\eps^2\underline p^2(t-j)/128}$. As in part (iii), remaining nonempty forces the sum of $\Delta W(i)-\E[\Delta W(i)\mid\calF_i]$ to be large enough to offset the negative drift. We first consider $t-j\ge1+4/(\eps\underline p)$. If $Q(i)>0$ for every $i\in[j+1,t]$ and $\calE_{2,t}$ holds, then
\begin{align*}
 &\sum_{i=j+1}^{t-1}
        (\Delta W(i)-\E[\Delta W(i)\mid\calF_i])\\
 &\quad=W(t)-W(j+1)-\sum_{i=j+1}^{t-1}\E[\Delta W(i)\mid\calF_i]\\
 &\quad\ge-1+\eps\underline p(t-j-1)/2
 \ge\eps\underline p(t-j-1)/4.
\end{align*}
The first inequality follows from $W(t)\ge0$, $W(j+1)\le1$, and \Cref{eq:new-good-round-drift}. The second follows from $t-j-1\ge4/(\eps\underline p)$. Consequently, on $\{Q(j)=0,A(j)=1\}$,
\begin{align*}
 &\Pp\bigl(\calE_{2,t},\
     Q(i)>0\text{ for all }i\in[j+1,t]\mid\calF_{j+1}\bigr)\\
 &\quad\le\Pp\!\left(
       \sum_{i=j+1}^{t-1}
        (\Delta W(i)-\E[\Delta W(i)\mid\calF_i])
       \ge\eps\underline p(t-j-1)/4
       \,\middle|\,\calF_{j+1}\right)\\
 &\quad\le e^{-\eps^2\underline p^2(t-j-1)/128}\\
 &\quad\le2e^{-\eps^2\underline p^2(t-j)/128}.
\end{align*}
The second inequality follows from the Azuma-Hoeffding inequality (\Cref{lem:aux-azuma}) and $|\Delta W(i)-\E[\Delta W(i)\mid\calF_i]|\le2$. The last uses $e^{\eps^2\underline p^2/128}\le2$. For $t-j<1+4/(\eps\underline p)$, we instead bound the conditional probability by $1$. Since $0<\eps\underline p\le1$, we have
\begin{align*}
 &\Pp\bigl(\calE_{2,t},\
     Q(i)>0\text{ for all }i\in[j+1,t]\mid\calF_{j+1}\bigr)\\
 &\quad\le1\le2e^{-\eps^2\underline p^2(t-j)/128}.
\end{align*}
Thus this bound holds for every $j\in[t_1,t-1]$. Substituting into \Cref{eq:nonempty-last-empty-decomposition} gives
\begin{align*}
 \Pp(Q(t)>0,\calE_{2,t})
 &\le2\sum_{j=t_1}^{t-1}e^{-\eps^2\underline p^2(t-j)/128}
                         \Pp(Q(j)=0,A(j)=1)\\
 &\le2\lambda\sum_{j=t_1}^{t-1}e^{-\eps^2\underline p^2(t-j)/128}\\
 &\le\frac{2\lambda}{e^{\eps^2\underline p^2/128}-1}\\
 &\le\frac{C_4\lambda}{\eps^2\underline p^2}.
\end{align*}
The second inequality follows from $\Pp(Q(j)=0,A(j)=1)=\lambda\Pp(Q(j)=0)\le\lambda$, since $A(j)$ is independent of $\calF_j$. The last inequality uses $e^z-1\ge z$.
\end{proof}

\subsection{Proof of \texorpdfstring{\Cref{lem:departure-rate-gap}}{Lemma \getrefnumber{lem:departure-rate-gap}}}
\label{app:departure-rate-gap}

\begin{proof}
Suppose $t$ satisfies \Cref{eq:learning-threshold} and
\begin{equation}
 t\ge\frac{C_4\eps(\eps\underline p^2+\lambda)^2
                    (d\beta_t^2)^5\log^6 t}
              {\lambda(\eps\underline p)^{16}\log(e/\lambda)}.
 \label{eq:gap-threshold}
\end{equation}
Let $\calE_{2,t}$ be defined in \Cref{lem:service-error}. Since $0\le g_t\le1$, we have
\begin{align}
 \E[g_t]
 &=\E[g_t\ind{\calE_{2,t}}]
       +\E[g_t\ind{\calE_{2,t}^{\mathrm c}}]\notag\\
 &\le\underbrace{\int_0^1\Pp(g_t>z,\calE_{2,t})\,dz}_{I_1}
       +\underbrace{\Pp(\calE_{2,t}^{\mathrm c})}_{I_2}.
 \label{eq:gap-expectation-decomposition}
\end{align}
We first bound $I_2$ by \Cref{lem:service-error}-(i),
\[
 I_2\le\frac{C_4(\eps\underline p^2+\lambda)
                    (d\beta_t^2)^3\log^4 t}{(\eps\underline p)^{10}t}.
\]
To bound $I_1$, we use two bounds on $\Pp(g_t>z,\calE_{2,t})$. For large $z$, we combine \Cref{lem:width-transfer} with \Cref{lem:service-error}-(iii). For small $z$, we use \Cref{lem:service-error}-(iv) to obtain a bound independent of $z$. We then choose where to split the integral to balance the two contributions.

We first obtain the bound for large $z$. We distinguish whether the job selected at $t$ arrived at or after $t-\ell$, or before $t-\ell$. Applying \Cref{lem:width-transfer} with $r=z/2$ leaves the probability of the second case on $\calE_{2,t}$ as a separate term. We bound this probability using \Cref{lem:service-error}-(iii). For $0<z\le1$ and $2\le\ell\le\ell_{1,t}\le t/8$, we have
\begin{align}
 \Pp(g_t>z,\calE_{2,t})
 &\le\Pp(w_t>z/2,\calE_{2,t})\notag\\
 &\le\frac{C_4\ell\,d\beta_t^2\log^2 t}t
       \left(\frac4{\eps z^2}+\frac\lambda{(\eps\underline p)^4}\right)\notag\\
 &\quad{}+\Pp\bigl(\calE_{2,t},\
        \text{the job selected at }t\text{ arrived before }t-\ell\bigr)\notag\\
 &\le\frac{C_4\ell\,d\beta_t^2\log^2 t}t
       \left(\frac4{\eps z^2}+\frac\lambda{(\eps\underline p)^4}\right)
       +\frac{C_4}{\eps^2\underline p^2}e^{-\ell/C_3}.
 \label{eq:gap-width-tail}
\end{align}
The first inequality follows from \Cref{lem:confidence}-(ii) on $\calE_{2,t}$, the second from \Cref{lem:width-transfer} with $r=z/2$, and the last from \Cref{lem:service-error}-(iii). The first term in the last line of \Cref{eq:gap-width-tail} increases linearly with $\ell$. The last term bounds the probability that the selected job arrived before $t-\ell$ and decreases exponentially with $\ell$. We choose $\ell$ as a function of $z$ to control both terms. For this choice, write the term containing $z^{-2}$ as
\[
 \frac{4C_4\ell\,d\beta_t^2\log^2 t}{\eps tz^2}
 =\frac{4C_4\ell a_t}{z^2},
 \qquad a_t=\frac{d\beta_t^2\log^2 t}{\eps t}.
\]
We choose $\ell$ so that $e^{-\ell/C_3}\le a_t/z^2$. This bounds the exponential term in \Cref{eq:gap-width-tail} by a multiple of $a_t/z^2$, without taking $\ell$ larger than needed. If $z<\sqrt{a_t}$, then $a_t/z^2>1$, so the term $4C_4\ell a_t/z^2$ already exceeds $1$. We therefore use this bound for $\sqrt{a_t}\le z\le1$, where $\log(z^2/a_t)\ge0$, and set
\[
 \ell=\max\{2,\lceil C_3\log(z^2/a_t)\rceil\}.
\]
Since $a_t\ge1/t$ and $\ell_{1,t}=\lceil C_3\log t\rceil$, we have $2\le\ell\le\ell_{1,t}$ and
\[
 \ell\le2+C_3\log(z^2/a_t),\qquad e^{-\ell/C_3}\le a_t/z^2.
\]
We use these inequalities for the terms containing $z^{-2}$ and $e^{-\ell/C_3}$, and $\ell\le\ell_{1,t}$ for the term containing $\lambda$. Substituting into \Cref{eq:gap-width-tail} with $C_3=128/(\eps\underline p)^2$ gives
\begin{equation}
 \Pp(g_t>z,\calE_{2,t})
 \le\frac{C_4a_t}{\eps^2\underline p^2z^2}
                   \bigl(1+\log(z^2/a_t)\bigr)
       +\frac{C_4\lambda d\beta_t^2\log^3 t}{(\eps\underline p)^6t}.
 \label{eq:gap-variable-tail}
\end{equation}
For small $z$, we instead use the fact that $g_t>z$ requires $Q(t)>0$. This gives a bound proportional to $\lambda$ that does not increase as $z$ decreases. For every $z>0$,
\begin{align}
 \Pp(g_t>z,\calE_{2,t})
 &=\Pp(g_t>z,Q(t)>0,\calE_{2,t})\notag\\
 &\le\Pp(Q(t)>0,\calE_{2,t})
 \le\frac{C_4\lambda}{\eps^2\underline p^2}.
 \label{eq:gap-nonempty-tail}
\end{align}
The equality follows from $g_t=0$ when $Q(t)=0$, and the last inequality follows from \Cref{lem:service-error}-(iv).

We now integrate \Cref{eq:gap-nonempty-tail} over $(0,z_0)$ and \Cref{eq:gap-variable-tail} over $[z_0,1]$. We require $\sqrt{a_t}\le z_0<1$ so that \Cref{eq:gap-variable-tail} applies throughout $[z_0,1]$. We have
\begin{align}
 I_1
 &=\int_0^{z_0}\Pp(g_t>z,\calE_{2,t})\,dz
       +\int_{z_0}^1\Pp(g_t>z,\calE_{2,t})\,dz\notag\\
 &\le\int_0^{z_0}\frac{C_4\lambda}{\eps^2\underline p^2}\,dz
       +\int_{z_0}^1
       \left[
       \frac{C_4a_t(1+\log(z^2/a_t))}{\eps^2\underline p^2z^2}
       +\frac{C_4\lambda d\beta_t^2\log^3 t}{(\eps\underline p)^6t}
       \right]\,dz\notag\\
 &=\frac{C_4\lambda z_0}{\eps^2\underline p^2}
   +\frac{C_4a_t}{\eps^2\underline p^2}
      \int_{z_0}^1\frac{1+\log(z^2/a_t)}{z^2}\,dz
   +\frac{C_4\lambda d\beta_t^2\log^3 t}{(\eps\underline p)^6t}(1-z_0)\notag\\
 &\le\frac{C_4\lambda z_0}{\eps^2\underline p^2}
   +\frac{C_4a_t}{\eps^2\underline p^2}
      \int_{z_0}^{\infty}\frac{1+\log(z^2/a_t)}{z^2}\,dz
   +\frac{C_4\lambda d\beta_t^2\log^3 t}{(\eps\underline p)^6t}\notag\\
 &=\frac{C_4}{\eps^2\underline p^2}
       \left[\lambda z_0+\frac{a_t}{z_0}
                         \bigl(3+\log(z_0^2/a_t)\bigr)\right]
   +\frac{C_4\lambda d\beta_t^2\log^3 t}{(\eps\underline p)^6t}.
 \label{eq:gap-cutoff-integral}
\end{align}
The second equality integrates the terms that do not depend on $z$ over intervals of lengths $z_0$ and $1-z_0$. For the last equality,
\[
 \int_{z_0}^{\infty}\frac{1+\log(z^2/a_t)}{z^2}\,dz
 =\left[-\frac{3+\log(z^2/a_t)}z\right]_{z_0}^{\infty}
 =\frac{3+\log(z_0^2/a_t)}{z_0}.
\]

We now choose $z_0$ in \Cref{eq:gap-cutoff-integral}. The term $\lambda z_0$ comes from the integral over $(0,z_0)$. The term $a_t(3+\log(z_0^2/a_t))/z_0$ bounds the integral over $[z_0,1]$. Increasing $z_0$ increases the first term and decreases the second. To bound both terms by the same order, choose
\[
 \lambda z_0=\frac{a_t\log(e/\lambda)}{z_0},
 \qquad
 z_0=\sqrt{a_t\log(e/\lambda)/\lambda}.
\]
This choice satisfies $z_0\ge\sqrt{a_t}$ and, by \Cref{eq:gap-threshold}, $z_0<1$. The factor $\log(e/\lambda)$ also bounds $3+\log(z_0^2/a_t)$ up to a numerical constant, since
\[
 3+\log(z_0^2/a_t)
 =2+\log\log(e/\lambda)+\log(e/\lambda)
 \le3\log(e/\lambda).
\]
Consequently,
\begin{align*}
 \lambda z_0+\frac{a_t}{z_0}\bigl(3+\log(z_0^2/a_t)\bigr)
 &\le\lambda z_0+\frac{3a_t\log(e/\lambda)}{z_0}\\
 &=4\sqrt{\lambda a_t\log(e/\lambda)}.
\end{align*}
Substituting into \Cref{eq:gap-cutoff-integral} gives
\[
 I_1\le\frac{C_4}{\eps^2\underline p^2}
                  \sqrt{\lambda a_t\log(e/\lambda)}
   +\frac{C_4\lambda d\beta_t^2\log^3 t}{(\eps\underline p)^6t}.
\]

Substituting the bounds on $I_1$ and $I_2$ into \Cref{eq:gap-expectation-decomposition} gives
\begin{align}
 \E[g_t]
 &\le\frac{C_4\beta_t\log t}{\eps^{5/2}\underline p^2}
                 \sqrt{\lambda d\log(e/\lambda)/t}
       +\frac{C_4\lambda d\beta_t^2\log^3 t}{(\eps\underline p)^6t}
       +\frac{C_4(\eps\underline p^2+\lambda)
                    (d\beta_t^2)^3\log^4 t}{(\eps\underline p)^{10}t}\notag\\
 &\le\frac{C_4\beta_t\log t}{\eps^{5/2}\underline p^2}
                 \sqrt{\lambda d\log(e/\lambda)/t}
       +\frac{C_4(\eps\underline p^2+\lambda)
                    (d\beta_t^2)^3\log^4 t}{(\eps\underline p)^{10}t}.
 \label{eq:gap-improved-expectation}
\end{align}
The first inequality uses the definition of $a_t$. The second follows from $\eps\underline p\le1$, $d\beta_t^2\ge1$, and $\log t\ge1$.

It remains to bound the second term in \Cref{eq:gap-improved-expectation} by the first. The condition \Cref{eq:gap-threshold} gives
\[
 \frac{(\eps\underline p^2+\lambda)
                    (d\beta_t^2)^3\log^4 t}{(\eps\underline p)^{10}t}
 \le\frac{\beta_t\log t}{\eps^{5/2}\underline p^2}
                    \sqrt{\lambda d\log(e/\lambda)/t}.
\]
Thus \Cref{eq:gap-improved-expectation} proves \Cref{lem:departure-rate-gap} after increasing the absolute constant.
\end{proof}

\subsection{Proof of \texorpdfstring{\Cref{lem:oracle-forgetting}}{Lemma \getrefnumber{lem:oracle-forgetting}}}
\label{app:new-switching}

We relate the departure rate gaps to $R_t$ by comparing policies whose decision rules differ at only one round. For each round, this separates the probability that only the oracle completes a job from the probability that the resulting queue length difference remains at $t$. For $i=0,\ldots,t-1$, let $\pi_i$ follow the learner through round $i$ and the oracle afterwards. Then $Q^{\pi_0}(t)=Q^*(t)$ and $Q^{\pi_{t-1}}(t)=Q(t)$. We compare $\pi_i$ with $\pi_{i-1}$, whose decision rules differ only at round $i$. All queues start empty and share arrivals, contexts, algorithmic randomization, and independent service variables $U_j\sim\Unif(0,1)$. A service attempt with probability $p$ succeeds if $U_j\le p$. The oracle orders jobs by decreasing $p^*(x)$ and breaks ties by arrival time. Repeated contexts correspond to distinct jobs.

At the beginning of round $i$, we have $\calX_i^{\pi_i}=\calX_i^{\pi_{i-1}}=\calX_i$. Write $p_i=p(x_i,k_i)$ and $p_i^*=p(x_i^*,k_i^*)$, with both zero when $Q(i)=0$. Then $g_i=p_i^*-p_i$. Define
\begin{equation*}
 q_{i,t}=\Pp\bigl(Q^{\pi_i}(t)-Q^{\pi_{i-1}}(t)=1
                 \mid\calF_i,\ p_i<U_i\le p_i^*\bigr)
 \quad\text{if }g_i>0,
 \qquad q_{i,t}=0\quad\text{if }g_i=0.
\end{equation*}
The event $p_i<U_i\le p_i^*$ has conditional probability $g_i$ given $\calF_i$. On this event, $\pi_{i-1}$ completes a job and $\pi_i$ does not, so $Q^{\pi_i}(i+1)-Q^{\pi_{i-1}}(i+1)=1$. The quantity $q_{i,t}$ is the conditional probability that this queue length difference is still $1$ at round $t$. The following lemma bounds $R_t$ by summing $\E[g_iq_{i,t}]$ over these comparisons.

\begin{lemma}[\citealt{bae2026queue}, Appendices B.1-B.2]
\label{lem:switching-decomposition}
For every $1\le i<t$ and $j\ge i+1$, $Q^{\pi_i}(j)-Q^{\pi_{i-1}}(j)\in\{-1,0,1\}$. This difference is nonpositive unless $p_i<U_i\le p_i^*$. On that event,
\begin{equation*}
 \calX_j^{\pi_{i-1}}\subseteq\calX_j^{\pi_i},\qquad
 Q^{\pi_i}(j)=0\ \Longrightarrow\
 \calX_{j'}^{\pi_i}=\calX_{j'}^{\pi_{i-1}}\quad\text{for every }j'\ge j.
\end{equation*}
Consequently,
\begin{equation*}
 R_t=\sum_{i=1}^{t-1}\E[Q^{\pi_i}(t)-Q^{\pi_{i-1}}(t)]
     \le\sum_{i=1}^{t-1}\E[g_iq_{i,t}].
\end{equation*}
\end{lemma}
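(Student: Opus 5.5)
The plan is to use the coupling described just before the statement. All policies share arrivals, contexts, the learner's randomization, and the service uniforms $U_j$. I would compare $\pi_{i-1}$ and $\pi_i$ one round at a time. They have the same queue $\calX_i$ at the start of round $i$. $\pi_{i-1}$ serves $(x_i^*,k_i^*)$ and succeeds iff $U_i\le p_i^*$; $\pi_i$ serves $(x_i,k_i)$ and succeeds iff $U_i\le p_i$. Since $p_i\le p_i^*$, the event ``$\pi_i$ succeeds and $\pi_{i-1}$ fails'' is empty, which leaves three cases.
\begin{itemize}
\item Both fail: the queues at $i+1$ coincide.
\item Only $\pi_{i-1}$ succeeds, i.e.\ $p_i<U_i\le p_i^*$: then $\calX_{i+1}^{\pi_{i-1}}=\calX_{i+1}^{\pi_i}\setminus\{x_i^*\}$.
\item Both succeed: the queues have equal length and differ by a swap. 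Write $\calX_{i+1}^{\pi_{i-1}}=S\cup\{u\}$ and $\calX_{i+1}^{\pi_i}=S\cup\{v\}$, where $u=x_i$ and $v=x_i^*$. Then $p^*(v)\ge p^*(u)$, because $p^*(x_i^*)=p_i^*\ge p^*(x_i)$.
\end{itemize}

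From round $i+1$ on, both policies follow the oracle. Each round the oracle serves the job of largest $p^*$ on its best server, with ties broken by arrival time, and succeeds iff $U_j\le p^*(\text{top job})$. I would prove by induction on $j\ge i+1$ that the pair $(\calX_j^{\pi_{i-1}},\calX_j^{\pi_i})$ always lies in one of three classes:
\begin{itemize}
\item[(a)] the two queues are equal;
\item[(b)] $\calX_j^{\pi_{i-1}}\subseteq\calX_j^{\pi_i}$, with exactly one extra job in $\pi_i$;
\item[(c)] a swap $(S\cup\{u\},S\cup\{v\})$ with $p^*(v)\ge p^*(u)$, the better job being in $\pi_i$;
\end{itemize}
together with a reverse-inclusion class, where $\pi_i$ has one fewer job, reachable only from (c).

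Class (a) is absorbing. For (b) with extra job $e$: if $e$ is not the top of the larger queue, both queues serve the same job with the same outcome. If $e$ is the top, let $y$ be the top of the smaller queue, so $p^*(y)\le p^*(e)$. Then $U_j\le p^*(y)$ keeps (b) with extra job $y$, $p^*(y)<U_j\le p^*(e)$ moves to (a), and otherwise nothing changes. In particular, if $Q^{\pi_i}(j)=0$ in (b), the smaller queue is empty too, so we are in (a) from then on; this gives the stated implication. For (c), the top of $S\cup\{v\}$ has $p^*$ at least that of the top of $S\cup\{u\}$, and the same threshold case analysis shows the next state is (a), (c), or $\pi_i$ one job shorter. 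It never reaches $\pi_i$ longer, so the difference stays nonpositive. Combining the classes gives $Q^{\pi_i}(j)-Q^{\pi_{i-1}}(j)\in\{-1,0,1\}$, with $+1$ possible only on $\{p_i<U_i\le p_i^*\}$, where class (b) and the inclusion hold throughout.

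For the regret bound, I would telescope
\[
R_t=\E[Q^{\pi_{t-1}}(t)-Q^{\pi_0}(t)]=\sum_{i=1}^{t-1}\E[Q^{\pi_i}(t)-Q^{\pi_{i-1}}(t)].
\]
Each summand is at most $\E[\ind{p_i<U_i\le p_i^*}\,\ind{Q^{\pi_i}(t)-Q^{\pi_{i-1}}(t)=1}]$. Since $U_i$ is independent of $\calF_i$, conditioning on $\calF_i$ gives $\Pp(p_i<U_i\le p_i^*\mid\calF_i)=g_i$, so each summand equals at most $\E[g_iq_{i,t}]$, which is the claim.

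The main obstacle I expect is the swap case (c). One has to check that tie-breaking by arrival time and repeated contexts, which are treated as distinct jobs, cannot let the $\pi_{i-1}$ queue serve a strictly better job than the $\pi_i$ queue. This needs some care, since the swap can propagate: the extra job in (b) can change identity, and (c) can persist over several rounds. My first attempt would be to phrase the invariant as a majorization of the multisets of $p^*$ values by $\pi_i$, rather than tracking individual job identities.
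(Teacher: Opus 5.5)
Your proposal is correct. Its last step is the paper's: bound $\E[Q^{\pi_i}(t)-Q^{\pi_{i-1}}(t)\mid\calF_i]$ by $\Pp(Q^{\pi_i}(t)-Q^{\pi_{i-1}}(t)=1,\ p_i<U_i\le p_i^*\mid\calF_i)$, then factor this as $g_iq_{i,t}$ using the independence of $U_i$ from $\calF_i$.

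The difference lies in the structural claims. The paper does not prove them; it imports the coupling of \citet{bae2026queue} (Appendices B.1--B.2) and remarks only that this coupling uses the service probabilities and the common priority order. Your induction over the classes (a), (b), (c) and the mirror image of (b) is a self-contained substitute. It also checks that remark directly. It uses only the threshold coupling of services and the fact that the oracle ranks jobs by $p^*$ with a strict total tie-break. It never uses slack or a covariance bound.

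The difficulty you expect in case (c) does not arise, and no majorization argument is needed. Let $T_1,T_2$ be the tops of $S\cup\{u\}$ and $S\cup\{v\}$.
\begin{itemize}
\item Since $p^*(v)\ge p^*(u)$, we get $p^*(T_2)\ge p^*(T_1)$, which excludes ``only $\pi_{i-1}$ succeeds.''
\item If $T_2\in S$, then $p^*(T_2)=p^*(T_1)$. So ``only $\pi_i$ succeeds'' requires $T_2=v$, and it lands in the reverse-inclusion class.
\item If both succeed with $T_1\ne T_2$:
  \begin{itemize}
  \item $(T_1,T_2)=(u,v)$ gives (a).
  \item $T_1=u$, $T_2=s\in S$ forces $p^*(u)=p^*(s)=p^*(v)$ and gives the swap $(s,v)$.
  \item $T_1=s\in S$, $T_2=v$ gives the swap $(u,s)$ with $p^*(s)\ge p^*(u)$.
  \item Distinct $T_1,T_2\in S$ is impossible, since arrival-time tie-breaking makes the top of $S$ unique.
  \end{itemize}
\end{itemize}
Tie-breaking therefore changes only job identities, never the order of the $p^*$ values.

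Two small repairs are needed. First, state explicitly that the reverse-inclusion class, by the mirror of your (b) analysis, moves only to itself or to (a). Second, (b) forces $Q^{\pi_i}(j)\ge1$, so ``if $Q^{\pi_i}(j)=0$ in (b)'' is vacuous as written. Instead derive $Q^{\pi_i}(j)=0\Rightarrow\calX_{j'}^{\pi_i}=\calX_{j'}^{\pi_{i-1}}$ from $\calX_j^{\pi_{i-1}}\subseteq\calX_j^{\pi_i}=\varnothing$: the state must then be (a), which is absorbing.
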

The cited coupling uses the service probabilities and the common priority order. It therefore applies under \Cref{ass:bounded-logistic,ass:average-workload} without a covariance lower bound or context-wise slack. For the last inequality, we have
\begin{align*}
 &\E[Q^{\pi_i}(t)-Q^{\pi_{i-1}}(t)\mid\calF_i]\\
 &\quad\le\Pp\bigl(Q^{\pi_i}(t)-Q^{\pi_{i-1}}(t)=1,\
                   p_i<U_i\le p_i^*\mid\calF_i\bigr)\\
 &\quad=(p_i^*-p_i)q_{i,t}=g_iq_{i,t}.
\end{align*}
The equality follows from the definition of $q_{i,t}$ and $\Pp(p_i<U_i\le p_i^*\mid\calF_i)=p_i^*-p_i$. Taking expectations and summing over $i$ gives the result.

The next lemma bounds $q_{i,t}$ using $W(i)$ and $t-i$, so that contributions from early rounds can be small even when $g_i$ is large.
\begin{lemma}
\label{lem:oracle-forgetting}
For every $1\le i<t$,
\begin{equation*}
 q_{i,t}\le\ind{W(i)+1>\eps\underline p(t-i)/32}
             +e^{-\eps^2\underline p^2(t-i)/128}.
\end{equation*}
\end{lemma}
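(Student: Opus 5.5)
The plan is to bound $q_{i,t}$ by the conditional probability that the queue under $\pi_i$ stays nonempty throughout $[i+1,t]$, and then to bound that probability with a workload drift argument for the oracle's decisions. Fix $i$ with $g_i>0$ and condition on $\calF_i$ and on the event $\calE=\{p_i<U_i\le p_i^*\}$. By \Cref{lem:switching-decomposition}, on $\calE$, if $Q^{\pi_i}(j)=0$ for some $j\in[i+1,t]$, then $\calX_t^{\pi_i}=\calX_t^{\pi_{i-1}}$, so $Q^{\pi_i}(t)-Q^{\pi_{i-1}}(t)=0$. Hence
\[
q_{i,t}\le\Pp\bigl(Q^{\pi_i}(j)>0\ \text{for all } j\in[i+1,t]\mid\calF_i,\calE\bigr).
\]
On $\calE$, $\pi_i$ serves no job at round $i$, so $\calX_{i+1}^{\pi_i}=\calX_i\cup\{X_i:A(i)=1\}$. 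Its workload therefore satisfies $W^{\pi_i}(i+1)\le W(i)+1$, because the added job contributes $\underline p/p^*(X_i)\le1$. Conditioning on $U_i$ does not affect the arrivals at round $i$ or the later randomness, since these are independent of $U_i$.

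From round $i+1$ on, $\pi_i$ follows the oracle, which serves the job with the largest $p^*(x)$ on its best server, so $g^{\pi_i}_j=0$. \Cref{lem:workload}-(i) applies under any policy and gives $\E[\Delta W^{\pi_i}(j)\mid\calF_j]\le-\eps\underline p$ on nonempty rounds, with $|\Delta W^{\pi_i}(j)|\le1$.

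Set $n=t-i-1$, the number of rounds $i+1,\dots,t-1$. Suppose the queue under $\pi_i$ stays nonempty throughout. Then, since $W^{\pi_i}(t)\ge0$, the martingale part $M=\sum_{j=i+1}^{t-1}(\Delta W^{\pi_i}(j)-\E[\Delta W^{\pi_i}(j)\mid\calF_j])$ must satisfy
\[
M\ge\eps\underline p\,n-W^{\pi_i}(i+1)\ge\eps\underline p\,n-(W(i)+1).
\]
To make this rigorous, I would stop the process at the first empty round, or equivalently define the increments to be zero after that round, so that the drift bound holds on every summed round.

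Now split on the indicator in the statement.
\begin{itemize}
\item If $W(i)+1>\eps\underline p(t-i)/32$, bound $q_{i,t}\le1$ and use the first term.
\item Otherwise, $W(i)+1\le\eps\underline p(t-i)/32$, and the bound above forces $M\ge\eps\underline p\,n-\eps\underline p(n+1)/32$.
\end{itemize}

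In the second case, handle small $n$ directly. If $\eps^2\underline p^2(t-i)/128$ is at most a constant such as $\log 2$, the exponential term is at least $1/2$, and the needed bound holds trivially after adjusting constants. More simply, when $t-i\le 2$, nonemptiness of $[i+1,t]$ is not excluded, but $e^{-\eps^2\underline p^2(t-i)/128}$ is close to $1$; for these tiny cases I would rather argue directly with the indicator. Note that $W(i)+1\ge1$, so the indicator equals $1$ whenever $\eps\underline p(t-i)/32<1$, that is, whenever $t-i<32/(\eps\underline p)$. We may therefore assume $t-i\ge32/(\eps\underline p)\ge32$. Then $n\ge(t-i)/2$, and $M\ge\eps\underline p\,n/2$ holds with room to spare.

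Azuma--Hoeffding (\Cref{lem:aux-azuma}) with increments bounded by $2$ then gives
\[
\Pp(M\ge\eps\underline p\,n/2)\le\exp\!\bigl(-(\eps\underline p\,n/2)^2/(8n)\bigr)=e^{-\eps^2\underline p^2 n/32}\le e^{-\eps^2\underline p^2(t-i)/64},
\]
which is stronger than the claimed $e^{-\eps^2\underline p^2(t-i)/128}$.

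The main obstacle is measurability. The Azuma step must be carried out conditionally on $\calF_i$ and $\calE$, with the filtration of the $\pi_i$-process, which shares arrivals and uses the service variables $U_j$ for $j>i$. The key point is that these later variables are independent of $U_i$, so the martingale-difference structure survives the conditioning on $\calE$. I would state this explicitly, and I would apply the drift lemma to the stopped workload $W^{\pi_i}(j\wedge\sigma)$, where $\sigma$ is the first empty time after $i$. For the stopped process, the drift bound holds on all rounds before $\sigma$ and the increments vanish afterwards.
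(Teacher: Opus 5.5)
Your proposal is correct and follows essentially the same route as the paper: you use the coupling to reduce $q_{i,t}$ to the probability that the $\pi_i$-queue stays nonempty on $[i+1,t]$, bound its starting workload by $W(i)+1$, and apply Azuma--Hoeffding to the workload martingale stopped at the first empty round, using the oracle's drift $-\eps\underline p$. Your observation that the indicator equals $1$ whenever $t-i<32/(\eps\underline p)$ is what the paper implicitly relies on for its step $-\eps\underline p\ell/32+\eps\underline p(\ell-1)\ge\eps\underline p\ell/4$, and your different threshold just gives a stronger exponent, $1/64$ in place of $1/128$.
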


\begin{proof}
If $g_i=0$, the result follows from $q_{i,t}=0$. Set $\ell=t-i$. Since $q_{i,t}\le1$, it suffices to consider $W(i)+1\le\eps\underline p\ell/32$. Condition on $\calF_i$ and $p_i<U_i\le p_i^*$. By \Cref{lem:switching-decomposition}, $\calX_j^{\pi_{i-1}}\subseteq\calX_j^{\pi_i}$ for $j\ge i+1$. If $Q^{\pi_i}(j)=0$ for some $j\in[i+1,t]$, both queues are empty at $j$. They then use the same oracle rule, arrivals, and service variables, so $Q^{\pi_i}(t)=Q^{\pi_{i-1}}(t)$. Therefore, the difference in the definition of $q_{i,t}$ can be $1$ only if $Q^{\pi_i}(j)>0$ for every $j\in[i+1,t]$. We bound this probability using the negative workload drift under the oracle. Define
\begin{equation*}
 W^{\pi_i}(j)=\underline p\sum_{x\in\calX_j^{\pi_i}}1/p^*(x),
 \qquad \tau=\inf\{j\ge i+1:W^{\pi_i}(j)=0\}.
\end{equation*}
On $\{p_i<U_i\le p_i^*\}$, service fails at round $i$ and an arrival adds at most one unit of workload, so $0<W^{\pi_i}(i+1)\le W(i)+1$. It follows that $\{Q^{\pi_i}(t)-Q^{\pi_{i-1}}(t)=1\}\subseteq\{\tau>t\}$, and
\begin{equation}
 q_{i,t}\le\Pp(\tau>t\mid\calF_i,\ p_i<U_i\le p_i^*).
 \label{eq:later-queue-empty-new}
\end{equation}
Let $\calF_j^+$ contain the coupled history before service at $j$. For $j\ge i+1$, policy $\pi_i$ serves each chosen job $x$ on a best server. On $\{j<\tau\}$,
\begin{align}
 \E[W^{\pi_i}(j+1)-W^{\pi_i}(j)\mid\calF_j^+]
 &=\lambda\underline p\,\E_{X\sim\calD}[1/p^*(X)]
       -p^*(x)\underline p/p^*(x)
 =-\eps\underline p,
 \label{eq:oracle-drift-new}
\end{align}
where the last equality follows from \Cref{ass:average-workload}. Also, $|W^{\pi_i}(j+1)-W^{\pi_i}(j)|\le1$.

On $\{\tau>t\}$, the conditional means in \Cref{eq:oracle-drift-new} sum to $-\eps\underline p(\ell-1)$. The initial workload is at most $W(i)+1\le\eps\underline p\ell/32$. We will show that remaining nonempty until $t$ therefore requires the sum of the mean-zero workload increments to exceed $\eps\underline p\ell/4$, and bound this probability by the Azuma-Hoeffding inequality. We write, on $\{\tau>t\}$,
\begin{equation*}
 W^{\pi_i}(t)
 =W^{\pi_i}(i+1)-\eps\underline p(\ell-1)+M_t,
\end{equation*}
where
\begin{equation*}
 M_j=\sum_{j'=i+1}^{j-1}\ind{j'<\tau}
       \bigl(W^{\pi_i}(j'+1)-W^{\pi_i}(j')+\eps\underline p\bigr),
 \qquad i+1\le j\le t.
\end{equation*}
The indicator restricts the sum to $j'<\tau$, where \Cref{eq:oracle-drift-new} applies. Hence
\begin{equation*}
 \E[M_{j+1}-M_j\mid\calF_j^+]
 =\ind{j<\tau}
   \bigl(\E[W^{\pi_i}(j+1)-W^{\pi_i}(j)\mid\calF_j^+]
         +\eps\underline p\bigr)=0.
\end{equation*}
Hence $M_j$ is a martingale with $M_{i+1}=0$ and $|M_{j+1}-M_j|\le2$. On $\{\tau>t\}$,
\begin{align*}
 M_t&=W^{\pi_i}(t)-W^{\pi_i}(i+1)+\eps\underline p(\ell-1)\\
 &>-\eps\underline p\ell/32+\eps\underline p(\ell-1)
 \ge\eps\underline p\ell/4,
\end{align*}
where the first inequality follows from $W^{\pi_i}(t)>0$ and $W^{\pi_i}(i+1)\le W(i)+1\le\eps\underline p\ell/32$. Therefore,
\begin{align*}
 \Pp(\tau>t\mid\calF_{i+1}^+)
 &\le\Pp(M_t>\eps\underline p\ell/4\mid\calF_{i+1}^+)\\
 &\le\exp\!\left(-\frac{\eps^2\underline p^2\ell^2}{128(\ell-1)}\right)
 \le e^{-\eps^2\underline p^2\ell/128}.
\end{align*}
The second inequality follows from the Azuma-Hoeffding inequality (\Cref{lem:aux-azuma}) with $\ell-1$ increments bounded by $2$. Substituting into \Cref{eq:later-queue-empty-new} gives
\begin{align*}
 q_{i,t}
 &\le\Pp(\tau>t\mid\calF_i,\ p_i<U_i\le p_i^*)\\
 &=\E\!\left[\Pp(\tau>t\mid\calF_{i+1}^+)
          \,\middle|\,\calF_i,\ p_i<U_i\le p_i^*\right]\\
 &\le e^{-\eps^2\underline p^2\ell/128},
\end{align*}
where the equality follows from the tower rule. Combining this bound with $q_{i,t}\le1$ when $W(i)+1>\eps\underline p\ell/32$ proves the lemma.
\end{proof}

\subsection{Proof of \texorpdfstring{\Cref{thm:main-upper}}{Theorem \getrefnumber{thm:main-upper}}}
\label{app:upper-proof}

\begin{proof}
Take $t\ge t_0\ge3$, where $t_0$ is chosen so that, for every $t\ge t_0$,
\begin{equation}
 t\ge\frac{C_4d\beta_t^2\log t}{(\eps\underline p)^4},
 \qquad
 t\ge\frac{C_4\eps(\eps\underline p^2+\lambda)^2
                    (d\beta_t^2)^7\log^6 t}
              {\lambda(\eps\underline p)^{20}\log(e/\lambda)}.
 \label{eq:upper-threshold}
\end{equation}
By \Cref{eq:confidence-radius}, $\beta_t^2=O(\log t)$. Hence both conditions in \Cref{eq:upper-threshold} hold for all sufficiently large $t$, with $t_0$ polynomial in $d,\kappa,S,\lambda^{-1},\eps^{-1},\underline p^{-1}$ up to logarithmic factors.

By \Cref{lem:switching-decomposition},
\begin{align}
 R_t&\le\sum_{i=1}^{t-1}\E[g_iq_{i,t}]\notag\\
 &=\underbrace{\sum_{i=1}^{\tau_{1,t}}\E[g_iq_{i,t}]}_{I_1}
 +\underbrace{\sum_{i=\tau_{1,t}+1}^{\tau_{2,t}}\E[g_iq_{i,t}]}_{I_2}
 +\underbrace{\sum_{i=\tau_{2,t}+1}^{t-1}\E[g_iq_{i,t}]}_{I_3},
 \label{eq:upper-decomposition}
\end{align}
where
\begin{equation*}
 \tau_{1,t}=t-\lceil32(\ell_{0,t}+1)/(\eps\underline p)\rceil,
 \qquad \tau_{2,t}=t-\ell_{1,t}-1.
\end{equation*}
The first condition in \Cref{eq:upper-threshold} ensures $1\le\tau_{1,t}<\tau_{2,t}<t$, $\tau_{1,t}\ge t/2$, and \Cref{eq:learning-threshold} at every $i>\eps\underline p\,t/96$. The second ensures \Cref{eq:gap-threshold} at every $i\ge t/2$. Since $0\le g_i,q_{i,t}\le1$, we can bound $\E[g_iq_{i,t}]$ by either $\E[q_{i,t}]$ or $\E[g_i]$. We use \Cref{lem:oracle-forgetting} to bound $\E[q_{i,t}]$ in $I_1,I_2$, and \Cref{lem:departure-rate-gap} to bound $\E[g_i]$ in $I_3$.

For $I_1$, since $g_i\le1$, \Cref{lem:oracle-forgetting} gives
\begin{equation}
 I_1\le\sum_{i=1}^{\tau_{1,t}}
       \Pp(W(i)+1>\eps\underline p(t-i)/32)
       +\sum_{i=1}^{\tau_{1,t}}e^{-\eps^2\underline p^2(t-i)/128}.
 \label{eq:far-sum}
\end{equation}
We first bound the first sum in \Cref{eq:far-sum}. For $i\le\eps\underline p\,t/96$, we use $W(i)\le Q(i)\le i-1$, which gives
\begin{equation*}
 W(i)+1\le i\le\eps\underline p\,t/96
             \le\eps\underline p(t-i)/32.
\end{equation*}
Thus these indices contribute zero to the first sum in \Cref{eq:far-sum}. For $\eps\underline p\,t/96<i\le\tau_{1,t}$, we use \Cref{lem:recent-empty}-(ii), which bounds $\Pp(W(i)>\ell_{0,i}\mid U_0)$. To apply this result, we compare $\eps\underline p(t-i)/32$ with $\ell_{0,i}+1$.
\begin{equation*}
 \ell_{0,i}+1\le\ell_{0,t}+1
 \le\eps\underline p(t-\tau_{1,t})/32
 \le\eps\underline p(t-i)/32.
\end{equation*}
The first inequality follows from the monotonicity of $\beta_i$, and the second from the definition of $\tau_{1,t}$. Thus $W(i)+1>\eps\underline p(t-i)/32$ implies $W(i)>\ell_{0,i}$, and we obtain
\begin{align*}
 &\sum_{i=1}^{\tau_{1,t}}\Pp(W(i)+1>\eps\underline p(t-i)/32)\\
 &\quad\le\sum_{\eps\underline p\,t/96<i\le\tau_{1,t}}
           \Pp(W(i)>\ell_{0,i})\\
 &\quad=\sum_{\eps\underline p\,t/96<i\le\tau_{1,t}}
           \E_{U_0}\!\left[\Pp(W(i)>\ell_{0,i}\mid U_0)\right]\\
 &\quad\le\frac{C_4}{\eps^2\underline p^2}
           \sum_{\eps\underline p\,t/96<i\le\tau_{1,t}}i^{-2}
 \le\frac{C_4}{(\eps\underline p)^3t}.
\end{align*}
The equality follows from the tower rule, and the last inequality uses $\sum_{i>\eps\underline p\,t/96}i^{-2}\le C_4/(\eps\underline p\,t)$.

We next bound the second sum in \Cref{eq:far-sum}. For any $1\le j\le t-\ell_{1,t}$,
\begin{align}
 \sum_{i=1}^{j}e^{-\eps^2\underline p^2(t-i)/128}
 &=e^{-\eps^2\underline p^2(t-j)/128}
       \sum_{i=0}^{j-1}e^{-\eps^2\underline p^2i/128}\notag\\
 &\le\frac{e^{-\eps^2\underline p^2(t-j)/128}}
                   {1-e^{-\eps^2\underline p^2/128}}\notag\\
 &\le\frac{C_4}{\eps^2\underline p^2}
          e^{-\eps^2\underline p^2\ell_{1,t}/128}
 \le\frac{C_4}{\eps^2\underline p^2t}.
 \label{eq:upper-exponential-sum}
\end{align}
The second inequality uses $t-j\ge\ell_{1,t}$ and $1/(1-e^{-\eps^2\underline p^2/128})\le C_4/(\eps\underline p)^2$. The last follows from $\ell_{1,t}\ge128\log t/(\eps\underline p)^2$. Applying \Cref{eq:upper-exponential-sum} with $j=\tau_{1,t}$ and substituting both bounds into \Cref{eq:far-sum} gives
\begin{equation}
 I_1\le\frac{C_4}{(\eps\underline p)^3t}.
 \label{eq:proof-first-contribution}
\end{equation}

For $I_2$, the threshold $\eps\underline p(t-i)/32-1$ can be smaller than $\ell_{0,i}$, so the comparison used to apply \Cref{lem:recent-empty}-(ii) in $I_1$ is no longer guaranteed. We instead use \Cref{lem:service-error}-(ii), which bounds $\Pp(W(i)>z)$ for every $z\ge0$. Every index in $I_2$ satisfies $i\ge t/2$ and $\eps\underline p(t-i)/32-1\ge0$. Taking $z=\eps\underline p(t-i)/32-1$ gives
\begin{align*}
 &\Pp(W(i)+1>\eps\underline p(t-i)/32)\\
 &\quad\le\frac{C_4}{\eps^2\underline p^2}
            e^{-\eps\underline p(\eps\underline p(t-i)/32-1)/2}
   +\frac{C_4(\eps\underline p^2+\lambda)
                    (d\beta_i^2)^3\log^4 i}{(\eps\underline p)^{10}i}\\
 &\quad\le\frac{C_4}{\eps^2\underline p^2}
            e^{-\eps^2\underline p^2(t-i)/128}
   +\frac{C_4(\eps\underline p^2+\lambda)
                    (d\beta_t^2)^3\log^4 t}{(\eps\underline p)^{10}t}.
\end{align*}
The second inequality uses $i^{-1}\le2/t$, $\beta_i\le\beta_t$, and $e^{\eps\underline p/2}\le e^{1/2}$. To sum the second term, which does not depend on $i$, we use
\begin{equation}
 \tau_{2,t}-\tau_{1,t}
 =\lceil32(\ell_{0,t}+1)/(\eps\underline p)\rceil-\ell_{1,t}-1
 \le\frac{32(\ell_{0,t}+1)}{\eps\underline p}
 \le\frac{C_4d\beta_t^2\log t}{(\eps\underline p)^4}.
 \label{eq:upper-middle-length}
\end{equation}
The last inequality follows from \Cref{eq:proof-lengths}. Since $\tau_{2,t}=t-\ell_{1,t}-1$, \Cref{eq:upper-exponential-sum} with $j=\tau_{2,t}$ gives
\begin{equation*}
 \sum_{i=\tau_{1,t}+1}^{\tau_{2,t}}e^{-\eps^2\underline p^2(t-i)/128}
 \le\sum_{i=1}^{\tau_{2,t}}e^{-\eps^2\underline p^2(t-i)/128}
 \le\frac{C_4}{\eps^2\underline p^2t}.
\end{equation*}
Combining this bound with \Cref{eq:upper-middle-length}, we obtain
\begin{align}
 I_2
 &\le\sum_{i=\tau_{1,t}+1}^{\tau_{2,t}}
       \Pp(W(i)+1>\eps\underline p(t-i)/32)
       +\sum_{i=\tau_{1,t}+1}^{\tau_{2,t}}
          e^{-\eps^2\underline p^2(t-i)/128}\notag\\
 &\le\frac{C_4}{\eps^2\underline p^2}
       \sum_{i=\tau_{1,t}+1}^{\tau_{2,t}}
                   e^{-\eps^2\underline p^2(t-i)/128}
       +\frac{C_4(\tau_{2,t}-\tau_{1,t})(\eps\underline p^2+\lambda)
                    (d\beta_t^2)^3\log^4 t}{(\eps\underline p)^{10}t}\notag\\
 &\le\frac{C_4}{(\eps\underline p)^4t}
   +\frac{C_4(\eps\underline p^2+\lambda)
                    (d\beta_t^2)^4\log^5 t}{(\eps\underline p)^{14}t}.
 \label{eq:proof-second-contribution}
\end{align}
The first inequality follows from \Cref{lem:oracle-forgetting} and $g_i\le1$.

For $I_3$, the remaining $t-i$ rounds can be too few for $Q^{\pi_i}$ and $Q^{\pi_{i-1}}$ to become equal by $t$. We therefore use $q_{i,t}\le1$ and bound $\E[g_i]$ by \Cref{lem:departure-rate-gap}. There are only $\ell_{1,t}$ indices in this interval.
\begin{align}
 I_3&\le\sum_{i=\tau_{2,t}+1}^{t-1}\E[g_i]\notag\\
 &\le\sum_{i=\tau_{2,t}+1}^{t-1}
       \frac{C_4\beta_i\log i}{\eps^{5/2}\underline p^2}
                 \sqrt{\lambda d\log(e/\lambda)/i}\notag\\
 &\le\frac{C_4\ell_{1,t}\beta_t\log t}{\eps^{5/2}\underline p^2}
                 \sqrt{\lambda d\log(e/\lambda)/t}\notag\\
 &\le\frac{C_4\beta_t\log^2 t}{\eps^{9/2}\underline p^4}
                 \sqrt{\lambda d\log(e/\lambda)/t}.
 \label{eq:proof-third-contribution}
\end{align}
The third inequality uses $i\ge t/2$, $\beta_i\le\beta_t$, and $t-1-\tau_{2,t}=\ell_{1,t}$. The last follows from $\ell_{1,t}\le C_4\log t/(\eps\underline p)^2$. Substituting \Cref{eq:proof-first-contribution,eq:proof-second-contribution,eq:proof-third-contribution} into \Cref{eq:upper-decomposition} gives
\begin{align}
 R_t&\le\frac{C_4}{(\eps\underline p)^3t}
       +\frac{C_4}{(\eps\underline p)^4t}
       +\frac{C_4(\eps\underline p^2+\lambda)
                    (d\beta_t^2)^4\log^5 t}{(\eps\underline p)^{14}t}
       \notag\\
 &\quad{}+\frac{C_4\beta_t\log^2 t}{\eps^{9/2}\underline p^4}
                 \sqrt{\lambda d\log(e/\lambda)/t}.
 \label{eq:finite-upper}
\end{align}
The first two terms are bounded by the third term, since
\begin{equation*}
 \frac{1}{(\eps\underline p)^3t}
 \le\frac{1}{(\eps\underline p)^4t}
 \le\frac{(\eps\underline p^2+\lambda)
                    (d\beta_t^2)^4\log^5 t}{(\eps\underline p)^{14}t},
\end{equation*}
where we use $\eps\underline p^2+\lambda\ge(\eps\underline p)^2$, $\eps\underline p\le1$, and $d\beta_t^2,\log t\ge1$. By the second condition in \Cref{eq:upper-threshold},
\[
 \frac{(\eps\underline p^2+\lambda)
                    (d\beta_t^2)^4\log^5 t}{(\eps\underline p)^{14}t}
 \le\frac{\beta_t\log^2 t}{\eps^{9/2}\underline p^4}
                    \sqrt{\lambda d\log(e/\lambda)/t}.
\]
Substituting these bounds into \Cref{eq:finite-upper} proves \Cref{thm:main-upper} with an absolute $C_1$.
\end{proof}

\section{Proof of \texorpdfstring{\Cref{thm:time-lower}}{Theorem \getrefnumber{thm:time-lower}}}
\label{app:lower-bound}

\begin{proof}
Fix an algorithm $\pi$. We construct instances in which the best server is specified by a vector of signs. We will relate the departure rate gap to the number of incorrect coordinates in the chosen server's sign vector. Let $d_0=\min\{d-1,\lfloor\log_2K\rfloor\}$, $p_0=\mu(-1/4)$, and
\[
 \eta=\min\{1/8,\sqrt{p_0}\,d_0/(2\sqrt{\lambda t})\}.
\]
There is one context $x_0$. Index $2^{d_0}$ servers by sign vectors $\v u=(u_1,\ldots,u_{d_0})\in\{-1,+1\}^{d_0}$ and instances by $\v v=(v_1,\ldots,v_{d_0})\in\{-1,+1\}^{d_0}$. Here $u_a$ and $v_a$ are the $a$th coordinates of $\v u$ and $\v v$, and $k_{\v u}$ denotes the server indexed by $\v u$. For coordinate vectors $\v e_0,\ldots,\v e_{d_0}\in\R^d$, define
\begin{align}
 \v\phi(x_0,k_{\v u})
 &=\frac1{\sqrt2}\left(\v e_0+\frac1{\sqrt{d_0}}
       \sum_{a=1}^{d_0}u_a\v e_a\right),
 \label{eq:lower-features}\\
 \v\theta^{\v v}
 &=\sqrt2\left(-\eta\v e_0+\frac{\eta}{\sqrt{d_0}}
       \sum_{a=1}^{d_0}v_a\v e_a\right).
 \label{eq:lower-family}
\end{align}
If $K>2^{d_0}$, the remaining servers duplicate these servers and retain their sign vectors. Define the Hamming distance $d_{\rm H}(\v u,\v v)=\sum_{a=1}^{d_0}\ind{u_a\ne v_a}$, the number of coordinates at which the two sign vectors differ. Then
\begin{align*}
 \ip{\v\phi(x_0,k_{\v u})}{\v\theta^{\v v}}
 &=-\eta+\frac{\eta}{d_0}\sum_{a=1}^{d_0}u_av_a
 =-\frac{2\eta}{d_0}d_{\rm H}(\v u,\v v),\\
 \norm{\v\phi(x_0,k_{\v u})}_2^2&=1,
 \qquad \norm{\v\theta^{\v v}}_2^2=4\eta^2\le1/16,
 \qquad -1/4\le\ip{\v\phi(x_0,k_{\v u})}{\v\theta^{\v v}}\le0.
\end{align*}
Thus $\v u=\v v$ gives the best server, with service probability $p^*(x_0)=1/2$, and every server has service probability at least $p_0$. The restriction $\eta\le1/8$ ensures the parameter bound above. The second bound on $\eta$ will give a KL divergence of at most $1/8$ between the history distributions of instances differing in one coordinate. The instances satisfy \Cref{ass:bounded-logistic,ass:average-workload} with $S=1/4$ and normalized capacity slack $\eps=1-2\lambda\ge1/2$. Write $\nu^{\v v}$ for the instance defined by \Cref{eq:lower-family}, and $\Pp^{\v v},\E^{\v v}$ for its probability and expectation.

We first show that $R_t^\pi(\nu^{\v v})\ge\E^{\v v}[g_{t-1}]$. To isolate the service error at round $t-1$, let $\pi'$ follow $\pi$ through round $t-2$ and use the best server in round $t-1$. Couple the queues using the same arrivals and uniform random variables $U_i\sim\Unif(0,1)$, independent across rounds and of arrivals. Under this coupling, $Q^{\pi'}(t)\ge Q^*(t)$. Also, $\pi$ and $\pi'$ coincide through round $t-2$, so $Q^\pi(t-1)=Q^{\pi'}(t-1)$. Write $p_{t-1}=1/2-g_{t-1}$. Therefore,
\begin{align}
 R_t^\pi(\nu^{\v v})
 &=\E^{\v v}[Q^\pi(t)-Q^*(t)]\notag\\
 &\ge\E^{\v v}[Q^\pi(t)-Q^{\pi'}(t)]\notag\\
 &=\E^{\v v}\bigl[\ind{Q(t-1)>0,\ p_{t-1}<U_{t-1}\le1/2}\bigr]\notag\\
 &=\E^{\v v}[g_{t-1}].
 \label{eq:lower-regret-gap}
\end{align}
The last equality follows from $\Pp(Q(t-1)>0,\ p_{t-1}<U_{t-1}\le1/2\mid\calF_{t-1})=g_{t-1}$. To obtain an instance with large regret, we lower-bound the average over all $2^{d_0}$ instances. By \Cref{eq:lower-regret-gap},
\begin{equation*}
 \sup_{\v v}R_t^\pi(\nu^{\v v})
 \ge\frac1{2^{d_0}}\sum_{\v v}R_t^\pi(\nu^{\v v})
 \ge\frac1{2^{d_0}}\sum_{\v v}\E^{\v v}[g_{t-1}].
\end{equation*}
We bound this average by pairing instances that differ in one coordinate. The learner cannot choose the correct sign with high probability in both instances of each pair. We first bound the number of service observations available for distinguishing these instances. Let $N(t-2)=\sum_{i=1}^{t-2}\ind{Q(i)>0}$. Then,
\begin{align}
 p_0\E^{\v v}[N(t-2)]
 &\le\E^{\v v}\left[\sum_{i=1}^{t-2}D(i)\right]
 =\E^{\v v}\left[\sum_{i=1}^{t-2}A(i)\right]-\E^{\v v}[Q(t-1)]
 \le\lambda t,
 \label{eq:lower-attempt-count}
\end{align}
where the first inequality follows from $p_i\ge p_0$ on nonempty rounds, the equality uses $Q(1)=0$, and the last inequality uses $Q(t-1)\ge0$.

For $a\in[d_0]$, let $\v v^{(a)}$ be $\v v$ with coordinate $a$ flipped. Let $\Pp_{t-2}^{\v v}$ be the law of the history through service in round $t-2$, including policy randomization but excluding $A(t-2)$. We will condition on $A(t-2)=1$ to ensure that the queue is nonempty at the last decision. Independence of this arrival leaves $\Pp_{t-2}^{\v v}$ unchanged under that conditioning. For $z_1,z_2\in\R$, we have
\begin{align}
 \KL\bigl(\Bern(\mu(z_1))\|\Bern(\mu(z_2))\bigr)
 &=\log\frac{1+e^{z_2}}{1+e^{z_1}}-\mu(z_1)(z_2-z_1)\notag\\
 &=\int_{z_1}^{z_2}(\mu(z)-\mu(z_1))\,dz\notag\\
 &\le\frac14\int_{\min\{z_1,z_2\}}^{\max\{z_1,z_2\}}|z-z_1|\,dz\notag\\
 &=(z_1-z_2)^2/8,
 \label{eq:canonical-logistic-kl}
\end{align}
where the first equality follows from the Bernoulli relative entropy formula, the second uses $\frac{d}{dz}\log(1+e^z)=\mu(z)$, and the inequality follows because $\mu$ is $1/4$-Lipschitz. For every server, $|\ip{\v\phi(x_0,k)}{\v\theta^{\v v}-\v\theta^{\v v^{(a)}}}|=2\eta/d_0$. By \Cref{lem:aux-kl-chain}, we have
\begin{align*}
 &\KL(\Pp_{t-2}^{\v v}\|\Pp_{t-2}^{\v v^{(a)}})\\
 &\quad=\E^{\v v}\!\left[\sum_{i=1}^{t-2}\ind{Q(i)>0}
   \KL\!\left(
    \Bern(\mu(\ip{\v\phi_i}{\v\theta^{\v v}}))
    \,\middle\|\,
    \Bern(\mu(\ip{\v\phi_i}{\v\theta^{\v v^{(a)}}}))
   \right)\right]\\
 &\quad\le\E^{\v v}\left[\sum_{i=1}^{t-2}
       \ind{Q(i)>0}\frac18\left(\frac{2\eta}{d_0}\right)^2\right]\\
 &\quad=\frac{\eta^2}{2d_0^2}\E^{\v v}[N(t-2)]\\
 &\quad\le\frac{\lambda t\eta^2}{2p_0d_0^2}\le\frac18.
\end{align*}
The first equality follows because, given the same history, only the distribution of $D(i)$ depends on $\v v$. The first inequality follows from \Cref{eq:canonical-logistic-kl}, the second follows from \Cref{eq:lower-attempt-count}, and the last follows from $\eta\le\sqrt{p_0}d_0/(2\sqrt{\lambda t})$. Pinsker's inequality (\Cref{lem:aux-pinsker}) therefore gives
\begin{equation*}
 \norm{\Pp_{t-2}^{\v v}-\Pp_{t-2}^{\v v^{(a)}}}_{\rm TV}
 \le\sqrt{\frac12\KL(\Pp_{t-2}^{\v v}\|\Pp_{t-2}^{\v v^{(a)}})}
 \le\frac14.
\end{equation*}

On $A(t-2)=1$, we have $Q(t-1)>0$. Let $\v u_{t-1}$ be the sign vector of the chosen server, with $a$th coordinate $u_{t-1,a}$. To compare its distribution in the two instances, let $H_{t-2}$ denote the history with law $\Pp_{t-2}^{\v v}$, and define $f_a(h)=\Pp_\pi(u_{t-1,a}=+1\mid H_{t-2}=h,\ A(t-2)=1)$. Here the probability is over the algorithm's randomization after observing the history $h$ and the arrival. The same algorithm is used in both instances, so $f_a$ is the same function in both and $0\le f_a(h)\le1$. Since $A(t-2)$ is independent of $H_{t-2}$, the tower rule gives $\Pp^{\v v}(u_{t-1,a}=+1\mid A(t-2)=1)=\E^{\v v}[f_a(H_{t-2})]$, and the same identity holds under $\v v^{(a)}$. The correct sign in coordinate $a$ is opposite in the two instances. For $v_a=+1$, we have
\begin{align*}
 &\Pp^{\v v}(u_{t-1,a}\ne v_a\mid A(t-2)=1)
 +\Pp^{\v v^{(a)}}(u_{t-1,a}\ne v_a^{(a)}\mid A(t-2)=1)\\
 &\quad=1-\Pp^{\v v}(u_{t-1,a}=+1\mid A(t-2)=1)
          +\Pp^{\v v^{(a)}}(u_{t-1,a}=+1\mid A(t-2)=1)\\
 &\quad=1-\E^{\v v}[f_a(H_{t-2})]
          +\E^{\v v^{(a)}}[f_a(H_{t-2})]\\
 &\quad\ge1-\norm{\Pp_{t-2}^{\v v}-\Pp_{t-2}^{\v v^{(a)}}}_{\rm TV}
 \ge3/4.
\end{align*}
The first inequality follows from \Cref{lem:aux-pinsker} applied to $f_a$, since $0\le f_a\le1$. Summing over the $2^{d_0-1}$ pairs in each coordinate yields
\begin{align}
 \frac1{2^{d_0}}\sum_{\v v}
 \E^{\v v}[d_{\rm H}(\v u_{t-1},\v v)\mid A(t-2)=1]
 &=\frac1{2^{d_0}}\sum_{a=1}^{d_0}\sum_{\v v}
   \Pp^{\v v}(u_{t-1,a}\ne v_a\mid A(t-2)=1)\notag\\
 &\ge\frac1{2^{d_0}}\sum_{a=1}^{d_0}2^{d_0-1}\cdot\frac34
 =\frac{3d_0}{8}.
 \label{eq:lower-sign-errors}
\end{align}
On $A(t-2)=1$, we have
\begin{align*}
 g_{t-1}
 &=p_{t-1}^*-p_{t-1}
 =\frac12-\mu(-\frac{2\eta}{d_0}d_{\rm H}(\v u_{t-1},\v v))\\
 &=\int_{-2\eta d_{\rm H}(\v u_{t-1},\v v)/d_0}^{0}\dot\mu(z)\,dz\\
 &\ge\int_{-2\eta d_{\rm H}(\v u_{t-1},\v v)/d_0}^{0}\dot\mu(1/4)\,dz\\
 &=\frac{2\dot\mu(1/4)\eta}{d_0}d_{\rm H}(\v u_{t-1},\v v),
\end{align*}
where the inequality follows from $\dot\mu(z)\ge\dot\mu(1/4)$ for $-1/4\le z\le0$. Thus
\begin{align}
 \frac1{2^{d_0}}\sum_{\v v}\E^{\v v}[g_{t-1}]
 &\ge\frac\lambda{2^{d_0}}\sum_{\v v}
             \E^{\v v}[g_{t-1}\mid A(t-2)=1]\notag\\
 &\ge\frac{2\lambda\dot\mu(1/4)\eta}{d_0}\cdot
       \frac1{2^{d_0}}\sum_{\v v}
       \E^{\v v}[d_{\rm H}(\v u_{t-1},\v v)\mid A(t-2)=1]\notag\\
 &\ge\frac{3\dot\mu(1/4)}4\lambda\eta,
 \label{eq:lower-service-loss}
\end{align}
where the first inequality uses $g_{t-1}\ge0$ and $\Pp(A(t-2)=1)=\lambda$, the second uses the preceding bound on $g_{t-1}$, and the last follows from \Cref{eq:lower-sign-errors}. Combining \Cref{eq:lower-regret-gap,eq:lower-service-loss} and substituting the definition of $\eta$ gives
\begin{align*}
 \sup_{\v v}R_t^\pi(\nu^{\v v})
 &\ge\frac1{2^{d_0}}\sum_{\v v}R_t^\pi(\nu^{\v v})\\
 &\ge\frac{3\dot\mu(1/4)}4\lambda\eta\\
 &=\frac{3\dot\mu(1/4)}4
       \min\{\lambda/8,\sqrt{p_0}\,d_0\sqrt{\lambda/t}/2\}.
\end{align*}
Since $d-1\ge d/2$ and $\lfloor\log_2K\rfloor\ge\log K/2$, we have $d_0\ge\min\{d,\log K\}/2$. Choosing a sufficiently large absolute constant $C_0$ proves the theorem.
\end{proof}

\section{Proof of \texorpdfstring{\Cref{thm:heavy-traffic-lower}}{Theorem \getrefnumber{thm:heavy-traffic-lower}}}
\label{app:heavy-traffic-lower}

We first prove two properties of the queue under the best server. They will allow us to add the effects of service errors over the last $\Theta(\eps^{-2})$ rounds in the proof of \Cref{thm:heavy-traffic-lower}.

\begin{lemma}
\label{lem:heavy-traffic-queue}
Let $0<\eps\le1/8$, $\lambda=1/4$, and $p^*=\lambda/(1-\eps)$. Consider a queue with Bernoulli arrivals of mean $\lambda$ and service probability $p^*$ on nonempty rounds. Arrivals and service outcomes are independent, with arrivals after service. Let
\[
 q_0=\left\lceil5/(8\eps)\right\rceil+1,\qquad
 t_0=\left\lceil60/\eps^2\right\rceil,\qquad
 \ell=\left\lfloor1/(16\eps^2)\right\rfloor.
\]
\begin{enumerate}
\renewcommand{\labelenumi}{(\roman{enumi})}
\item If $Q(1)=0$, then $\Pp(Q(i)\ge q_0)\ge1/20$ for every $i\ge t_0$.
\item If $Q(1)\ge q_0-1$, then $\Pp(Q(j)>0\text{ for every }j\in[1,\ell+1])\ge1/2$.
\end{enumerate}
\end{lemma}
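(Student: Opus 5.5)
We would prove both parts by comparing the queue, while it is nonempty, with a free random walk whose increments $\xi_j=A(j)-D(j)$ take the values $+1$ w.p.\ $a=\lambda(1-p^*)$, $-1$ w.p.\ $b=p^*(1-\lambda)$, and $0$ otherwise. The mean increment is $a-b=\lambda-p^*=-\lambda\eps/(1-\eps)\in[-2\eps/7,0)$ (using $\eps\le1/8$), and the variance of each increment is at most $a+b\le1/2$. Substituting $p^*=\lambda/(1-\eps)$ and $\lambda=1/4$ gives the ratio $\rho'=a/b=1-\eps-\lambda\eps/(1-\lambda)=1-4\eps/3$.

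For part (ii), we run the free walk $S_j=Q(1)+\sum_{j'<j}\xi_{j'}$, which coincides with $Q$ until $Q$ first hits $0$. Hence the queue stays positive on $[1,\ell+1]$ unless $\min_{j\le\ell+1}S_j\le0$. Write $S_j=Q(1)+(j-1)(a-b)+M_j$, where $M$ is a martingale. Over $\ell\le1/(16\eps^2)$ steps the drift contributes at most $\ell\cdot2\eps/7\le1/(56\eps)$. Since $Q(1)\ge q_0-1\ge5/(8\eps)$, hitting $0$ therefore requires $\max_j(-M_j)\ge5/(8\eps)-1/(56\eps)\ge 0.6/\eps$. By Kolmogorov's (Doob's $L^2$) maximal inequality, this has probability at most $\operatorname{Var}(M_{\ell+1})/(0.6/\eps)^2\le(\ell/2)\eps^2/0.36\le 1/(32\cdot0.36)<1/2$.

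For part (i), the plan is to use the stationary law and monotone coupling. The chain is a reflected birth--death chain on $\{0,1,\dots\}$: from state $0$ it moves up w.p.\ $\lambda$, and from states $n\ge1$ it moves up w.p.\ $a$ and down w.p.\ $b$. Its stationary law $\pi$ satisfies $\pi(n)\propto\rho'^{\,n}$ for $n\ge1$ (with a constant adjustment at $0$), so $\pi([q_0,\infty))\ge c\,\rho'^{\,q_0-1}$ with $c$ bounded below. Since $q_0\approx 5/(8\eps)+2$ and $\eps\le1/8$, this gives $\rho'^{\,q_0}\ge e^{-(4/3)(1+O(\eps))(5/8+2\eps)/\eps\cdot\eps}\approx e^{-0.9}$, comfortably above $1/10$. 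We then couple $Q$ (started at $0$) with a stationary copy $\tilde Q$ using shared uniforms. Under this coupling $Q\le\tilde Q$ always, and the two coalesce once $\tilde Q$ hits $0$. Consequently, for $i\ge t_0$,
\[
\Pp(Q(i)\ge q_0)\ge\pi([q_0,\infty))-\Pp(\tilde Q\text{ has not hit }0\text{ by }t_0).
\]
To bound the second term, we split on $\tilde Q(1)>m$ with $m=C/\eps$: the geometric tail gives $\pi((m,\infty))\le e^{-4C/3}$. On $\tilde Q(1)\le m$, the walk reaches $0$ within $t_0$ steps unless the martingale part exceeds $t_0\cdot\lambda\eps-m\approx 15/\eps-C/\eps$. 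We control that event by the same Kolmogorov bound or by Azuma--Hoeffding (\Cref{lem:aux-azuma}), which gives probability at most $t_0/2\cdot\eps^2/(15-C)^2$. Choosing, for example, $C=3$ makes both error terms small, and the difference stays at least $1/20$.

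The main obstacle is the constant bookkeeping in part (i). The obstacle is not conceptual; it is that the stationary tail mass near $e^{-5/6}$ must beat the sum of the coupling error terms, including the boundary adjustment of $\pi$ at $0$ and the ceilings in $q_0$ and $t_0$. If the crude Chebyshev bound on the hitting time is too weak, we would first try a sharper estimate from the exponential supermartingale $\rho'^{-\tilde Q}$, or the optional-stopping bound $\E[\text{hitting time}]\le m/(b-a)$ combined with Markov's inequality.
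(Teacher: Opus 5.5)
Your proposal is correct and has the same skeleton as the paper's proof. For (i), both use the stationary law plus the monotone coupling to a stationary copy. For (ii), both use a drift-plus-martingale argument, in which emptying within $\ell$ rounds forces a deviation of at least $17/(28\eps)$. The differences lie in the concentration steps.

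In (ii) you use Kolmogorov's maximal inequality where the paper uses Azuma--Hoeffding. Your bound is about $0.09$, against the paper's $e^{-2(17/28)^2}\approx0.48$, so your version has far more room.

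In (i) the two arguments bound the non-coalescence probability differently. The paper proves $\E_q[\tau_0-1]\le q/(p^*\eps)$ by optional stopping and averages over $\varpi$ to get $\E_\varpi[\tau_0-1]\le3(1-\eps)^2/\eps^2$. Markov's inequality then makes this error at most $1/20$, so the crude tail bound $\Pp_\varpi(Q(1)\ge q_0)=(1-\eps)(1-4\eps/3)^{q_0-1}\ge(7/8)e^{-2}$ suffices.

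Your truncation at $m=3/\eps$ followed by Chebyshev leaves a coupling error of about $e^{-4}+30/144\approx0.23$. That forces you to evaluate the stationary tail sharply, and the numbers are tighter than your sketch suggests:
\begin{itemize}
\item The tail's infimum over $\eps\in(0,1/8]$ is $(7/8)(5/6)^6\approx0.29$, approached as $\eps\uparrow1/8$, where $q_0-1=6$. It is not about $e^{-0.9}$, and your final lower bound is only about $0.07$.
\item A generic estimate such as $-\log(1-z)\le z/(1-z)$ gives only $(7/8)e^{-6/5}\approx0.26$. With your variance bound $1/2$, this no longer clears $1/20$.
\item Using the exact increment variance bound $a+b\le 11/28$ restores enough room, about $0.08$.
\end{itemize}

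Also, the formula $t_0\eps^2/(2(15-C)^2)$ you quote is a Chebyshev bound, and Azuma--Hoeffding is not a drop-in replacement. With your drift estimate $\lambda\eps$, Azuma gives at best about $e^{-1.2}\approx0.3$ for the deviation term. Near $\eps=1/8$ that is too large for every choice of $C$.

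The fallback you list, optional stopping plus Markov, is exactly the paper's argument, and it avoids this tightness.
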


\begin{proof}
We first prove (i). Let $\varpi$ be the stationary distribution, with probability mass function $\varpi(q)$. If the initial queue length satisfies $\Pp(Q(1)=q)=\varpi(q)$ for every $q\ge0$, then $\Pp(Q(j)=q)=\varpi(q)$ at every round $j$. We write $Q(1)\sim\varpi$ for this initialization and call the resulting queue a stationary queue. Write $\Pp_q,\E_q$ for probability and expectation when $Q(1)=q$, and $\Pp_\varpi,\E_\varpi$ when $Q(1)\sim\varpi$. Define $\tau_0=\inf\{j\ge1:Q(j)=0\}$.

Couple $Q$ with $Q(1)=0$ and $Q^\varpi$ with $Q^\varpi(1)\sim\varpi$, using the same arrivals $A(j)$ and uniform random variables $U_j\sim\Unif(0,1)$. Since $Q(1)=0\le Q^\varpi(1)$, this coupling gives $Q(j)\le Q^\varpi(j)$ for every $j$. Once $Q^\varpi$ becomes zero, both queues are zero and agree thereafter. Thus the probability that they differ at round $i$ is at most $\Pp_\varpi(\tau_0>i)$, and
\begin{align*}
 \Pp_0(Q(i)\ge q_0)
 &\ge\Pp_\varpi(Q(i)\ge q_0)-\Pp_\varpi(\tau_0>i)\\
 &=\underbrace{\Pp_\varpi(Q(1)\ge q_0)}_{I_1}
   -\underbrace{\Pp_\varpi(\tau_0>i)}_{I_2},
\end{align*}
where the last equality follows from stationarity. We show that $I_1\ge1/10$ and $I_2\le1/20$ for $i\ge t_0$.

To bound $I_1$, we compute $\varpi$. For $q\ge1$, the transition probabilities from $q$ to $q-1$ and $q+1$ are $p^*(1-\lambda)$ and $(1-p^*)\lambda$, respectively. From $q=0$, the probability of moving to $q=1$ is $\lambda$. The balance equations between adjacent queue lengths are
\begin{align*}
 \varpi(0)\lambda&=\varpi(1)p^*(1-\lambda),\\
 \varpi(q)(1-p^*)\lambda&=\varpi(q+1)p^*(1-\lambda),
 \qquad q\ge1.
\end{align*}
Together with $\sum_{q=0}^\infty\varpi(q)=1$, these give
\begin{align}
 \varpi(0)&=1-\lambda/p^*=\eps,\notag\\
 \Pp_\varpi(Q(1)\ge q)
 &=(1-\eps)\left(\frac{(1-p^*)\lambda}{p^*(1-\lambda)}\right)^{q-1}
 =(1-\eps)\left(1-\frac{\eps}{1-\lambda}\right)^{q-1},
 \qquad q\ge1.
 \label{eq:traffic-stationary-tail}
\end{align}
Therefore,
\begin{align*}
 I_1
 &\ge\frac78\exp\left(-\frac{2\eps}{1-\lambda}
                    \left(\frac5{8\eps}+1\right)\right)
 \ge\frac78e^{-2}>\frac1{10},
\end{align*}
where the first inequality follows from $\log(1-z)\ge-2z$ for $z\in[0,1/2]$, $\eps/(1-\lambda)\le1/6$, and $q_0-1\le5/(8\eps)+1$. The second inequality follows from $2\eps(5/(8\eps)+1)/(1-\lambda)=5/3+8\eps/3\le2$.

To bound $I_2$, we bound $\E_\varpi[\tau_0-1]$ and apply Markov's inequality. We first show that $\E_q[\tau_0-1]\le q/(p^*\eps)$ and then average over $Q(1)\sim\varpi$. Before $\tau_0$, $\E[Q(i+1)-Q(i)\mid\calF_i]=\lambda-p^*=-p^*\eps$. For every $j\ge1$,
\begin{align*}
 \E_q[Q(\tau_0\wedge j)]
 &=q+\sum_{i=1}^{j-1}\E_q\!\left[
      \ind{i<\tau_0}(Q(i+1)-Q(i))\right]\\
 &=q-p^*\eps\sum_{i=1}^{j-1}\Pp_q(i<\tau_0)\\
 &=q-p^*\eps\,\E_q[(\tau_0\wedge j)-1],
\end{align*}
where the second equality follows by conditioning on $\calF_i$, and the last uses $\sum_{i=1}^{j-1}\ind{i<\tau_0}=(\tau_0\wedge j)-1$. Since $Q(\tau_0\wedge j)\ge0$, we have $\E_q[(\tau_0\wedge j)-1]\le q/(p^*\eps)$. Letting $j\to\infty$ gives
\begin{equation*}
 \E_q[\tau_0-1]
 =\lim_{j\to\infty}\E_q[(\tau_0\wedge j)-1]
 \le\frac{q}{p^*\eps}.
\end{equation*}
By \Cref{eq:traffic-stationary-tail},
\begin{align*}
 \E_\varpi[Q(1)]
 &=\sum_{q=1}^\infty\Pp_\varpi(Q(1)\ge q)\\
 &=(1-\eps)\sum_{q=1}^\infty
       \left(1-\frac{\eps}{1-\lambda}\right)^{q-1}
 =\frac{(1-\lambda)(1-\eps)}{\eps}.
\end{align*}
Conditioning on $Q(1)$ and applying the bound for each $q$ yields
\begin{align*}
 \E_\varpi[\tau_0-1]
 &=\sum_{q=0}^\infty\varpi(q)\E_q[\tau_0-1]\\
 &\le\frac1{p^*\eps}\sum_{q=0}^\infty q\varpi(q)\\
 &=\frac{\E_\varpi[Q(1)]}{p^*\eps}
 =\frac{3(1-\eps)^2}{\eps^2}.
\end{align*}
For $i\ge t_0$, we have
\begin{align*}
 I_2
 &\le\frac{\E_\varpi[\tau_0-1]}{i-1}
 \le\frac{3(1-\eps)^2}{60-\eps^2}
 \le\frac1{20},
\end{align*}
where the first inequality follows from Markov's inequality, the second follows from $i\ge60/\eps^2$, and the last follows from $60(1-\eps)^2\le60-\eps^2$. Combining the bounds for $I_1$ and $I_2$ gives $\Pp_0(Q(i)\ge q_0)\ge I_1-I_2\ge1/20$, proving (i).

We next prove (ii). Suppose $Q(1)\ge q_0-1$. Since $\{Q(j)>0\text{ for every }j\in[1,\ell+1]\}=\{\tau_0>\ell+1\}$, it suffices to show that $\Pp(\tau_0\le\ell+1)\le1/2$. Before $\tau_0$, $\E[Q(i+1)-Q(i)\mid\calF_i]=-p^*\eps$. Subtracting these conditional expectations from the increments up to $\tau_0$ gives
\[
 M_j=\sum_{i=1}^j\ind{i<\tau_0}
            \bigl(Q(i+1)-Q(i)+p^*\eps\bigr),\qquad M_0=0.
\]
We have $\E[M_j-M_{j-1}\mid\calF_j]=0$ and $|M_j-M_{j-1}|\le2$. On $\{\tau_0\le\ell+1\}$,
\[
 M_\ell=M_{\tau_0-1}
 =-Q(1)+p^*\eps(\tau_0-1)
 \le-\frac5{8\eps}+\frac27\eps\frac1{16\eps^2}
 =-\frac{17}{28\eps},
\]
where the first equality follows from $\ind{i<\tau_0}=0$ for $i\ge\tau_0$, the second follows from $Q(\tau_0)=0$, and the inequality uses $p^*\le2/7$ and $\tau_0-1\le\ell$. Therefore,
\begin{align*}
 \Pp(\tau_0\le\ell+1)
 &=\Pp\left(\tau_0\le\ell+1,\ M_\ell\le-\frac{17}{28\eps}\right)\\
 &\le\Pp\left(M_\ell-M_0\le-\frac{17}{28\eps}\right)\\
 &\le\exp\left(-\frac{(17/(28\eps))^2}{8\ell}\right)<1/2,
\end{align*}
where the first equality uses $M_\ell\le-17/(28\eps)$ on $\{\tau_0\le\ell+1\}$. The second inequality follows from the Azuma-Hoeffding inequality (\Cref{lem:aux-azuma}) with $c_j=2$ and $z=17/(28\eps)$, and the last inequality uses $\ell\le1/(16\eps^2)$. Thus $\Pp(Q(j)>0\text{ for every }j\in[1,\ell+1])\ge1/2$.
\end{proof}

\begin{proof}[Proof of \Cref{thm:heavy-traffic-lower}]
Fix an algorithm $\pi$. We use one context $x_0$ and the features in \Cref{eq:lower-features}, with $d_0=\min\{d-1,\lfloor\log_2K\rfloor\}$. If $K>2^{d_0}$, let the remaining servers duplicate these servers. Set
\[
 \lambda=1/4,\qquad p^*=\lambda/(1-\eps),\qquad
 z^*=\log(p^*/(1-p^*)),\qquad
 \eta=\min\{1/8,d_0/\sqrt t\}/20,
\]
and define
\[
 \v\theta^{\v v}=\sqrt2\left((z^*-\eta)\v e_0+
             \frac{\eta}{\sqrt{d_0}}\sum_{a=1}^{d_0}v_a\v e_a\right),
 \qquad \v v\in\{-1,+1\}^{d_0}.
\]
Then
\begin{equation}
 \ip{\v\phi(x_0,k_{\v u})}{\v\theta^{\v v}}
 =z^*-\frac{2\eta}{d_0}d_{\rm H}(\v u,\v v),
 \qquad \max_k\mu(\ip{\v\phi(x_0,k)}{\v\theta^{\v v}})=p^*.
 \label{eq:heavy-traffic-logit}
\end{equation}
Since $p^*\in[1/4,2/7]$, $|z^*|\le\log3$, and $\eta\le1/160$,
\[
 \norm{\v\theta^{\v v}}_2^2
 =2\bigl((z^*-\eta)^2+\eta^2\bigr)
 \le2\bigl((\log3+1/160)^2+(1/160)^2\bigr)<4.
\]
The instances satisfy \Cref{ass:bounded-logistic,ass:average-workload} with $S=2$ and normalized capacity slack $1-\lambda/p^*=\eps$. Write $\nu^{\v v}$ for each instance and $\Pp^{\v v},\E^{\v v}$ for its probability and expectation.

Let $q_0,t_0,\ell$ be as in \Cref{lem:heavy-traffic-queue}. We will sum the contributions from the last $\ell=\Theta(\eps^{-2})$ rounds. For $i=0,\ldots,t-1$, let $\pi_i$ follow $\pi$ through round $i$ and use the best server thereafter. Since $Q^{\pi_0}(t)=Q^*(t)$ and $Q^{\pi_{t-1}}(t)=Q^\pi(t)$,
\begin{equation}
 R_t^\pi(\nu^{\v v})
 =\sum_{i=1}^{t-1}\E^{\v v}
       [Q^{\pi_i}(t)-Q^{\pi_{i-1}}(t)].
 \label{eq:traffic-switching}
\end{equation}
For $i\in[t-\ell,t-1]$, we will show that
\[
 \E^{\v v}[Q^{\pi_i}(t)-Q^{\pi_{i-1}}(t)]
 \ge\frac12\E^{\v v}[g_i\ind{Q^\pi(i)\ge q_0}].
\]
Write $g_i=p^*-p_i$. Couple the queues using the same arrivals $A(j)$ and uniform random variables $U_j\sim\Unif(0,1)$. In round $i$, both queues have length $Q^\pi(i)$, but $\pi_i$ and $\pi_{i-1}$ use service probabilities $p_i$ and $p^*$, respectively. After round $i$, both queues use $p^*$. Under this coupling, $Q^{\pi_i}(j)\ge Q^{\pi_{i-1}}(j)$ for $j\ge i+1$.

Define $\calE_i=\{Q^\pi(i)\ge q_0,\ p_i<U_i\le p^*\}$. On $\calE_i$, service succeeds under $\pi_{i-1}$ and fails under $\pi_i$. Hence,
\[
 Q^{\pi_{i-1}}(i+1)=Q^\pi(i)-1+A(i)\ge q_0-1,\qquad
 Q^{\pi_i}(i+1)=Q^{\pi_{i-1}}(i+1)+1.
\]
On $\calE_i$, if $Q^{\pi_{i-1}}(j)>0$ for every $j\in[i+1,t]$, then both queues have the same departures, so $Q^{\pi_i}(t)-Q^{\pi_{i-1}}(t)=1$. Since this difference is nonnegative on every sample path,
\begin{equation*}
 Q^{\pi_i}(t)-Q^{\pi_{i-1}}(t)
 \ge\ind{\calE_i}
     \ind{Q^{\pi_{i-1}}(j)>0\text{ for every }j\in[i+1,t]}.
\end{equation*}
Since $Q^{\pi_{i-1}}(i+1)\ge q_0-1$ on $\calE_i$ and $t-i-1\le\ell$, \Cref{lem:heavy-traffic-queue}-(ii) gives, on $\calE_i$,
\begin{equation*}
 \Pp^{\v v}\!\left(
 Q^{\pi_{i-1}}(j)>0\text{ for every }j\in[i+1,t]
 \,\middle|\,\calF_i\vee\sigma(U_i,A(i))\right)\ge1/2.
\end{equation*}
Taking expectations and applying the tower rule gives
\begin{align*}
 &\E^{\v v}[Q^{\pi_i}(t)-Q^{\pi_{i-1}}(t)]\\
 &\quad\ge\E^{\v v}\!\left[\ind{\calE_i}
       \ind{Q^{\pi_{i-1}}(j)>0\text{ for every }j\in[i+1,t]}\right]\\
 &\quad=\E^{\v v}\!\left[\ind{\calE_i}
       \Pp^{\v v}\!\left(
        Q^{\pi_{i-1}}(j)>0\text{ for every }j\in[i+1,t]
        \,\middle|\,\calF_i\vee\sigma(U_i,A(i))\right)\right]\\
 &\quad\ge\frac12\Pp^{\v v}(\calE_i)
 =\frac12\E^{\v v}[g_i\ind{Q^\pi(i)\ge q_0}],
\end{align*}
where the last equality follows from $\Pp(p_i<U_i\le p^*\mid\calF_i)=p^*-p_i=g_i$. Since $Q^{\pi_i}(t)-Q^{\pi_{i-1}}(t)\ge0$ for every $i$, \Cref{eq:traffic-switching} gives
\begin{align}
 R_t^\pi(\nu^{\v v})
 &\ge\sum_{i=t-\ell}^{t-1}
       \E^{\v v}[Q^{\pi_i}(t)-Q^{\pi_{i-1}}(t)]\notag\\
 &\ge\frac12\sum_{i=t-\ell}^{t-1}
       \E^{\v v}[g_i\ind{Q^\pi(i)\ge q_0}].
 \label{eq:traffic-gap-reduction}
\end{align}

Averaging \Cref{eq:traffic-gap-reduction} over $\v v$ gives
\begin{align*}
 \sup_{\v v}R_t^\pi(\nu^{\v v})
 &\ge\frac1{2^{d_0}}\sum_{\v v}R_t^\pi(\nu^{\v v})\\
 &\ge\frac12\sum_{i=t-\ell}^{t-1}
      \frac1{2^{d_0}}\sum_{\v v}
      \E^{\v v}[g_i\ind{Q^\pi(i)\ge q_0}].
\end{align*}
Fix $i\in[t-\ell,t-1]$. We bound the inner sum by comparing instances that differ in one coordinate. Let $\v u_i$ be the sign vector of the chosen server $k_i$, setting $k_j=1$ whenever $Q(j)=0$. Let $\Pp_i^{\v v}$ be the joint distribution of $((k_j,D(j),A(j))_{j=1}^{i-1},k_i)$ under algorithm $\pi$ in instance $\nu^{\v v}$. For $a\in[d_0]$, let $\v v^{(a)}$ be $\v v$ with coordinate $a$ flipped. For $v_a=+1$, we have
\begin{align}
 &\Pp^{\v v}(Q^\pi(i)\ge q_0,\ u_{i,a}\ne v_a)
 +\Pp^{\v v^{(a)}}(Q^\pi(i)\ge q_0,\ u_{i,a}\ne v_a^{(a)})\notag\\
 &\quad=\Pp^{\v v}(Q^\pi(i)\ge q_0)
       -\Pp^{\v v}(Q^\pi(i)\ge q_0,\ u_{i,a}=+1)\notag\\
 &\qquad{}+\Pp^{\v v^{(a)}}(Q^\pi(i)\ge q_0,\ u_{i,a}=+1)\notag\\
 &\quad\ge\Pp^{\v v}(Q^\pi(i)\ge q_0)
      -\norm{\Pp_i^{\v v}-\Pp_i^{\v v^{(a)}}}_{\rm TV}.
 \label{eq:traffic-paired-errors}
\end{align}
The inequality follows from the definition of total variation applied to $\{Q^\pi(i)\ge q_0,\ u_{i,a}=+1\}$. We bound the first term on the right-hand side below by $1/20$ and the second term above by $1/40$.

For the first term, $t\ge64/\eps^2$ gives $t-\ell\ge(64-1/16)/\eps^2\ge60/\eps^2+1\ge t_0$. Since $Q^\pi(i)\ge Q^*(i)$, applying \Cref{lem:heavy-traffic-queue}-(i) gives
\[
 \Pp^{\v v}(Q^\pi(i)\ge q_0)
 \ge\Pp^{\v v}(Q^*(i)\ge q_0)\ge1/20,\qquad i\ge t_0.
\]

For the second term, flipping coordinate $a$ gives $|\ip{\v\phi(x_0,k)}{\v\theta^{\v v}-\v\theta^{\v v^{(a)}}}|=2\eta/d_0$. By the chain rule for relative entropy (\Cref{lem:aux-kl-chain}),
\begin{align*}
 &\KL(\Pp_i^{\v v}\|\Pp_i^{\v v^{(a)}})\\
 &\quad=\E^{\v v}\!\left[\sum_{j=1}^{i-1}\ind{Q(j)>0}
   \KL\!\left(
    \Bern(\mu(\ip{\v\phi_j}{\v\theta^{\v v}}))
    \,\middle\|\,
    \Bern(\mu(\ip{\v\phi_j}{\v\theta^{\v v^{(a)}}}))
   \right)\right]\\
 &\quad\le\E^{\v v}\left[\sum_{j=1}^{i-1}
       \ind{Q(j)>0}\frac18\left(\frac{2\eta}{d_0}\right)^2\right]\\
 &\quad\le\frac{t\eta^2}{2d_0^2}\le\frac1{800},
\end{align*}
where the first inequality follows from \Cref{eq:canonical-logistic-kl}, the second uses $i\le t$, and the last uses $\eta\le d_0/(20\sqrt t)$. Pinsker's inequality (\Cref{lem:aux-pinsker}) then gives
\begin{equation*}
 \norm{\Pp_i^{\v v}-\Pp_i^{\v v^{(a)}}}_{\rm TV}
 \le\sqrt{\frac12\KL(\Pp_i^{\v v}\|\Pp_i^{\v v^{(a)}})}
 \le\frac1{40}.
\end{equation*}

Substituting both bounds into \Cref{eq:traffic-paired-errors} gives
\begin{equation*}
 \Pp^{\v v}(Q^\pi(i)\ge q_0,\ u_{i,a}\ne v_a)
 +\Pp^{\v v^{(a)}}(Q^\pi(i)\ge q_0,\ u_{i,a}\ne v_a^{(a)})
 \ge\frac1{20}-\frac1{40}=\frac1{40}.
\end{equation*}
Summing over $2^{d_0-1}$ pairs per coordinate gives
\begin{align}
 \frac1{2^{d_0}}\sum_{\v v}\E^{\v v}
       [\ind{Q^\pi(i)\ge q_0}d_{\rm H}(\v u_i,\v v)]
 &=\frac1{2^{d_0}}\sum_{a=1}^{d_0}\sum_{\v v}
       \Pp^{\v v}(Q^\pi(i)\ge q_0,\ u_{i,a}\ne v_a)\notag\\
 &\ge\frac1{2^{d_0}}\sum_{a=1}^{d_0}\frac{2^{d_0-1}}{40}
 =\frac{d_0}{80}.
 \label{eq:traffic-sign-errors}
\end{align}
On $\{Q^\pi(i)\ge q_0\}$, \Cref{eq:heavy-traffic-logit} gives
\begin{align*}
 g_i
 &=p^*-p_i
 =\mu(z^*)-\mu(z^*-\frac{2\eta}{d_0}d_{\rm H}(\v u_i,\v v))\\
 &=\int_{z^*-2\eta d_{\rm H}(\v u_i,\v v)/d_0}^{z^*}\dot\mu(z)\,dz\\
 &\ge\frac{2\eta}{d_0(2+e^2+e^{-2})}d_{\rm H}(\v u_i,\v v),
\end{align*}
where the inequality follows from \Cref{ass:bounded-logistic}-2 with $S=2$. Taking expectations and applying \Cref{eq:traffic-sign-errors} gives
\begin{equation}
 \frac1{2^{d_0}}\sum_{\v v}
       \E^{\v v}[g_i\ind{Q^\pi(i)\ge q_0}]
 \ge\frac{2\eta}{d_0(2+e^2+e^{-2})}\cdot\frac{d_0}{80}
 =\frac{\eta}{40(2+e^2+e^{-2})}.
 \label{eq:traffic-service-loss}
\end{equation}
Substituting \Cref{eq:traffic-service-loss} into \Cref{eq:traffic-gap-reduction} yields
\begin{align*}
 \sup_{\v v}R_t^\pi(\nu^{\v v})
 &\ge\frac12\sum_{i=t-\ell}^{t-1}
       \frac1{2^{d_0}}\sum_{\v v}
       \E^{\v v}[g_i\ind{Q^\pi(i)\ge q_0}]\\
 &\ge\frac{\ell\eta}{80(2+e^2+e^{-2})}\\
 &\ge\frac1{51200(2+e^2+e^{-2})}
       \min\{1/(8\eps^2),d_0/(\eps^2\sqrt t)\},
\end{align*}
where the last inequality uses $\ell\ge1/(32\eps^2)$ and $\eta=\min\{1/8,d_0/\sqrt t\}/20$. Since $d_0\ge\min\{d,\log K\}/2$, choosing a sufficiently large absolute constant $C_0$ proves the theorem.
\end{proof}

\section{Auxiliary Lemmas}
\label{app:auxiliary}

\begin{lemma}[Theorem~1, \citealt{abbasi2011improved}]
\label{lem:aux-self-normalized}
Let $\{\calF_i\}_{i\ge1}$ be a filtration, $\v\phi_i\in\R^d$ be $\calF_i$-measurable, and $\eta_i$ be $\calF_{i+1}$-measurable. Suppose that, for some $\sigma>0$ and every $z\in\R$, $\E[e^{z\eta_i}\mid\calF_i]\le e^{\sigma^2z^2/2}$. For a fixed positive definite matrix $\m V_0$, define $\m V_t=\m V_0+\sum_{i=1}^t\v\phi_i\v\phi_i^\top$. For every $\delta\in(0,1)$, with probability at least $1-\delta$, simultaneously for all $t\ge0$,
\begin{equation*}
 \norm{\sum_{i=1}^t\eta_i\v\phi_i}_{\m V_t^{-1}}^2
 \le\sigma^2\left(\log\frac{\det\m V_t}{\det\m V_0}
                         +2\log(1/\delta)\right).
\end{equation*}
\end{lemma}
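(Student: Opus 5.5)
This is the self-normalized bound of \citet{abbasi2011improved}, and I would follow the method of mixtures. \textbf{Step 1:} fix $\v x\in\R^d$ and set $\v S_t=\sum_{i=1}^t\eta_i\v\phi_i$ and $\bar{\m V}_t=\sum_{i=1}^t\v\phi_i\v\phi_i^\top$. Define
\[
M_t^{\v x}=\exp\bigl(\ip{\v x}{\v S_t}/\sigma-\tfrac12\norm{\v x}_{\bar{\m V}_t}^2\bigr).
\]
Conditioning on $\calF_t$ and applying the sub-Gaussian assumption with $z=\ip{\v x}{\v\phi_t}/\sigma$ gives $\E[M_t^{\v x}\mid\calF_t]\le M_{t-1}^{\v x}$. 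Hence $M_t^{\v x}$ is a nonnegative supermartingale with $M_0^{\v x}=1$.

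\textbf{Step 2:} integrate $\v x$ against the Gaussian density $\mathcal N(\v 0,\m V_0^{-1})$ to obtain $M_t=\int M_t^{\v x}\,dh(\v x)$. By Tonelli, $M_t$ is again a nonnegative supermartingale with $\E[M_t]\le1$. Completing the square in the Gaussian integral gives the closed form
\[
M_t=\Bigl(\frac{\det\m V_0}{\det\m V_t}\Bigr)^{1/2}\exp\Bigl(\frac{1}{2\sigma^2}\norm{\v S_t}_{\m V_t^{-1}}^2\Bigr),
\]
using $\m V_t=\m V_0+\bar{\m V}_t$.

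\textbf{Step 3 (uniformity in $t$), the main obstacle:} the bound must hold for all $t$ at once, not just a fixed one. I would use Ville's maximal inequality for nonnegative supermartingales, $\Pp(\sup_t M_t\ge1/\delta)\le\delta$. Alternatively, follow Abbasi-Yadkori et al.\ with the stopping time $\tau=\min\{t:M_t\ge1/\delta\}$ and apply Fatou and optional stopping to $M_{\tau\wedge t}$ to get $\E[M_\tau]\le1$ (valid even on $\{\tau=\infty\}$, since $M_t$ converges a.s.), then Markov. On the complement event, taking logarithms of $M_t<1/\delta$ gives
\[
\norm{\v S_t}_{\m V_t^{-1}}^2\le\sigma^2\bigl(\log(\det\m V_t/\det\m V_0)+2\log(1/\delta)\bigr)
\]
for all $t\ge0$, with $t=0$ trivial. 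The only care needed is measurability: $\v\phi_t$ must be $\calF_t$-measurable and $\eta_t$ must be $\calF_{t+1}$-measurable for the conditioning in Step 1, which is exactly what the hypotheses provide.
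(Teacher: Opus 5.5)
Your proposal is correct and is the same method-of-mixtures argument as Theorem~1 of \citet{abbasi2011improved}, which the paper cites rather than reproves: the per-direction supermartingale $M_t^{\v x}$, the Gaussian mixture with covariance $\m V_0^{-1}$ and its closed form, and a stopping-time or Ville maximal-inequality step for uniformity in $t$. One technical point deserves a word: you apply the sub-Gaussian hypothesis with the random, $\calF_t$-measurable value $z=\ip{\v x}{\v\phi_t}/\sigma$, which needs the conditional bound to hold for all $z$ simultaneously almost surely, and this follows from taking a regular conditional law of $\eta_t$ given $\calF_t$.
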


\begin{lemma}
\label{lem:aux-mle}
Under \Cref{ass:bounded-logistic}, let $\widehat{\v\theta}_t$ be the regularized maximum likelihood estimator in \Cref{eq:estimator}, and define
\begin{equation*}
 \v G_t=\sum_{i=1}^t
       \bigl(D(i)-\mu(\ip{\v\phi_i}{\v\theta^*})\bigr)\v\phi_i.
\end{equation*}
Then, for every $t\ge0$, we have
\begin{equation*}
 \norm{\widehat{\v\theta}_t-\v\theta^*}_{\m V_t}
 \le\kappa\norm{\v G_t}_{\m V_t^{-1}}+\sqrt\kappa S.
\end{equation*}
\end{lemma}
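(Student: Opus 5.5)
The plan is to combine the first-order optimality condition of the constrained convex program \Cref{eq:estimator} with a mean-value (integral) form of the gradient difference. Write the regularized loss as
\[
L_t(\v\theta)=\sum_{i=1}^t\bigl[-D(i)\log\mu(\ip{\v\phi_i}{\v\theta})-(1-D(i))\log(1-\mu(\ip{\v\phi_i}{\v\theta}))\bigr]+\norm{\v\theta}_2^2/2 .
\]
Its gradient is $\nabla L_t(\v\theta)=\sum_{i=1}^t(\mu(\ip{\v\phi_i}{\v\theta})-D(i))\v\phi_i+\v\theta$. Rounds with $Q(i)=0$ have $\v\phi_i=\v0$. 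They add only a constant to $L_t$ and contribute nothing to $\nabla L_t$, $\m V_t$ or $\v G_t$, so they can be ignored. Since $L_t$ is convex and $\calB$ is convex with $\v\theta^*\in\calB$, the minimizer $\widehat{\v\theta}_t$ satisfies $\ip{\nabla L_t(\widehat{\v\theta}_t)}{\v\theta^*-\widehat{\v\theta}_t}\ge0$. Also $\nabla L_t(\v\theta^*)=-\v G_t+\v\theta^*$ by the definition of $\v G_t$.

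Next, set $\v\Delta=\widehat{\v\theta}_t-\v\theta^*$. The fundamental theorem of calculus applied to $\mu$ along the segment gives $\nabla L_t(\widehat{\v\theta}_t)-\nabla L_t(\v\theta^*)=\m H\v\Delta$, where
\[
\m H=\m I+\sum_{i=1}^t\alpha_i\v\phi_i\v\phi_i^\top,\qquad \alpha_i=\int_0^1\dot\mu\bigl(\ip{\v\phi_i}{\v\theta^*+s\v\Delta}\bigr)\,ds .
\]
Every point of the segment lies in $\calB$ by convexity, and $\norm{\v\phi_i}_2\le1$. Hence each argument of $\dot\mu$ lies in $[-S,S]$, and \Cref{ass:bounded-logistic}-2 gives $\alpha_i\ge1/\kappa$. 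Therefore $\m H\succeq\m I+\kappa^{-1}\sum_i\v\phi_i\v\phi_i^\top=\m V_t/\kappa$. This is the step where the constraint $\widehat{\v\theta}_t\in\calB$ is essential, and I expect it to be the only delicate point.

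Combining the two facts, the optimality condition gives $\v\Delta^\top\m H\v\Delta\le\ip{-\nabla L_t(\v\theta^*)}{\v\Delta}=\ip{\v G_t-\v\theta^*}{\v\Delta}$. The left-hand side is at least $\norm{\v\Delta}_{\m V_t}^2/\kappa$. By Cauchy--Schwarz in the $\m V_t$-geometry, the right-hand side is at most $(\norm{\v G_t}_{\m V_t^{-1}}+\norm{\v\theta^*}_{\m V_t^{-1}})\norm{\v\Delta}_{\m V_t}$. Since $\m V_t\succeq\kappa\m I$, we have $\norm{\v\theta^*}_{\m V_t^{-1}}\le S/\sqrt\kappa$. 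Dividing by $\norm{\v\Delta}_{\m V_t}$ (the case $\v\Delta=\v0$ is trivial) and multiplying by $\kappa$ yields $\norm{\v\Delta}_{\m V_t}\le\kappa\norm{\v G_t}_{\m V_t^{-1}}+\sqrt\kappa S$. This argument holds for every $t\ge0$ and every sample path.
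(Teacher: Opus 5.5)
Your proposal is correct and follows the paper's proof essentially step for step. The paper likewise combines first-order optimality over $\calB$ with the integral form of the gradient difference, whose Hessian is bounded below by $\m V_t/\kappa$ along the segment in $\calB$, and then finishes with Cauchy--Schwarz in the $\m V_t$-norm and $\norm{\v\theta^*}_{\m V_t^{-1}}\le S/\sqrt\kappa$.
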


\begin{proof}
Let $f_t$ be the objective in \Cref{eq:estimator}, and write $\v\Delta_t=\widehat{\v\theta}_t-\v\theta^*$. For every $\v\theta\in\calB$, we have
\begin{align*}
 \nabla f_t(\v\theta)
 &=\v\theta+\sum_{i=1}^t
   \bigl(\mu(\ip{\v\phi_i}{\v\theta})-D(i)\bigr)\v\phi_i,\\
 \nabla^2f_t(\v\theta)
 &=\m I+\sum_{i=1}^t
   \dot\mu(\ip{\v\phi_i}{\v\theta})\v\phi_i\v\phi_i^\top
 \succeq\m V_t/\kappa,
\end{align*}
where the inequality follows from \Cref{ass:bounded-logistic}. Since $\widehat{\v\theta}_t$ minimizes $f_t$ over $\calB$ and $\v\theta^*\in\calB$, $\ip{\v\Delta_t}{\nabla f_t(\widehat{\v\theta}_t)}\le0$. Therefore,
\begin{align*}
 \norm{\v\Delta_t}_{\m V_t}^2/\kappa
 &\le\int_0^1
   \v\Delta_t^\top\nabla^2f_t(\v\theta^*+z\v\Delta_t)
   \v\Delta_t\,dz\\
 &=\ip{\v\Delta_t}{
      \nabla f_t(\widehat{\v\theta}_t)-\nabla f_t(\v\theta^*)}\\
 &\le\ip{\v\Delta_t}{\v G_t-\v\theta^*}\\
 &\le\norm{\v\Delta_t}_{\m V_t}
   \bigl(\norm{\v G_t}_{\m V_t^{-1}}
         +\norm{\v\theta^*}_{\m V_t^{-1}}\bigr)\\
 &\le\norm{\v\Delta_t}_{\m V_t}
   \bigl(\norm{\v G_t}_{\m V_t^{-1}}+S/\sqrt\kappa\bigr),
\end{align*}
where the first inequality follows from $\v\theta^*+z\v\Delta_t\in\calB$ for $z\in[0,1]$ and $\nabla^2f_t(\v\theta)\succeq\m V_t/\kappa$ on $\calB$, the second inequality follows from $\ip{\v\Delta_t}{\nabla f_t(\widehat{\v\theta}_t)}\le0$ and $\nabla f_t(\v\theta^*)=\v\theta^*-\v G_t$, the third inequality follows from Cauchy-Schwarz and the triangle inequality, and the last inequality follows from $\m V_t\succeq\kappa\m I$ and $\norm{\v\theta^*}_2\le S$. Dividing by $\norm{\v\Delta_t}_{\m V_t}/\kappa$ gives the result when $\v\Delta_t\ne0$. The result is immediate when $\v\Delta_t=0$.
\end{proof}

\begin{lemma}[Lemma~11, \citealt{abbasi2011improved}]
\label{lem:aux-elliptical}
Let $\m V_0$ be positive definite and $\v\phi_1,\ldots,\v\phi_t\in\R^d$. Define $\m V_i=\m V_0+\sum_{j=1}^i\v\phi_j\v\phi_j^\top$. Then
\begin{equation*}
 \sum_{i=1}^t\min\{1,\norm{\v\phi_i}_{\m V_{i-1}^{-1}}^2\}
 \le2\log\frac{\det\m V_t}{\det\m V_0}.
\end{equation*}
In particular, if $\m V_0=\kappa\m I$, $\kappa\ge1$, and $\norm{\v\phi_i}_2\le1$ for every $i$, then
\begin{equation*}
 \sum_{i=1}^t\norm{\v\phi_i}_{\m V_{i-1}^{-1}}^2
 \le2d\log(1+t/(d\kappa)).
\end{equation*}
\end{lemma}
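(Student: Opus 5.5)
The plan is the standard determinant-telescoping argument. Write $x_i=\norm{\v\phi_i}_{\m V_{i-1}^{-1}}^2\ge0$. Since $\m V_i=\m V_{i-1}+\v\phi_i\v\phi_i^\top$ and $\m V_{i-1}$ is positive definite, the matrix determinant lemma gives
\begin{equation*}
 \det\m V_i=\det\m V_{i-1}\,\det\bigl(\m I+\m V_{i-1}^{-1/2}\v\phi_i\v\phi_i^\top\m V_{i-1}^{-1/2}\bigr)=\det\m V_{i-1}\,(1+x_i).
\end{equation*}
Telescoping over $i=1,\ldots,t$ then yields
\begin{equation*}
 \log\frac{\det\m V_t}{\det\m V_0}=\sum_{i=1}^t\log(1+x_i).
\end{equation*}

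The first claim then follows from the scalar inequality $\min\{1,x\}\le2\log(1+x)$ for $x\ge0$, applied termwise.
\begin{itemize}
\item For $x\in[0,1]$, concavity of $\log(1+x)$ on $[0,1]$ puts it above the chord from $(0,0)$ to $(1,\log 2)$. Hence $\log(1+x)\ge x\log2\ge x/2$.
\item For $x>1$, we have $2\log(1+x)\ge2\log2>1$.
\end{itemize}

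For the particular case, $\m V_{i-1}\succeq\kappa\m I$ with $\kappa\ge1$ and $\norm{\v\phi_i}_2\le1$ give $x_i\le\norm{\v\phi_i}_2^2/\kappa\le1$. So the minimum equals $x_i$, and the sum is bounded by $2\log(\det\m V_t/\det\m V_0)$. For the determinant, apply the arithmetic–geometric mean inequality to the eigenvalues of $\m V_t$:
\begin{equation*}
 \det\m V_t\le\Bigl(\frac{\operatorname{tr}\m V_t}{d}\Bigr)^d\le\Bigl(\kappa+\frac td\Bigr)^d.
\end{equation*}
Here the trace bound uses $\operatorname{tr}\m V_t=d\kappa+\sum_i\norm{\v\phi_i}_2^2\le d\kappa+t$. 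Dividing by $\det\m V_0=\kappa^d$ gives $\log(\det\m V_t/\det\m V_0)\le d\log(1+t/(d\kappa))$, which proves the displayed bound.

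There is no real obstacle. The only points to check are the scalar inequality at the boundary $x=1$, and that the removal of the minimum in the particular case really uses $\kappa\ge1$.
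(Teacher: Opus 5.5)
Your proposal is correct and is essentially the standard argument: the paper gives no proof of its own and cites Lemma~11 of \citet{abbasi2011improved}, which is proved by the same determinant telescoping $\det\m V_i=\det\m V_{i-1}(1+\norm{\v\phi_i}_{\m V_{i-1}^{-1}}^2)$ together with $\min\{1,x\}\le2\log(1+x)$. Your arithmetic--geometric mean bound $\det\m V_t\le(\kappa+t/d)^d$ for the special case is the same trace argument the paper uses in the proof of \Cref{lem:confidence}, and you correctly use $\kappa\ge1$ to drop the minimum.
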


\begin{lemma}[Hoeffding's lemma]
\label{lem:aux-hoeffding}
Let $Y$ be a random variable with $a\le Y\le b$ almost surely, and let $\calG$ be a sigma-field. For every $z\in\R$,
\begin{equation*}
 \E\!\left[e^{z(Y-\E[Y\mid\calG])}\mid\calG\right]
 \le e^{z^2(b-a)^2/8}.
\end{equation*}
\end{lemma}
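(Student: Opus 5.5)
The plan is the standard convexity argument for Hoeffding's lemma, carried out pointwise in the conditioning variable. Write $m=\E[Y\mid\calG]$, which is $\calG$-measurable and satisfies $a\le m\le b$ almost surely. If $a=b$, then $Y=m$ almost surely, both sides equal $1$, and there is nothing to prove. So assume $a<b$.

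First, use convexity of $y\mapsto e^{zy}$ on $[a,b]$. For every $y\in[a,b]$,
\begin{equation*}
 e^{zy}\le\frac{b-y}{b-a}e^{za}+\frac{y-a}{b-a}e^{zb}.
\end{equation*}
Substitute $y=Y$ and take conditional expectations given $\calG$. Linearity gives
\begin{equation*}
 \E[e^{zY}\mid\calG]\le\frac{b-m}{b-a}e^{za}+\frac{m-a}{b-a}e^{zb}.
\end{equation*}
Since $e^{-zm}$ is $\calG$-measurable and positive, it can be pulled inside the conditional expectation. Hence $\E[e^{z(Y-m)}\mid\calG]$ is bounded by the right-hand side multiplied by $e^{-zm}$.

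Next, reparametrize. Set $p=(m-a)/(b-a)\in[0,1]$ and $h=z(b-a)$; both are $\calG$-measurable. A short computation rewrites the bound as $e^{L(h)}$, where
\begin{equation*}
 L(h)=-hp+\log\bigl(1-p+pe^{h}\bigr).
\end{equation*}
The logarithm is well defined because $1-p+pe^h>0$. It then suffices to show $L(h)\le h^2/8$ for every real $h$ and every fixed $p\in[0,1]$. This is a deterministic inequality applied pointwise in $\omega$.

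The main obstacle is controlling the curvature of $L$. First, $L(0)=0$. Second, $L'(h)=-p+q(h)$ with $q(h)=pe^h/(1-p+pe^h)$, so $L'(0)=0$. Third, $L''(h)=q(h)\bigl(1-q(h)\bigr)\le1/4$ because $q(h)\in[0,1]$. Taylor's theorem with Lagrange remainder then gives $L(h)=L''(\xi)h^2/2\le h^2/8$ for some $\xi$ between $0$ and $h$. Substituting $h=z(b-a)$ yields
\begin{equation*}
 \E\!\left[e^{z(Y-\E[Y\mid\calG])}\mid\calG\right]\le e^{z^2(b-a)^2/8}
\end{equation*}
almost surely, which is the claim.
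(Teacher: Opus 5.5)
Your proof is correct, and it is the standard convexity-plus-second-order-Taylor argument for Hoeffding's lemma, made conditional by pulling the $\calG$-measurable factor $e^{-z\E[Y\mid\calG]}$ out of the conditional expectation and applying the deterministic bound $L(h)\le h^2/8$ pointwise. The paper states this lemma without proof as a standard auxiliary fact, so there is nothing to compare beyond this; the only implicit technicality is choosing a version of $\E[Y\mid\calG]$ with values in $[a,b]$ everywhere, which is harmless.
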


\begin{lemma}[Azuma-Hoeffding inequality]
\label{lem:aux-azuma}
Let $\{\calF_i\}_{i\ge1}$ be a filtration and $\eta_i$ be $\calF_{i+1}$-measurable with $\E[\eta_i\mid\calF_i]=0$ and $|\eta_i|\le c_i$, where $c_i>0$ are constants. For $1\le t_1\le t_2$ and $z\ge0$,
\begin{equation*}
 \Pp\!\left(\max_{t_1\le j\le t_2}\sum_{i=t_1}^{j}\eta_i\ge z
                  \,\middle|\,\calF_{t_1}\right)
 \le\exp\!\left(-\frac{z^2}{2\sum_{i=t_1}^{t_2}c_i^2}\right).
\end{equation*}
The same bound holds for $\min_{t_1\le j\le t_2}\sum_{i=t_1}^{j}\eta_i\le-z$ and for either tail of the terminal sum $\sum_{i=t_1}^{t_2}\eta_i$.
\end{lemma}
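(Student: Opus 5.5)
The approach is the standard exponential-supermartingale (Chernoff) argument, combined with optional stopping so that the bound covers the running maximum. Everything is done conditionally on $\calF_{t_1}$. Write $S_j=\sum_{i=t_1}^{j}\eta_i$ for $t_1-1\le j\le t_2$, with $S_{t_1-1}=0$, and $\Sigma=\sum_{i=t_1}^{t_2}c_i^2$. If $z=0$ the bound is trivial, so assume $z>0$.

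\textbf{Step 1: the one-step moment bound.} Fix $s>0$. Since $\E[\eta_i\mid\calF_i]=0$ and $-c_i\le\eta_i\le c_i$, Hoeffding's lemma (\Cref{lem:aux-hoeffding}) with $a=-c_i$, $b=c_i$ and $\calG=\calF_i$ gives
\begin{equation*}
 \E[e^{s\eta_i}\mid\calF_i]\le e^{s^2c_i^2/2}.
\end{equation*}
Hence the process
\begin{equation*}
 M_j=\exp\Bigl(sS_j-\tfrac{s^2}{2}\textstyle\sum_{i=t_1}^{j}c_i^2\Bigr),\qquad M_{t_1-1}=1,
\end{equation*}
is a nonnegative supermartingale with respect to $(\calF_{j+1})_{j\ge t_1-1}$. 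Here $M_j$ is $\calF_{j+1}$-measurable and $\E[M_j\mid\calF_j]\le M_{j-1}$ by the tower rule. Since $M_{t_1-1}=1$ and the index starts at $\calF_{t_1}$, conditioning on $\calF_{t_1}$ is consistent with this filtration.

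\textbf{Step 2: stopping and optimizing $s$.} Let $\tau=\min\{j\in[t_1,t_2]:S_j\ge z\}$, with $\tau=t_2$ if no such $j$ exists. This is a bounded stopping time, so optional stopping gives $\E[M_\tau\mid\calF_{t_1}]\le1$. On the event $\{\max_{t_1\le j\le t_2}S_j\ge z\}$ we have $S_\tau\ge z$, and because the $c_i$ are constants,
\begin{equation*}
 M_\tau\ge\exp\bigl(sz-s^2\Sigma/2\bigr).
\end{equation*}
Markov's inequality then yields
\begin{equation*}
 \Pp\Bigl(\max_{t_1\le j\le t_2}S_j\ge z\,\Big|\,\calF_{t_1}\Bigr)\le e^{-sz+s^2\Sigma/2}.
\end{equation*}
Choosing $s=z/\Sigma$ gives $\exp(-z^2/(2\Sigma))$, which is the stated bound.

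\textbf{Step 3: the remaining tails.} The lower tail follows by applying Steps 1--2 to $-\eta_i$, which satisfies the same hypotheses, since $\{\min_j S_j\le-z\}=\{\max_j(-S_j)\ge z\}$. Both tails of the terminal sum follow by monotonicity, because $\{S_{t_2}\ge z\}\subseteq\{\max_j S_j\ge z\}$, and similarly for the lower tail.

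There is no real obstacle here. The only point needing care is the filtration bookkeeping: $\eta_i$ is $\calF_{i+1}$-measurable, and optional stopping is applied conditionally on $\calF_{t_1}$, which is valid because $\tau$ is bounded and $M$ is nonnegative.
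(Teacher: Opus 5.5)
Your proof is correct. The conditional Hoeffding bound with range $2c_i$, the exponential supermartingale adapted to $(\calF_{j+1})_{j\ge t_1-1}$, optional stopping at the first crossing time, and the choice $s=z/\Sigma$ give exactly the stated constant, and the lower and terminal tails follow as you say. The paper does not prove this lemma; it lists it as a standard auxiliary tool, so your argument (the usual maximal-inequality derivation) supplies the standard proof. One cosmetic fix: $\E[M_j\mid\calF_j]\le M_{j-1}$ follows by pulling out the $\calF_j$-measurable factor $M_{j-1}$, not from the tower rule.
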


\begin{lemma}[Chain rule for relative entropy]
\label{lem:aux-kl-chain}
Let $P,Q$ be probability laws of $(Y_1,\ldots,Y_t)$ with regular conditional laws $P_i(\cdot\mid Y_1,\ldots,Y_{i-1})$ and $Q_i(\cdot\mid Y_1,\ldots,Y_{i-1})$, respectively. Then
\begin{equation*}
 \KL(P\|Q)=\sum_{i=1}^t\E_P\!\left[
 \KL\bigl(P_i(\cdot\mid Y_1,\ldots,Y_{i-1})
       \|Q_i(\cdot\mid Y_1,\ldots,Y_{i-1})\bigr)\right],
\end{equation*}
where the $i=1$ term uses the marginal laws of $Y_1$.
\end{lemma}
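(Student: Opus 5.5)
This is a proof plan for the last auxiliary statement, the chain rule for relative entropy (\Cref{lem:aux-kl-chain}). I would prove it by induction on $t$, or equivalently by disintegrating the joint densities. The case $t=1$ is the definition.

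\textbf{Setup.} First reduce to the absolutely continuous case. If $P\not\ll Q$, the left side is $+\infty$. Some conditional term must then also be infinite on a set of positive $P$-measure, since otherwise the product of the conditional densities would give a density of $P$ with respect to $Q$. So assume $P\ll Q$. Choose a common dominating measure, say $\nu=(P+Q)/2$ on each coordinate space. Write the joint densities through the regular conditional laws:
\begin{equation*}
 p(y_{1:t})=\prod_{i=1}^t p_i(y_i\mid y_{1:i-1}),
 \qquad
 q(y_{1:t})=\prod_{i=1}^t q_i(y_i\mid y_{1:i-1}),
\end{equation*}
where $p_i,q_i$ are densities of $P_i(\cdot\mid y_{1:i-1})$ and $Q_i(\cdot\mid y_{1:i-1})$. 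Existence of measurable versions of these conditional densities is standard, since regular conditional laws are assumed to exist.

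\textbf{Decomposition.} Then
\begin{equation*}
 \log\frac{p}{q}(y_{1:t})=\sum_{i=1}^t\log\frac{p_i(y_i\mid y_{1:i-1})}{q_i(y_i\mid y_{1:i-1})}
\end{equation*}
holds $P$-almost surely. Take $\E_P$ and apply the tower rule to each summand, conditioning on $Y_{1:i-1}$. The inner conditional expectation of the $i$-th summand is exactly $\KL(P_i(\cdot\mid Y_{1:i-1})\|Q_i(\cdot\mid Y_{1:i-1}))$. Summing over $i$ gives the identity.

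\textbf{Main obstacle: integrability.} The log-likelihood-ratio summands may fail to be integrable, so splitting $\E_P$ of the sum into a sum of expectations needs care. I would handle this with the two-block version, proved by induction. For $P$ on $(Y_{1:t-1},Y_t)$, apply the nonnegative form:
\begin{equation*}
 \KL(P\|Q)=\KL(P_{1:t-1}\|Q_{1:t-1})+\E_P\bigl[\KL(P_t(\cdot\mid Y_{1:t-1})\|Q_t(\cdot\mid Y_{1:t-1}))\bigr].
\end{equation*}
To justify it, note that each KL term equals an integral of $f(p/q)$ with $f(x)=x\log x-x+1\ge0$, integrated against $q$. With this nonnegative integrand, Tonelli's theorem applies and no integrability is needed. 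Writing $r_1=p_{1:t-1}/q_{1:t-1}$ and $r_2=p_t/q_t$, the identity
\begin{equation*}
 f(r_1r_2)=r_1f(r_2)+f(r_1)+(r_1-1)(r_2-1)-\ldots
\end{equation*}
needs to be checked directly. The cleaner route is to use $x\log x-x+1$ and verify that the cross term integrates to zero, because $\int q_t r_2=1$. Iterating this two-block identity over $t$ yields the stated sum, with the $i=1$ term given by the marginals. In the applications, the sums are finite and the Bernoulli KLs are bounded, so the identity is used only in the finite case.
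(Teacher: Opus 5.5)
The paper states this lemma as a standard auxiliary fact and gives no proof, so your argument has to stand on its own. Its outline is the standard one: reduce to $P\ll Q$, factor the likelihood ratio, prove the two-block identity through the nonnegative integrand $f(x)=x\log x-x+1$, and induct on $t$. However, two steps are wrong as written.

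First, a fixed dominating measure on each coordinate space need not dominate the conditional laws. Take $P=Q$ equal to the law of $(Y_1,Y_1)$ with $Y_1\sim\Unif(0,1)$. Then $P_2(\cdot\mid y)=\delta_y$ has no density with respect to the $Y_2$-marginal of $(P+Q)/2$, so the conditional densities $p_i,q_i$ you invoke do not exist in general. You do not need them. Once $P\ll Q$, set $r=dP/dQ$, $r_1=dP_{1:t-1}/dQ_{1:t-1}$ (a version of $\E_Q[r\mid Y_{1:t-1}]$), and $r_2=r/r_1$ on $\{r_1>0\}$, with $r_2=1$ elsewhere. Disintegrating $Q$ gives $\int r_2\,dQ_t(\cdot\mid y)=1$ for $Q_{1:t-1}$-a.e.\ $y$ with $r_1(y)>0$. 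It also shows that $A\mapsto\int_A r_2\,dQ_t(\cdot\mid y)$ is a regular conditional law of $Y_t$ given $Y_{1:t-1}=y$ under $P$. By a.e.\ uniqueness of regular conditional laws on standard Borel spaces, $r_2(\cdot\mid y)=dP_t(\cdot\mid y)/dQ_t(\cdot\mid y)$ for $P$-a.e.\ $y$. The case $P\not\ll Q$ is also cleanest via null sets: if each conditional law is absolutely continuous for $P$-a.e.\ past, disintegration shows that $Q(A)=0$ implies $P(A)=0$. For finite-valued coordinates, counting measure dominates everything, and your density argument is already complete.

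Second, the displayed identity is incorrect. The right one is
\[
 f(r_1r_2)=r_1f(r_2)+f(r_1)+r_1\log r_1\,(r_2-1),
\]
as the check $r_1=e$, $r_2=2$ shows: the remainder is $e$, not $(r_1-1)(r_2-1)=e-1$. Its last term has no sign and need not be $Q$-integrable, so Tonelli does not let you split the joint integral. The order of integration is what saves the argument. Apply Tonelli only to $f(r)\ge0$ to write $\KL(P\|Q)=\int\bigl(\int f(r_1r_2)\,dQ_t(\cdot\mid y)\bigr)\,Q_{1:t-1}(dy)$. For fixed $y$, $r_1(y)\log r_1(y)$ is a constant, and $\int|r_2-1|\,dQ_t(\cdot\mid y)\le2$ while $r_2-1$ integrates to zero. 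So the inner integral splits and equals $r_1(y)\KL(P_t(\cdot\mid y)\|Q_t(\cdot\mid y))+f(r_1(y))$. The outer integral of these two nonnegative terms is $\E_P[\KL(P_t(\cdot\mid Y_{1:t-1})\|Q_t(\cdot\mid Y_{1:t-1}))]+\KL(P_{1:t-1}\|Q_{1:t-1})$. With these two repairs your induction goes through.
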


\begin{lemma}[Pinsker's inequality]
\label{lem:aux-pinsker}
For probability measures $P,Q$ on the same measurable space, define $\norm{P-Q}_{\rm TV}=\sup_{\calE}|P(\calE)-Q(\calE)|$, where the supremum is over measurable events. For every measurable function $f$ with $0\le f\le1$,
\begin{equation*}
 |\E_P[f]-\E_Q[f]|
 \le\norm{P-Q}_{\rm TV}
 \le\sqrt{\KL(P\|Q)/2}.
\end{equation*}
\end{lemma}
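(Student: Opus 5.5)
The statement to be proved is the last auxiliary lemma, Pinsker's inequality in the form: for $0\le f\le1$, $|\E_P[f]-\E_Q[f]|\le\norm{P-Q}_{\rm TV}\le\sqrt{\KL(P\|Q)/2}$. I would prove the two inequalities separately.

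For the first inequality, I would use a layer-cake representation. Write $f=\int_0^1\ind{f>s}\,ds$, so by Fubini
\begin{equation*}
 \E_P[f]-\E_Q[f]=\int_0^1\bigl(P(f>s)-Q(f>s)\bigr)\,ds .
\end{equation*}
Each event $\{f>s\}$ is measurable, so each integrand is bounded in absolute value by $\norm{P-Q}_{\rm TV}$. Integrating over $s\in[0,1]$ gives the claim. Alternatively, with densities $p,q$ with respect to $\nu=P+Q$, one can write $\E_P[f]-\E_Q[f]=\int f(p-q)\,d\nu$. This is at most $\int_{\{p>q\}}(p-q)\,d\nu=P(\calE_0)-Q(\calE_0)$ for $\calE_0=\{p>q\}$, and the lower bound follows by symmetry.

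For the second inequality, I would reduce to two-point distributions via the data-processing inequality. If $\KL(P\|Q)=\infty$ there is nothing to prove. Otherwise fix a measurable $\calE$, and set $a=P(\calE)$ and $b=Q(\calE)$. Data processing under the map $\omega\mapsto\ind{\omega\in\calE}$ gives $\KL(\Bern(a)\|\Bern(b))\le\KL(P\|Q)$. This step follows from the chain rule (\Cref{lem:aux-kl-chain}) together with nonnegativity of KL, or from the log-sum inequality applied to the partition $\{\calE,\calE^{\mathrm c}\}$.

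It then suffices to prove the binary inequality $\KL(\Bern(a)\|\Bern(b))\ge2(a-b)^2$. Fix $a$ and set
\begin{equation*}
 h(b)=a\log\frac ab+(1-a)\log\frac{1-a}{1-b}-2(a-b)^2 ,
\end{equation*}
so that $h(a)=0$. Differentiating gives $h'(b)=(b-a)\bigl(1/(b(1-b))-4\bigr)$. Since $b(1-b)\le1/4$, $h'$ has the sign of $b-a$, so $a$ minimizes $h$ and $h\ge0$. The boundary values $b\in\{0,1\}$ are either infinite or trivial. Hence $|P(\calE)-Q(\calE)|\le\sqrt{\KL(P\|Q)/2}$, and taking the supremum over $\calE$ gives the result.

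The only delicate step is justifying data processing rigorously in a general measurable space. I would do this with the log-sum inequality on the densities $p/q$ restricted to $\calE$ and to $\calE^{\mathrm c}$, handling the points where $q=0$ through absolute continuity, which is forced whenever the KL divergence is finite.
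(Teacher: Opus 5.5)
Your proof is correct. The layer-cake identity (or the set $\{p>q\}$) gives the first inequality. Data processing to the pair $\Bern(P(\calE))$, $\Bern(Q(\calE))$, combined with the two-point bound $\KL(\Bern(a)\|\Bern(b))\ge 2(a-b)^2$, gives the second; your derivative computation $h'(b)=(b-a)\bigl(1/(b(1-b))-4\bigr)$ checks out.

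The paper states this lemma without proof as a standard auxiliary fact, so there is no argument of its own to compare against. What you give is the usual textbook proof, and your remarks cover the edge cases: $a\in\{0,1\}$, handled by continuity of $h$ at the endpoint, and $P\not\ll Q$, where the bound holds trivially because the divergence is infinite.
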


\section{Experimental Details}
\label{app:experiments}

\para{Experimental setup.} We use $d=5$, $K=10$, $\lambda=0.6$, $T=2000$, and one parameter $\v\theta^*$ with $\norm{\v\theta^*}_2=10$, shared by all servers and settings. Each job has $K$ independently sampled features, and jobs are independent. The rows of \Cref{fig:experiments} use $q=\Pp(p^*(X)<\lambda)=0,0.15,0.50$, and the columns use $\eps=0.05,0.1,0.15$.

For each server, we sample a service probability from a low interval with probability $q^{1/K}$ and a high interval otherwise. The low intervals are $[0.57,0.59]$ for $q=0.15$ and $[0.54,0.58]$ for $q=0.50$. Each high interval has width $0.02$ and lies above $\lambda$. Within each interval, we sample uniformly from $256$ equally spaced midpoints. Thus $\Pp(p^*(X)<\lambda)=q$. We choose the high-interval center by bisection so that $\lambda\E[1/p^*(X)]=1-\eps$.

For each sampled probability $p$, set
\[
 \v\phi=\frac{\log(p/(1-p))}{10}\v u
       +\sqrt{25-\frac{\log^2(p/(1-p))}{100}}\v v,
 \qquad \v u=\v\theta^*/10,
\]
where $\v v$ is a unit vector orthogonal to $\v u$. Then $\norm{\v\phi}_2=5$ and $\mu(\ip{\v\phi}{\v\theta^*})=p$. We draw $\v v$ uniformly from $64$ fixed directions.

\para{Implementation.} All five contextual algorithms use $\beta=100$ and the constrained logistic MLE in \Cref{eq:estimator} with $S=10$. For widths and sampling, $\m V_t=\m V_0+\sum_{i=1}^t\v\phi_i\v\phi_i^\top$, with $\m V_0=5\m I_5$ for all five contextual algorithms.
\begin{itemize}
\item \textbf{WISE} uses \Cref{alg:wise} with probability bounds $[\mu(-50),\mu(50)]$.
\item \textbf{CQB-Opt} \citep{bae2026queue} maximizes $\mu(\ip{\v\phi(x,k)}{\widehat{\v\theta}_{t-1}}) +\beta\norm{\v\phi(x,k)}_{\m V_{t-1}^{-1}}$ over waiting pairs.
\item \textbf{CQB-$\eps$} \citep{bae2026queue} explores newly arrived jobs using round-robin servers for the first $200$ rounds, then a uniformly random server with probability $T^{-1/2}$. Otherwise, it uses the CQB-Opt rule.
\item \textbf{CQB-$\eta$-2} \citep{bae2026algorithm} explores newly arrived jobs using a uniformly random server for $t\le0.3T$ and with probability $0.3$ for $0.3T<t\le0.7T$. It uses the CQB-Opt rule whenever it does not explore, including throughout $0.7T<t\le T$.
\item \textbf{ACQB} \citep{bae2026learning} draws five parameter samples from $\mathcal N(\widehat{\v\theta}_{t-1},\beta^2\m V_{t-1}^{-1})$, shared across all pairs, and maximizes the sampled service probability. After an arrival, with probability $(t+1)^{-1/2}$, it instead serves the new job with a round-robin server.
\item \textbf{Q-UCB and Q-ThS} \citep{krishnasamy2021learning} serve the earliest arrival. Q-UCB maximizes the empirical success rate plus $\sqrt{\log^2(\max\{t,2\})/(2N_k)}$, where $N_k$ counts attempts on server $k$ and unobserved servers have infinite index. Q-ThS maximizes a draw from each server's Beta posterior with $\operatorname{Beta}(1,1)$ prior. Both choose a uniformly random server with probability $\min\{1,3K\log^2(\max\{t,2\})/t\}$.
\item \textbf{Random} chooses a uniformly random waiting job and server.
\end{itemize}

\para{Results.} We use ten runs with initially empty queues. Within each run, all algorithms and the oracle share arrivals, features, and service uniforms. The parameter and feature distributions are fixed across runs. \Cref{fig:experiments} reports the mean of $Q^\pi(t)-Q^*(t)$ and $\pm2$ sample standard deviations, each smoothed with a $25$-round moving average.

\subsection*{Use of Generative-AI Tools}
The authors developed the workload potential and the analysis based on the elliptical potential count. GPT-5.6 Sol assisted the authors in designing Algorithm~\ref{alg:wise} and establishing the properties required for this analysis. It also assisted with manuscript polishing, reference exploration, and simulation code development. The authors take responsibility for the final arguments and presentation.

\end{document}